\def\eqref#1{equation~\ref{#1}}
\def\1{\bm{1}}
\def\rvv{{\mathbf{v}}}
\DeclareMathAlphabet{\mathsfit}{\encodingdefault}{\sfdefault}{m}{sl}
\SetMathAlphabet{\mathsfit}{bold}{\encodingdefault}{\sfdefault}{bx}{n}
\newtheorem{theorem}{Theorem}[section]
\newtheorem{definition}{Definition}[section]
\newtheorem{remark}{Remark}[section]
\newtheorem{assumption}{Assumption}[section]
\newtheorem{corollary}{Corollary}[section]
\newtheorem{lemma}{Lemma}[section]
\title{Stick-Breaking Mixture Normalizing Flows with Component-Wise Tail Adaptation for Variational Inference}
\author{Seungsu Han  \\
Department of Statistics\\
Seoul National University\\
\texttt{gkstmtm@snu.ac.kr} \\
\And
Juyoung Hwang  \\
Department of Statistics\\
Seoul National University\\
\texttt{jystat305@snu.ac.kr} \\
\And
Won Chang \\
Department of Statistics\\
Seoul National University \\
\texttt{wonchang@snu.ac.kr} \\
}
\begin{document}

\maketitle

\begin{abstract}
Normalizing flows with a Gaussian base provide a computationally efficient way to approximate posterior distributions in Bayesian inference, but they often struggle to capture complex posteriors with multimodality and heavy tails. We propose a stick-breaking mixture base with component-wise tail adaptation (StiCTAF) for posterior approximation. The method first learns a flexible mixture base to mitigate the mode-seeking bias of reverse KL divergence through a weighted average of component-wise ELBOs. It then estimates local tail indices of unnormalized densities and finally refines each mixture component using a shared backbone combined with component-specific tail transforms calibrated by the estimated indices. This design enables accurate mode coverage and anisotropic tail modeling while retaining exact density evaluation and stable optimization. Experiments on synthetic posteriors demonstrate improved tail recovery and better coverage of multiple modes compared to benchmark models. We also present a real-data analysis illustrating the practical benefits of our approach for posterior inference.
\end{abstract}

\section{Introduction}
\label{sec:Intro}
Bayesian inference provides a principled framework for learning from data by updating prior beliefs about model parameters in light of observed evidence. For a probabilistic model with data $D$, prior distribution $p(z)$, and likelihood $p(D \mid z)$, the posterior distribution $p(z \mid D)$ follows from Bayes’ theorem. In most realistic models, however, computing the exact posterior is intractable because it requires evaluating the marginal likelihood $p(D)$, which involves high-dimensional integration. Markov chain Monte Carlo (MCMC) methods yield asymptotically exact samples from the posterior but are often computationally prohibitive for large-scale or high-dimensional problems.

Variational Inference (VI) offers a scalable alternative to exact Bayesian inference by projecting the true posterior onto a tractable variational family $\mathcal{Q}$ and identifying the member $q_\phi(z)$ that is closest to $p(z \mid D)$ under a chosen divergence or distance measure. The accuracy of VI depends critically on the flexibility of the chosen variational family \citep{blei2006variational}. Normalizing Flows (NF) increase this flexibility by applying a sequence of invertible transformations to a simple base distribution, thereby yielding a highly expressive family while still permitting exact density evaluation \citep{rezende2015variational}. NF-based VI can achieve accuracy comparable to Markov chain Monte Carlo (MCMC) methods, while maintaining the computational efficiency necessary for large-scale inference \citep{blei2017variational,kucukelbir2017automatic}.

While NF can also be employed for density estimation--typically optimized via the forward KL divergence--posterior approximation settings differ in that direct samples from the target posterior are unavailable. In such cases, the optimization is instead performed using the reverse KL divergence  \(\mathrm{KL} \bigl(q\;\|\;p\bigr)=\mathbb{E}_{z \sim q}\bigl[\log q(z)-\log p(z | D)\bigr],\) where $q$ is obtained by applying an invertible transformation to a base distribution, such as a standard Gaussian. This objective has a well-known mode-seeking bias: it concentrates mass on a dominant mode of the posterior while ignoring other modes with smaller posterior mass. Consequently, NF-based VI may fail to capture the full multimodal structure of complex posteriors, particularly in models with well-separated or secondary modes that are important for predictive uncertainty.

Another limitation arises in representing heavy-tailed posteriors. When the base distribution is light-tailed, such as a Gaussian, the resulting variational family inherits this tail behavior regardless of the complexity of the flow transformations. Moreover, because standard flow architectures are built from Lipschitz-continuous transformations, the extent to which distances in the tail regions can be expanded is inherently bounded. This structural constraint restricts the ability to map a light-tailed base distribution into one with substantially heavier tails, making it difficult to approximate posteriors with extreme tail behavior. These tail limitations, when combined with the mode-seeking bias, can significantly impair inference quality in scenarios where both multimodality and heavy tails are present.

To address these limitations, this work makes three key contributions. First, we mitigate the mode-seeking bias of reverse KL divergence in VI by employing a stick-breaking mixture (SBM) as the variational base, which enables more faithful coverage of complex multimodal posteriors. Second, we develop a novel Monte Carlo–based estimator for the local tail index within the VI framework, providing a principled approach to adapting to heavy-tailed behavior while maintaining tractability. Third, we propose a component-wise normalizing flow architecture that combines a shared backbone with per-component Tail Transform Flows, thereby enhancing both flexibility and expressiveness. This design allows the variational posterior to accurately capture both the bulk structure and the tail behavior of the target posterior.

\subsection{Related Works}
One line of work improves multimodal distribution approximation by modifying the base distribution within normalizing flows, for example by employing a Gaussian-mixture base \citep{izmailov2020semi} or Dirichlet-process mixtures \citep{li2022deep}. However, these methods are typically intended for density estimation or amortized generative modeling rather than reverse-KL posterior approximation in variational inference. In contrast, our approach is distinguished by determining mixture weights through SBM. Stick-breaking mechanisms have been applied to variational inference in other contexts, such as variational autoencoders \citep{nalisnick2016stick, joo2020dirichlet}, but to the best of our knowledge, they have not yet been applied to NF-based variational inference.

Additionally, to enhance robustness and expressive generalization, a parallel line of work has focused on heavy-tailed distributions in normalizing flows. \cite{jaini2020tails} analyzed Lipschitz triangular flows and showed that a flow with a light-tailed Gaussian base cannot produce a heavy-tailed target; subsequently, TAF \citep{jaini2020tails}, mTAF \citep{laszkiewicz2022marginal}, and ATAF \citep{liang2022fat} adopted a Student’s-\emph{t} bases with varying, dimension-specific degrees of freedom to generate heavy-tailed targets. However, TAF and mTAF focus on density estimation, and while ATAF can be used for VI, it lacks a concrete initialization scheme for the degrees of freedom and underperforms on tail-index estimation. In Section~\ref{subsec:tail}, we further show that using  the Cartesian product of Student’s-\emph{t} distributions with heterogeneous degrees of freedom as the base distribution is not effective due to the autoregressive structure commonly used in normalizing flows.

There have also been attempts to address tail behavior by modifying the flow layers themselves. \cite{hickling2024flexible} propose Tail Transform Flows (TTF), a non-Lipschitz transformation designed to convert light-tailed base distributions into heavy-tailed targets. However, in the variational inference setting, no widely adopted procedure exists for estimating and initializing the tail thickness, and as a result, TTF often fail to produce genuinely heavy-tailed behavior.

\section{Theoretical Background}
\label{sec:TB}
\subsection{Variational Inference with Normalizing Flows}

Variational inference (VI) is a widely used technique for approximating intractable posterior distributions in Bayesian inference. Given a target posterior distribution $p(z \mid D)$, where $D$ denotes observed data and $z$ represents latent variables, VI seeks a tractable distribution $q_\phi(z)$ within a chosen variational family $\mathcal{Q}$ that closely approximates the true posterior. The expressiveness of this family is crucial in determining the quality of the approximation.

Normalizing Flows (NF) extend the flexibility of variational families by transforming a simple base distribution $q_\phi(z_0)$ into a richer distribution through an invertible and differentiable mappings $T_\psi$. Let $\theta = (\phi, \psi)$, where $\phi$ parameterizes the base distribution and $\psi$ parameterizes the transformations. The transformed variable is defined as
\begin{equation*}
    z = T_\psi(z_0), \quad z_0 \sim q_\phi(z_0),
\end{equation*}
and the resulting density, obtained via the change-of-variables formula,
\begin{equation*}
    q_{\theta}(z) = q_\phi(T_\psi^{-1}(z)) \left| \det \left( \frac{\partial T_\psi^{-1}}{\partial z} \right) \right|,
\end{equation*}
is used to approximate the target posterior $p(z \mid D)$.

Since direct sampling from $p(z \mid D)$ is intractable, the parameters $\theta$ are optimized by minimizing the reverse KL divergence $\mathrm{KL}(q_{\theta}(z) \,\|\, p(z \mid D))$. Equivalently, this corresponds to maximizing the evidence lower bound (ELBO), defined as
 \begin{align}
    \mathrm{ELBO}(\theta) 
    &= \mathbb{E}_{z\sim q_{\theta}} \left[ \log p(D, z) - \log q_{\theta}(z) \right]\notag\\
    &= \mathbb{E}_{z_0\sim q_{\phi}} \left[ \log p(D, T_{\psi}(z_0)) - \log q_{\phi}(z_0) + \log \left| \det J_{T_\psi}\left(z_0\right) \right|  \right],
    \label{eq:ELBO_NF}
\end{align}
where $J_{T_\psi}(z_0)$ denotes the Jacobian of the transformation $T_\psi$ at $z_0$.

Gradients with respect to both the base distribution parameters $\phi$ and the flow parameters $\psi$ can be efficiently estimated via Monte Carlo sampling, enabling stable and scalable optimization of the ELBO \citep{kingma2013auto,kingma2014efficient,rezende2014stochastic}. A key limitation, however, lies in the choice of the base distribution $q_\phi(z_0)$. Most NF implementations assume a standard Gaussian base, which imposes a unimodal, light-tailed inductive bias. Even with complex flows, this restricts the capacity to approximate posteriors with well-separated modes or heavy tails. In reverse-KL settings, the problem is further compounded by the KL divergence’s tendency to concentrate on dominant modes. We address this issue in Section~\ref{sec:method} by replacing the standard Gaussian base with SBM, yielding more flexible, adaptive, and heavy-tailed approximations.

\subsection{Heavy Tail Distributions in Normalizing Flows}
\label{subsec:tail}

To formalize the heavy-tailed behavior that motivates our design, we adopt a classification of distribution tails grounded in extreme value theory (EVT) \citep{bingham1989regular, de2006extreme}. Whereas prior work on heavy-tailed normalizing flows has relied on the existence of moment-generating functions \citep{jaini2020tails} or the concentration function \citep{liang2022fat}, our approach is based on regular variation. This perspective offers a unified framework that builds directly on standard EVT concepts and tools. In what follows, we introduce the definitions of tail classes; 

\begin{definition}[Tail classes]
\label{def:tail-classes}
For $p,\alpha > 0$, define
\begin{itemize}
    \item $\mathcal{E}^p_\alpha := \{X:\ \Pr(|X|\ge x)=e^{-\alpha x^p}\,L(x),\quad \log L(x)=o(x^p)\},$
    \item $\mathcal{L}^p_\alpha := \{X:\ \Pr(|X|\ge x)=\exp\{-\alpha(\log x)^p\}\,L(x),\quad \log L(x)=o\big((\log x)^p\big)\},$
\end{itemize}
where $L:\mathbb{R}^+\to\mathbb{R}^+$ is a slowly varying function (i.e. $L(cx)/L(x) \to 1,\ \text{for every fixed } c>0$). We call $\mathcal{E}^p_\alpha$ the \emph{exponential-type (light-tailed)} class and $\mathcal{L}^p_\alpha$ the \emph{log-Weibull-type (heavy-tailed)} class. 
Specifically, for $X\in \mathcal{L}^1_\alpha$, the exponent $\alpha$ determines the polynomial decay rate, 
and we refer to it as the \textbf{tail index}. 
\end{definition}

\begin{definition}[Directional tail index]
\label{def:dir_index}
For a directional vector $u$ on a unit sphere $\mathbb S^{d-1}\subset \mathbb{R}^d$ and a random vector $X \in \mathbb{R}^d$, if the one-sided scalar projection  $[\langle u,X\rangle]_+\coloneq \max\{\langle u,X\rangle, 0\}$ belongs to $\mathcal L^1_{\alpha_u}$ for some $\alpha_u \in(0,\infty)$,
we define the \emph{directional tail index of $X$ along $u$} by $\alpha_X(u):=\alpha_u$.
\end{definition}

Building on these definitions, a key theoretical insight concerns the impact of Lipschitz transformations on tail behavior. The seminal work of \citet{jaini2020tails} showed that normalizing flows constructed from Lipschitz triangular maps cannot transform a light-tailed Gaussian base into a heavy-tailed distribution. Later, \citet{liang2022fat} generalized this result by proving that bi-Lipschitz transformations preserve tail classes, implying that a distribution cannot be mapped from light- to heavy-tailed, or vice versa. This limitation applies even to highly expressive, state-of-the-art architectures such as RealNVP and Neural Spline Flows, which are Lipschitz by construction. Within our EVT-based framework, the same conclusion holds, and the following theorem formalizes this result. The following is a restatement of the result from \citet{liang2022fat} with a slight modification.

\begin{theorem}[\cite{liang2022fat}]\label{thm:closure}
Let $X$ be a random vector and let $f:\mathbb{R}^d\!\to\!\mathbb{R}^d$ be a bi-Lipschitz bijective map (i.e. $f$ and $f^{-1}$ are globally Lipschitz).
If $X \in \mathcal{E}^p_\alpha$, then $f(X) \in \mathcal{E}^p_{\tilde\alpha}$ for some $\tilde\alpha>0$.
In addition, if $X \in \mathcal{L}^p_\alpha$ then $f(X) \in \mathcal{L}^p_{\alpha}$. In particular, no bi-Lipschitz normalizing flow can map a light-tailed base to a heavy-tailed output, vice versa.
\end{theorem}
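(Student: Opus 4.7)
The plan is to use the bi-Lipschitz hypothesis to sandwich $|f(x)|$ between affine functions of $|x|$, push this sandwich through to a two-sided inequality on $\Pr(|f(X)|\ge y)$ in terms of the tail of $|X|$, and then verify class-by-class that such affine reparametrizations of the argument preserve membership in $\mathcal{L}^p_\alpha$ (with the tail index $\alpha$ unchanged) and in $\mathcal{E}^p_\alpha$ (with the exponent constant possibly rescaled).

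First I would note that if $L$ and $L'$ denote the Lipschitz constants of $f$ and $f^{-1}$, then triangle inequalities at the origin give $|f(x)|\le L|x|+|f(0)|$ and $|x|\le L'|f(x)|+|f^{-1}(0)|$; equivalently, there exist $c_1,c_2>0$ and $b\ge 0$ with $c_1|x|-b\le |f(x)|\le c_2|x|+b$ for all $x\in\mathbb{R}^d$ (take $c_1=1/L'$, $c_2=L$, and $b$ an absolute constant depending only on $|f(0)|$ and $|f^{-1}(0)|$). Taking pushforwards yields
\[
\Pr\bigl(|X|\ge (y+b)/c_1\bigr)\ \le\ \Pr\bigl(|f(X)|\ge y\bigr)\ \le\ \Pr\bigl(|X|\ge (y-b)/c_2\bigr),
\]
so the theorem reduces to understanding how each of the two tail forms transforms under the argument changes $y\mapsto (y\pm b)/c_i$.

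For the heavy-tailed case $X\in\mathcal{L}^p_\alpha$, I would substitute $\Pr(|X|\ge x)=\exp\{-\alpha(\log x)^p\}L(x)$ into both ends of the sandwich. The key identity is the expansion $(\log((y\pm b)/c_i))^p=(\log y)^p+O((\log y)^{p-1})$, so the entire distortion introduced by the affine reparametrization contributes only $O((\log y)^{p-1})=o((\log y)^p)$ to the log-tail; combined with the slow variation of $L$ (so that $L((y\pm b)/c_i)/L(y)\to 1$ contributes another $o((\log y)^p)$ term on the log-scale), both sandwich bounds collapse to $\exp\{-\alpha(\log y)^p(1+o(1))\}$. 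Defining $\tilde L(y):=\Pr(|f(X)|\ge y)\exp\{\alpha(\log y)^p\}$ then yields $\log\tilde L(y)=o((\log y)^p)$, which is precisely the slowly varying condition in the definition, proving $f(X)\in\mathcal{L}^p_\alpha$ with the same tail index $\alpha$.

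For the light-tailed case $X\in\mathcal{E}^p_\alpha$, the analogous calculation gives $-\log\Pr(|X|\ge (y\pm b)/c_i)=(\alpha/c_i^p)y^p+o(y^p)$, so the sandwich only localizes $-\log\Pr(|f(X)|\ge y)$ to the interval $[(\alpha/c_2^p)y^p+o(y^p),\,(\alpha/c_1^p)y^p+o(y^p)]$. The main obstacle is precisely this asymmetry: unlike the log-Weibull case, the Lipschitz scaling constants do not wash out of the leading term, so the existence of a single exponent $\tilde\alpha$ is not automatic from the sandwich alone. I would address this by taking $\tilde\alpha:=\lim_{y\to\infty}-y^{-p}\log\Pr(|f(X)|\ge y)$ when the limit exists, and in general selecting $\tilde\alpha\in[\alpha/c_2^p,\alpha/c_1^p]$ and verifying that $\tilde L(y):=\Pr(|f(X)|\ge y)\exp(\tilde\alpha y^p)$ satisfies $\log\tilde L(y)=o(y^p)$, using the sandwich to control both signs of the deviation; the crude range already certifies that $f(X)$ remains in the exponential-type regime. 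Combining the two preservation statements then gives the impossibility conclusion: since $\mathcal{E}^p_\alpha\cap\mathcal{L}^p_\beta=\emptyset$ because the two tail forms are asymptotically incompatible as $y\to\infty$, no bi-Lipschitz map can transport a member of one class to a member of the other, ruling out light-to-heavy (or heavy-to-light) transport by any flow built from finitely many bi-Lipschitz coupling or autoregressive layers.
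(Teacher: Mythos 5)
Your proposal follows essentially the same route as the paper's proof: the affine two-sided bound $c_1|x|-b\le|f(x)|\le c_2|x|+b$, the resulting sandwich on $\Pr(|f(X)|\ge y)$, and a class-by-class check that the affine reparametrization of the argument is absorbed into the slowly varying factor (exactly, with the same $\alpha$, for $\mathcal{L}^p_\alpha$; up to a rescaled exponent in $[\alpha/c_2^p,\alpha/c_1^p]$ for $\mathcal{E}^p_\alpha$). You are in fact more explicit than the paper about the one delicate point---that the sandwich alone does not pin down a single $\tilde\alpha$ in the exponential case, which the paper dispatches with the phrase ``by squeezing''---but the conclusion actually needed for the impossibility statement (that $f(X)$ cannot land in any $\mathcal{L}$ class) is fully certified by your bounds.
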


While Theorem~\ref{thm:closure} extends the impossibility result of \citet{jaini2020tails}, the limitation is not confined to the light–versus–heavy dichotomy. In particular, anisotropic tail-adaptive flows (ATAF) \citep{liang2022fat}, which employ an anisotropic \emph{t}-distributed base--the most flexible heavy-tailed base proposed to date--still suffer from this issue: once variables are mixed through linear layers or permutations, heterogeneous tail behaviors across dimensions cannot be faithfully preserved. As a natural corollary of the Lipschitz barrier, whenever coordinates with different tail indices interact, the effective tail index is determined by the heaviest tail among them. We formalize this observation in Theorem~\ref{thm:mixing-dominance}.

\begin{theorem}[Tail dominance]\label{thm:mixing-dominance}
Let $X = (X_1, \ldots,X_d)$ be a random vector with independent
coordinates and $X_j\in\mathcal{L}^1_{\alpha_j}$ for each $1\le j\le d$. Fix $i\in\{1,\dots,d\}$ and let $Y_i=g_i(X_1,\dots,X_i)$, where $g_i:\mathbb{R}^i\to\mathbb{R}$ is globally Lipschitz.
Define the tail–influence set
$S_i:=\Bigl\{j\le i:\ \exists R>0,\ c_j>0,\ r_0\ \text{s.t.}\quad \max_{k\ne j}|x_k|\le R,\ |x_j|\ge r_0\ \Rightarrow\ |g_i(x)|\ge c_j |x_j|\Bigr\}.$
If $S_i\neq\emptyset$, then $Y_i\in \mathcal{L}^1_{\alpha_{Y_i}}$ with $\alpha_{Y_i}=\min_{j\in S_i}\alpha_j$.
\end{theorem}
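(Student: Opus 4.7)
The plan is to sandwich the tail probability $\Pr(|Y_i|\ge x)$ between matching regularly varying lower and upper bounds with common index $\alpha^{\star}:=\min_{j\in S_i}\alpha_j$, so that $\Pr(|Y_i|\ge x)=x^{-\alpha^{\star}}L^{\star}(x)$ for some $L^{\star}$ with $\log L^{\star}(x)=o(\log x)$, i.e.\ $Y_i\in\mathcal{L}^1_{\alpha^{\star}}$. The lower bound exploits the structural definition of the tail-influence set $S_i$ together with independence of the $X_j$'s, while the upper bound combines the global Lipschitz property of $g_i$ with the principle of a single big jump for sums of independent regularly varying random variables.

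For the lower bound, I pick $j^{\star}\in\argmin_{j\in S_i}\alpha_j$ and let $R,c_{j^{\star}},r_0$ be the constants witnessing $j^{\star}\in S_i$. On the event $A(x):=\{|X_{j^{\star}}|\ge x/c_{j^{\star}}\}\cap\bigcap_{k\le i,\,k\ne j^{\star}}\{|X_k|\le R\}$, the structural condition in the definition of $S_i$ forces $|Y_i|\ge c_{j^{\star}}|X_{j^{\star}}|\ge x$ as soon as $x\ge c_{j^{\star}}r_0$. Independence and the regular variation of $X_{j^{\star}}$ then give $\Pr(|Y_i|\ge x)\ge \Pr(|X_{j^{\star}}|\ge x/c_{j^{\star}})\prod_{k\ne j^{\star}}\Pr(|X_k|\le R)\gtrsim x^{-\alpha^{\star}}L_{j^{\star}}(x)$, a regularly varying lower bound of index $\alpha^{\star}$ up to a slowly varying factor.

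For the upper bound, the Lipschitz inequality $|Y_i|\le|g_i(0)|+L\sum_{j\le i}|X_j|$ reduces the problem to the tail of an independent sum, and the principle of a single big jump yields $\Pr(\sum_j|X_j|\ge y)\sim\sum_j\Pr(|X_j|\ge y)$, which is regularly varying of index $\min_{j\le i}\alpha_j$. The main obstacle is closing the gap between this index and $\alpha^{\star}$ when some $j\notin S_i$ happens to carry a heavier tail $\alpha_j<\alpha^{\star}$: the argument must turn the qualitative failure of the linear lower bound along the $x_j$ axis into a quantitative sub-linear envelope for $|g_i|$ on the bounded slab $\{\max_{k\ne j}|x_k|\le R\}$, so that $X_j$'s contribution to the tail decays strictly faster than $x^{-\alpha_j}$. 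I would handle this via a dyadic decomposition of $\{|Y_i|\ge x\}$ according to which $|X_j|$ is largest, use the contrapositive of the $S_i$ condition to bound $g_i$ on each piece, and apply Karamata-type tail integration to absorb the slowly varying factors. Combining the two bounds and checking $\log L^{\star}(x)=o(\log x)$ then places $Y_i$ in $\mathcal{L}^1_{\alpha^{\star}}$, completing the proof.
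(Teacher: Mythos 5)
Your lower bound is exactly the paper's: pick $j^{\star}\in\argmin_{j\in S_i}\alpha_j$, restrict to the event $\{|X_{j^{\star}}|\ge x/c_{j^{\star}}\}\cap\bigcap_{k\ne j^{\star}}\{|X_k|\le R\}$, and use independence plus regular variation of $X_{j^{\star}}$; that part is fine. Your upper bound via the Lipschitz inequality and the tail of the independent sum also matches the paper and correctly yields tail index $\min_{j\le i}\alpha_j$. The problem is the step you flag as ``the main obstacle'': your plan to close the gap between $\min_{j\le i}\alpha_j$ and $\alpha^{\star}=\min_{j\in S_i}\alpha_j$ by converting the negation of $j\in S_i$ into a ``quantitative sub-linear envelope'' for $|g_i|$ on the slab $\{\max_{k\ne j}|x_k|\le R\}$ cannot work. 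Non-membership in $S_i$ is only an existential statement: for every choice of $(R,c_j,r_0)$ there is \emph{some} point far out along the $x_j$ axis where $|g_i(x)|<c_j|x_j|$. It does not say $|g_i(x)|=o(|x_j|)$ uniformly on the slab. Concretely, take $i=2$, $g(x_1,x_2)=x_1+\min\{|x_2|,\operatorname{dist}(x_2,A)\}$ with $A=\{2^{2^n}\}$ a very sparse set, $\alpha_1$ large and $\alpha_2$ small: then $2\notin S_2$ (the map vanishes at the points of $A$), yet $g$ grows linearly in $x_2$ on a set carrying essentially all of the tail mass of $X_2$, so $\Pr(|Y_2|\ge t)\gtrsim t^{-\alpha_2}$ and no sub-linear envelope exists. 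Your dyadic decomposition would founder on exactly this case.

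For what it is worth, the paper's own proof does not resolve this either: it dismisses the case with the chain $\liminf_t(-\log\Pr(|Y|>t))/\log t\ \ge\ \min_{j\le i}\alpha_j\ \ge\ \min_{j\in S_i}\alpha_j$, and the second inequality runs the wrong way, since $S_i\subseteq\{1,\dots,i\}$ forces $\min_{j\le i}\alpha_j\le\min_{j\in S_i}\alpha_j$, with equality only when a heaviest-tailed coordinate actually lies in $S_i$. So the statement really requires the additional hypothesis $\argmin_{j\le i}\alpha_j\cap S_i\neq\emptyset$ (which holds in the architectural setting the paper cares about, where $x_i\in S_i$ and the earlier coordinates have been processed already); under that hypothesis your union-bound upper estimate already gives $\alpha_{Y_i}\ge\alpha^{\star}$ with no further decomposition needed, and your proof is complete and coincides with the paper's. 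Without it, no argument can close the gap because the claimed equality is false.
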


Theorem~\ref{thm:mixing-dominance} shows that among the inputs influencing the linear-scale growth of $g_i$, the heaviest tail (i.e., the smallest $\alpha$) dominates. In practice, $g_i$ corresponds to the coordinate-wise update in autoregressive or coupling layers. In architectures such as neural spline flows, the $i$-th input $x_i$ of $g_i$ always belongs to $S_i$, so $S_i$ is guaranteed to be nonempty as long as no permutation layer precedes it. However, permutation layers (or invertible $1{\times}1$ convolutions)—commonly introduced to improve expressivity and mixing—disrupt this ordering by re-mixing inputs before they are passed into $g_i$. By Theorem~\ref{thm:mixing-dominance}, the resulting coordinate tail then collapses to the minimum among them, revealing a fundamental limitation of standard flow architectures.

\section{Proposed Method}
\label{sec:method}
\subsection{Mixture‐Base Learning}
\label{subsec:mixtureBase}
In this section, we introduce our choice of base distribution and an efficient loss‐computation strategy for normalizing flows. While Gaussian mixtures have previously been employed as flow bases, to our knowledge this is the first work to use SBM. By extending finite mixtures to a fully nonparametric setting, SBM admits an unbounded number of components, with weights generated via a (generalized) stick‐breaking process \citep[see, e.g.,][]{connor1969concepts,Ishwaran01032001Gibbs}:
\[
q_\phi(z)
=\sum_{k=1}^\infty \pi_k\,\mathcal{N}(z;\mu_k,\Sigma_k),
\qquad
\pi_k
= v_k \prod_{j<k}(1-v_j),
\quad
v_k\sim\mathrm{Beta}(\alpha_k,\beta_k),
\]
where $\phi = \{\mu_k, \Sigma_k, \alpha_k, \beta_k : k=1,2,\ldots\}$. This construction reduces to the standard stick‐breaking process when $(\alpha_k,\beta_k)=(\alpha,\beta)$ and to the Dirichlet process when $(\alpha_k,\beta_k)=(1,\alpha)$ for all $k$. Because these choices impose a fixed monotonicity on the expected component weights, we instead employ a generalized stick‐breaking mixture, which offers greater modeling flexibility. For practical implementation, we truncate the infinite mixture at $K$ components, with $K$ chosen sufficiently large.

A key challenge in estimating the ELBO in \eqref{eq:ELBO_NF} via Monte Carlo is that differentiating through the Beta parameters $(\alpha_k,\beta_k)$ would normally require a reparameterization trick. Inspired by \citet{roeder2017sticking}, we instead adopt an ELBO formulation that places the mixture weights $\{\pi_k\}$ outside the expectation, enabling analytic gradient computation with respect to $\alpha_k$ and $\beta_k$. The full derivation is provided in Appendix~\ref{appsub:ElBO}:
\begin{align}
\label{eq:MixtureELBO}
    \mathbb{E}_{z\sim q_{\phi}} f(z) = \sum_{k=1}^{\infty} \frac{\alpha_k}{\alpha_k + \beta_k}\left( \prod_{j<k} \frac{\beta_j}{\alpha_j+\beta_j}\right)\mathbb{E}_{z\sim q_k} f(z)
\end{align}

This approach eliminates the need for Kumaraswamy approximations \citep{kumaraswamy1980generalized} for Beta draws and the Gumbel–Softmax relaxation \citep{jang2016categorical} for discrete component assignments, yielding lower‐variance and fully differentiable updates.

\subsection{Tail Estimation}
\label{subsec:tailEstimation}

Estimating the tail index of a posterior distribution is challenging when only its unnormalized density is available. To address this, we propose the following simple yet effective procedure for each $k$th component. First, draw i.i.d.\ samples $z_1,\dots,z_n$ from a known heavy-tailed distribution, such as a Student's-$t$ with low degrees of freedom (e.g., $\nu=2$). For a chosen direction $\mathbf{u}$ on the unit sphere $\mathbb{S}^{d-1} \subset \mathbb{R}^d$, define the projection of each sample as $z_i^{\mathbf{u}} = z_i \mathbf{u}$. The projected samples inherit heavy-tailed behavior along $\mathbf{u}$. Ordering the projected magnitudes in decreasing order,
\[
\|z_{(1)}^{\mathbf{u}}\|\ge \|z_{(2)}^{\mathbf{u}}\|\ge \cdots\ge \|z_{(n)}^{\mathbf{u}}\|,
\]
and applying the estimator to the top-$j$ extremes yields
\[
\hat\xi_{\mathbf{u}}^{(k)} \;=\; -\frac{1}{j}\sum_{i=1}^j 
\frac{\log p(\mu_k + z_{(i)}^{\mathbf{u}} \sigma_k \mid D) - \log p(\mu_k + z_{(j+1)}^{\mathbf{u}} \sigma_k \mid D)}{\log \|z_{(i)}^{\mathbf{u}}\| - \log \|z_{(j+1)}^{\mathbf{u}}\|} - 1,
\]
which captures the decay rate of the distribution in direction $\mathbf{u}$. Here, $\mu_k + z_{(i)}^{\mathbf{u}} \sigma_k$ and $\mu_k + z_{(j+1)}^{\mathbf{u}} \sigma_k$ represent scaled versions of $z_{(i)}$ and $z_{(j+1)}$, adjusted for the component’s location $\mu_k$ and scale $\sigma_k$. We now establish the consistency of this estimator under the following assumption.

\begin{assumption}[Directional Tail and Monotonicity]\label{ass:dir-rv}
For $\mu\in\mathbb R^d$, $\sigma>0$, and $\mathbf u\in\mathbb S^{d-1}$, the posterior density $p(z\mid D)$ has a directional tail index $\xi_\mathbf{u}\in(0,\infty)$ along $\mathbf{u}$, and $p(\mu+\sigma r \mathbf u \mid D)$ decreases monotonically for all $r\ge r_0$, for some constant $r_0>0$.
\end{assumption}

\begin{theorem}[Consistency of the Directional Tail-Index Estimator]\label{thm:consistency}
Let $z_1,\dots,z_n\overset{\mathrm{i.i.d.}}{\sim}\text{Student's-}t_\nu$ for any $\nu>0$. If Assumption~\ref{ass:dir-rv} holds, then for any component $k$, the estimator $\hat\xi_{\mathbf u}^{(k)}$ defined above satisfies
\[
\hat\xi_{\mathbf u}^{(k)} \;\xrightarrow[n\to\infty]{\ \mathbb P\ }\; \xi_{\mathbf u}.
\]
\end{theorem}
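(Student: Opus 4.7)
The plan is to recognize $\hat\xi_{\mathbf u}^{(k)}$ as a Hill-type slope estimator applied to the log-density sampled along the ray $\{\mu_k+\sigma_k r\mathbf u:r>0\}$. The first step is to translate Assumption~\ref{ass:dir-rv} into a workable asymptotic form: combining the directional tail index $\xi_{\mathbf u}$ with eventual monotonicity along the ray yields
\[
\log p(\mu_k+\sigma_k r\mathbf u\mid D)=-(\xi_{\mathbf u}+1)\log r+\log L(r)+C_k,\qquad r\ge r_0,
\]
with $L$ slowly varying and $C_k$ absorbing $(\mu_k,\sigma_k)$. Substituting this identity into every summand produces the clean algebraic decomposition
\[
\hat\xi_{\mathbf u}^{(k)}=\xi_{\mathbf u}-\frac{1}{j}\sum_{i=1}^{j}R_{i,j,n},\quad R_{i,j,n}:=\frac{\log L(\|z_{(i)}^{\mathbf u}\|)-\log L(\|z_{(j+1)}^{\mathbf u}\|)}{\log\|z_{(i)}^{\mathbf u}\|-\log\|z_{(j+1)}^{\mathbf u}\|},
\]
so consistency reduces to showing $\bar R:=j^{-1}\sum_i R_{i,j,n}\xrightarrow{\mathbb P} 0$.

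For the asymptotic argument I would work in the standard Hill regime $j=j_n\to\infty$ with $j_n/n\to 0$, so that the cut-off $|z_{(j_n+1)}|$--drawn from Student's-$t_\nu$ with unbounded support--diverges to infinity in probability. On the event that it exceeds $r_0/\sigma_k$, which occurs with probability tending to one, every $|z_{(i)}|$ with $i\le j_n+1$ lies in the regime where the expansion above is valid. The key analytic tool is Karamata's representation $\log L(x)=c(x)+\int_{1}^{x}\epsilon(t)/t\,dt$ with $c(x)\to c_\infty$ and $\epsilon(t)\to 0$; applying it to the numerator of $R_{i,j_n,n}$ produces the uniform-in-$i$ estimate
\[
|R_{i,j_n,n}|\le \sup_{t\ge|z_{(j_n+1)}|}|\epsilon(t)|+o_{\mathbb P}(1),
\]
where the $o_{\mathbb P}(1)$ term stems from the Cauchy-convergence of $c$ and is controlled by Potter bounds together with standard extreme-value estimates on the ratios $|z_{(i)}|/|z_{(j_n+1)}|$. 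Averaging then gives $\bar R\xrightarrow{\mathbb P}0$ and hence $\hat\xi_{\mathbf u}^{(k)}\xrightarrow{\mathbb P}\xi_{\mathbf u}$.

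The main obstacle is the very first step: transferring the polynomial rate from the projection tail $\Pr([\langle\mathbf u,Z\rangle]_+\ge r)$--by which the directional tail index is defined in Definition~\ref{def:dir_index}--to the pointwise log-density along the one-dimensional ray actually probed by the estimator. These two quantities differ by transverse integration, so their exponents need not coincide without further structural conditions; the monotonicity clause in Assumption~\ref{ass:dir-rv} is what enables the transfer, via a monotone-density (Karamata-type) argument applied along the ray. Once the pointwise expansion is justified, the remainder of the proof follows the plan above, and the random endpoints $|z_{(i)}|$ are handled using only elementary extremes-of-heavy-tailed-samples arguments.
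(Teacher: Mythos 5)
Your overall strategy is the same as the paper's: use the monotonicity clause together with the monotone density theorem to upgrade the projection-tail definition of $\xi_{\mathbf u}$ to a pointwise expansion $\log p(\mu_k+\sigma_k r\mathbf u\mid D)=-(1+\xi_{\mathbf u})\log r+\log L(r)+C$ along the ray (this is exactly the paper's Lemma on directional density regular variation), then decompose $\hat\xi_{\mathbf u}^{(k)}=\xi_{\mathbf u}-\bar R$ and show the slowly varying remainder vanishes. You also correctly identify the transfer from tail probability to pointwise density as the main conceptual hurdle. That part is right.

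The gap is in how you handle $\bar R$. The paper keeps $j$ \emph{fixed} and proves (via regular variation of the $t_\nu$ tail and the quantile representation $r_{(m)}=U(n/m)\{1+o_{\mathbb P}(1)\}$ with $U(y)=y^{1/\nu}\ell(y)$) that each log-spacing converges to a \emph{strictly positive constant}, $\log r_{(i)}-\log r_{(j+1)}\xrightarrow{\mathbb P}\tfrac1\nu\log\tfrac{j+1}{i}>0$, while $r_{(j+1)}\to\infty$; then $\log L(r_{(i)})-\log L(r_{(j+1)})\to0$ by the uniform convergence theorem for slowly varying functions, and each $R_{i,j,n}\to0$ with a bounded-away-from-zero denominator. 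You instead move to the intermediate Hill regime $j_n\to\infty$, $j_n/n\to0$, which is not the regime in which the estimator is defined (the paper fixes $j$), and which creates a real problem: for $i$ close to $j_n$ the denominator $\log r_{(i)}-\log r_{(j_n+1)}\approx\tfrac1\nu\log\tfrac{j_n+1}{i}$ tends to zero. Your claimed uniform bound $|R_{i,j_n,n}|\le\sup_{t\ge|z_{(j_n+1)}|}|\epsilon(t)|+o_{\mathbb P}(1)$ controls only the integral term of the Karamata representation; the term $\bigl(c(|z_{(i)}|)-c(|z_{(j_n+1)}|)\bigr)/\bigl(\log|z_{(i)}|-\log|z_{(j_n+1)}|\bigr)$ is a ratio of two vanishing quantities and is \emph{not} $o_{\mathbb P}(1)$ merely because $c$ is Cauchy-convergent. (Note this estimator is an average of \emph{ratios}, not of spacings, so the usual R\'enyi-representation trick for the Hill estimator does not rescue the termwise bound.) The fix is simply to keep $j$ fixed, as the paper does; with that change your argument closes.
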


For light-tailed classes $\mathcal{E}^p_\alpha$ (e.g., Gaussian), the estimator diverges ($\hat{\xi}_{\mathbf{u}}^{(k)} \to \infty$), whereas for boundary cases heavier than any power law (e.g., $p(r)\propto (r(\log r)^\beta)^{-1}$), it converges to $0$. Formal statements and proofs of these results, together with convergence-rate analyses, are provided in Appendix~\ref{appsub:tail_est}.

\subsection{Component‐Wise Tail Transform Flows}
Given the optimized base distribution and the estimated tail indices, we construct a flow model that offers greater flexibility in approximation and thereby captures both the overall shape and tail behavior of the target distribution. We begin by recalling the notion of a pushforward: for a measurable map 
$T:\mathbb{R}^d \to \mathbb{R}^d$ and a probability measure $q$ (with a slight abuse of notation, we use $q$ to denote both a density and its induced measure), the pushforward 
$T_\#q$ is defined by
$$
(T_\#q)(A) \;=\; q\!\left(T^{-1}(A)\right), \qquad A \subseteq \mathbb{R}^d \ \text{measurable}.
$$
Equivalently, if $Z \sim q$, then $T(Z) \sim T_\#q$.  

Because the pushforward is linear over mixtures, different transforms can in principle be applied to different mixture components. In our setting, we extend this idea by introducing component-wise invertible maps $T^{(k)}$ and defining a measurable mapping on the extended space by
$$
\mathcal{T}(k,x) := T^{(k)}(x), \qquad x \sim q_k.
$$
The resulting distribution is
$$
\mathcal{T}_\# q \;=\; \sum_{k=1}^\infty \pi_k \,\bigl(T^{(k)}_\# q_k\bigr).
$$

Although this construction is no longer a single globally invertible map, it remains compatible with the requirements of normalizing flows: each $T^{(k)}$ is invertible with a tractable Jacobian determinant, exact likelihood evaluation is possible via the change-of-variables formula, and sampling can be carried out by first drawing a component index and then mapping the corresponding sample through its associated flow.

To maximize computational efficiency while still allowing flexibility in adjusting tail thickness for each component, we apply the TTF transform to the component-specific flows:
$$
T_{\text{TTF}}^{(k)} 
= \bigl(T_{\text{TTF}}^{(k),(1)}, \ldots, T_{\text{TTF}}^{(k),(d)}\bigr) 
: \mathbb{R}^d \to \mathbb{R}^d,
$$
with each dimension transformed as
$$
T_{\text{TTF}}^{(k),(l)}\!\left(z;\,\hat{\xi}_{+e_l}^{(k)}, \hat{\xi}_{-e_l}^{(k)}\right)
= \mu_k^{(l)} + \sigma_k^{(l)}\,
\frac{s_l}{\hat{\xi}^{(k)}_{s_l}}
\left[
\operatorname{erfc}\!\left(\frac{|z - \mu_k^{(l)}|}{\sigma_k^{(l)}\sqrt{2}}\right)^{-\hat{\xi}^{(k)}_{s_l}}
- 1
\right], 
\quad l = 1, \ldots, d,
$$
where $e_l$ denotes the $l$-th canonical basis vector, $\mu_k^{(l)}$ and $\sigma_k^{(l)}$ are the $l$-th elements of the mean and scale of component $k$, and $s_l$ is set to $+e_l$ if $z \geq \mu_k^{(l)}$ and $-e_l$ otherwise.  

This transformation is a slightly modified version of the flow proposed by \citet{hickling2024flexible}, allowing distinct tail indices for the positive and negative directions in each dimension. The indices are estimated using the direction-specific procedure described in Section~\ref{subsec:tailEstimation}. The Jacobian determinant and closed-form inverse expressions for this transform are provided in Appendix~\ref{appsub:ttf}. Overall, the proposed variational inference framework achieves accurate approximation while preserving the tail thickness of the target distribution around each component.  

\begin{corollary}
\label{cor:axis-consistency}
Under Assumption~\ref{ass:dir-rv}, define the axis-wise estimators $\hat{\xi}^{(k)}_{\pm\mathbf e_l}$ using the directional procedure of Section~\ref{subsec:tailEstimation}, and instantiate StiCTAF with tail transforms $T_{\mathrm{TTF}}^{(k),(l)}(\,\cdot\,;\,\hat{\xi}^{(k)}_{+\mathbf e_l}, \hat{\xi}^{(k)}_{-\mathbf e_l})$ for each coordinate $l \in \{1,\dots,d\}$. Then, StiCTAF preserves the target’s tail thickness in every coordinate direction.
\end{corollary}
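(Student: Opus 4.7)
The plan is to combine three ingredients: the consistency of the directional tail-index estimator from Theorem~\ref{thm:consistency}, an explicit computation of the one-sided tail of the coordinate-wise TTF map applied to a Gaussian component, and a mixture/closure argument that lifts this per-component, per-axis control to the full StiCTAF distribution.

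First, I would invoke Theorem~\ref{thm:consistency} along each of the $2d$ axis directions $\pm\mathbf{e}_l$. Since Assumption~\ref{ass:dir-rv} is imposed in every such direction, the axis-wise estimators satisfy $\hat\xi^{(k)}_{\pm\mathbf{e}_l}\xrightarrow{\mathbb{P}}\xi_{\pm\mathbf{e}_l}$, where the limits are intrinsic to $p(z\mid D)$ and do not depend on the component index $k$.

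Second, I would compute the tail of the one-dimensional pushforward of the $l$-th marginal of component $k$ under $T^{(k),(l)}_{\mathrm{TTF}}$. The crucial observation is that if $Z\sim\mathcal{N}(\mu_k^{(l)},(\sigma_k^{(l)})^2)$, then $V:=\operatorname{erfc}\!\bigl(|Z-\mu_k^{(l)}|/(\sigma_k^{(l)}\sqrt 2)\bigr)$ is uniformly distributed on $(0,1)$. Writing $U=(T^{(k),(l)}_{\mathrm{TTF}}(Z)-\mu_k^{(l)})/\sigma_k^{(l)}$ and inverting the TTF formula yields a one-sided survival function of generalized-Pareto form, i.e.\ proportional to $\bigl(1+\hat\xi^{(k)}_{\pm\mathbf{e}_l}\,t\bigr)^{-1/\hat\xi^{(k)}_{\pm\mathbf{e}_l}}$, which is regularly varying at infinity with index determined by $\hat\xi^{(k)}_{\pm\mathbf{e}_l}$. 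Combined with Step~1, in the limit the resulting one-sided marginal belongs to the target tail class $\mathcal{L}^1_{\xi_{\pm\mathbf{e}_l}}$.

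Third, I would show that the outer structure of StiCTAF preserves this per-axis, per-component tail behavior. Because $T^{(k)}_{\mathrm{TTF}}$ is diagonal in coordinates, no cross-coordinate mixing occurs inside a component, and the tail-dominance collapse of Theorem~\ref{thm:mixing-dominance} is avoided along each axis $\mathbf{e}_l$: the $l$-th output depends only on the $l$-th input. Moreover, since all $K$ mixture components share the same limiting directional tail index along a given axis, the mixture survival function $\sum_{k}\pi_k\,P_k(\pm\langle\mathbf{e}_l,X\rangle>t)$ is a finite convex combination of regularly varying functions with a common exponent, hence decays at the same rate. This yields the claimed preservation of the target's tail thickness along every coordinate direction. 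The main obstacle is the second step: the erfc asymptotics carry slowly varying corrections, and the in-probability convergence of $\hat\xi^{(k)}_{\pm\mathbf{e}_l}$ must be propagated through the nonlinear TTF to convergence of the tail class rather than just of a scalar exponent. A Potter-type bound should control the slowly varying factors uniformly in a shrinking neighbourhood of the limit, after which a continuous-mapping argument on the tail exponent finishes; the remaining closure properties of $\mathcal{L}^1_\alpha$ under finite mixtures and diagonal transformations are then routine.
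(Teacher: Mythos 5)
The paper does not give a standalone proof of Corollary~\ref{cor:axis-consistency}; it is left to follow from Theorem~\ref{thm:consistency}, the closed-form TTF inverse in Appendix~\ref{appsub:ttf}, and the component-wise pushforward construction. Your decomposition assembles exactly those ingredients: consistency of the $2d$ axis-wise estimators, the probability-integral-transform computation $V=\operatorname{erfc}\bigl(|Z-\mu_k^{(l)}|/(\sigma_k^{(l)}\sqrt 2)\bigr)\sim\mathcal U(0,1)$ giving a generalized-Pareto one-sided pushforward, and the closure of $\mathcal L^1_\alpha$ under finite mixtures with a common exponent together with the observation that the diagonal TTF layer induces no cross-coordinate mixing, so the collapse of Theorem~\ref{thm:mixing-dominance} is avoided. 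Your point that the limiting directional indices do not depend on $k$ also matches the paper's remark that replacing $(\mu,\sigma)$ by $(\mu_k,\sigma_k)$ only shifts $\log r$ by a constant.

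However, your Step~2 does not close as written. From your own formula the one-sided survival of the pushforward is $\Pr(U>t)=\bigl(1+\hat\xi^{(k)}_{+\mathbf e_l}t\bigr)^{-1/\hat\xi^{(k)}_{+\mathbf e_l}}\sim \mathrm{const}\cdot t^{-1/\hat\xi^{(k)}_{+\mathbf e_l}}$, so in the sense of Definition~\ref{def:tail-classes} the output lies in $\mathcal L^1_{1/\hat\xi}$, whose polynomial decay rate is $1/\hat\xi$, not $\hat\xi$. Theorem~\ref{thm:consistency} gives $\hat\xi^{(k)}_{+\mathbf e_l}\to\xi_{+\mathbf e_l}$, the decay rate of the target's directional survival function, so your limiting tail class is $\mathcal L^1_{1/\xi_{+\mathbf e_l}}$ rather than the asserted $\mathcal L^1_{\xi_{+\mathbf e_l}}$; the two coincide only when $\xi_{+\mathbf e_l}=1$. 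To actually preserve the target's tail thickness the TTF must be instantiated with the reciprocal $1/\hat\xi^{(k)}_{\pm\mathbf e_l}$ (equivalently, the TTF parameter must be read as a generalized-Pareto shape rather than a Pareto exponent), and your proof needs to say this explicitly instead of passing from ``index determined by $\hat\xi$'' to membership in the target class. Two smaller points: the conclusion is necessarily asymptotic in the Monte Carlo sample size $n$, since you only have convergence in probability of the estimators; and if the shared Lipschitz backbone precedes the TTF, its output is no longer exactly Gaussian, so $V$ is only approximately uniform --- Theorem~\ref{thm:closure} keeps the backbone output in $\mathcal E^2$ but changes the constant $\tilde\alpha$, which enters the resulting Pareto exponent, so the preservation claim should either be stated for the TTF acting on the exact Gaussian component or the perturbation must be quantified.
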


\section{Experiments}
\label{sec:exp}
In this section, we evaluate the performance of the proposed Stick-Breaking Component-wise Tail-Adaptive Flow (StiCTAF) in two scenarios and compare it against several benchmark models. The benchmarks include flow models with a standard Gaussian base, a Gaussian mixture base, and existing heavy-tailed normalizing-flow models--TAF \citep{jaini2020tails}, gTAF \citep{laszkiewicz2022marginal}, and ATAF \citep{liang2022fat}. In addition, we consider a normalizing flow model with a stick-breaking heavy-tailed mixture base to demonstrate that a heavy-tailed mixture base alone is insufficient. All models are implemented in \texttt{PyTorch}~2.7.0+cu126 with \texttt{CUDA}~12.6 using the \texttt{normflows} library \citep{stimper2023normflows}, and executed on a single NVIDIA GeForce RTX~4090 GPU. Further implementation details are provided in Appendix~\ref{sec:app_exp}.

\begin{figure}[t]
\centering
\begin{subfigure}[t]{0.32\linewidth}
  \includegraphics[width=\linewidth]{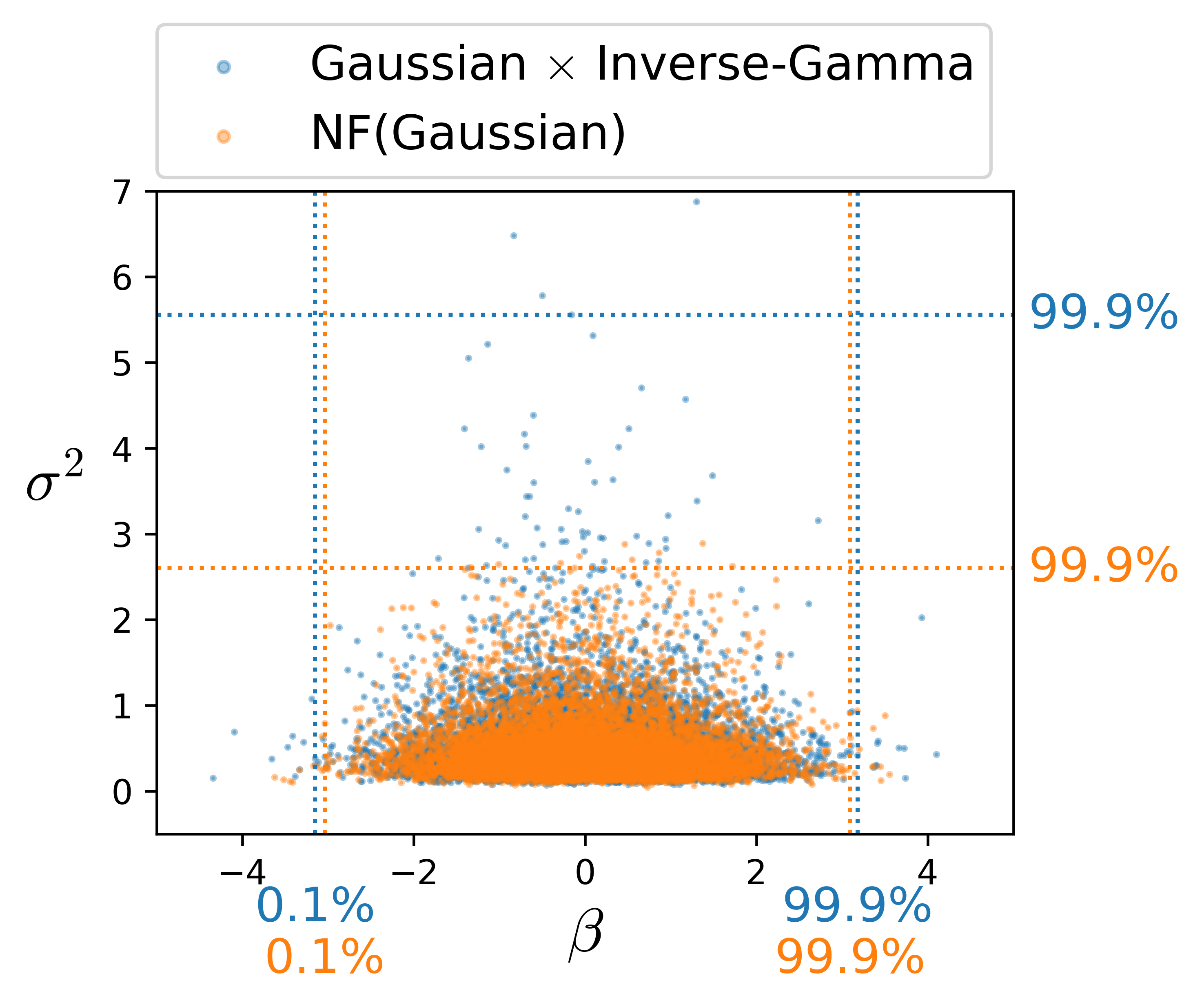}
  \caption{NF (Gaussian)}
  \label{fig:blr_gauss}
\end{subfigure}\hfill
\begin{subfigure}[t]{0.32\linewidth}
  \includegraphics[width=\linewidth]{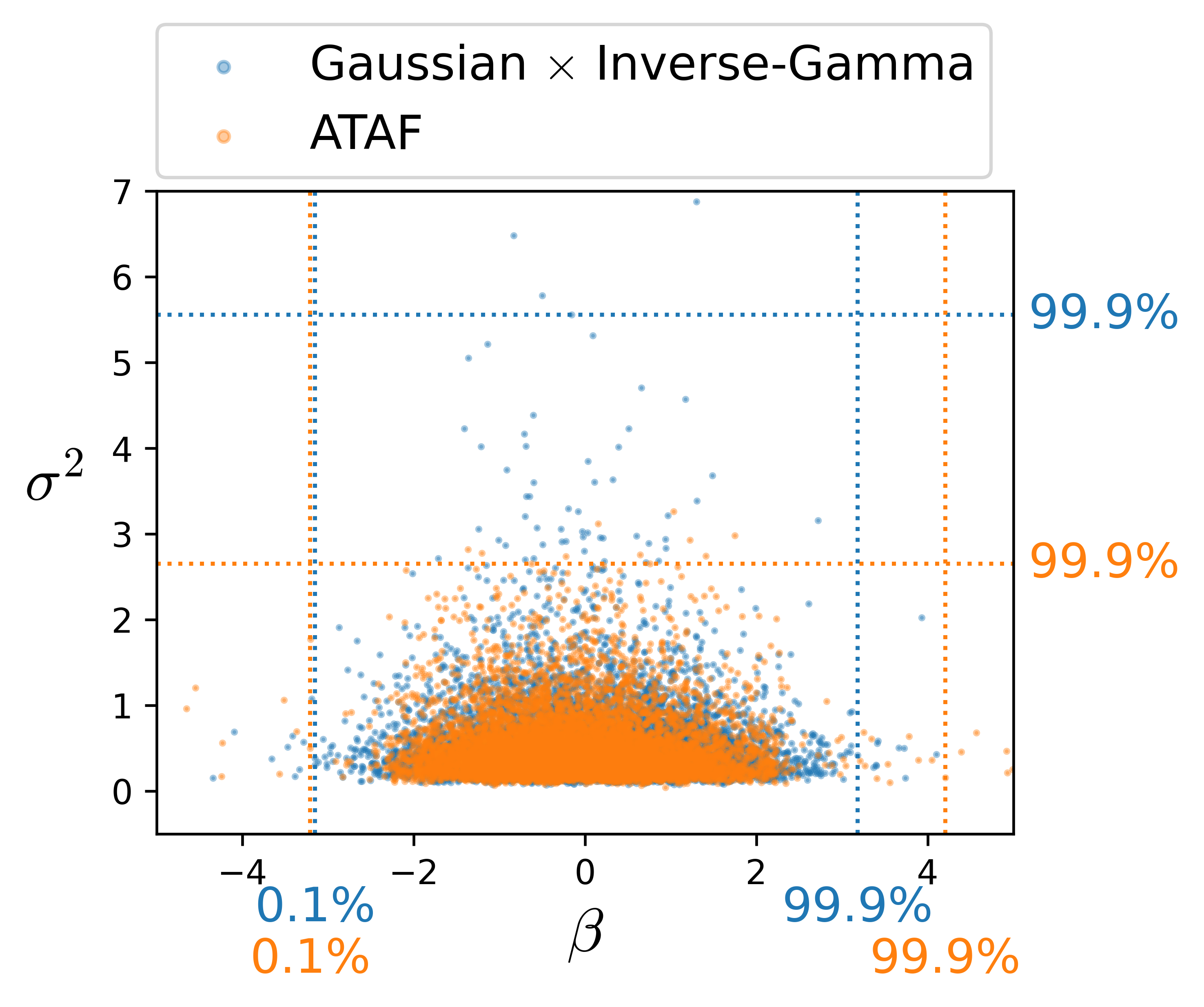}
  \caption{ATAF}
  \label{fig:blr_ataf}
\end{subfigure}\hfill
\begin{subfigure}[t]{0.32\linewidth}
  \includegraphics[width=\linewidth]{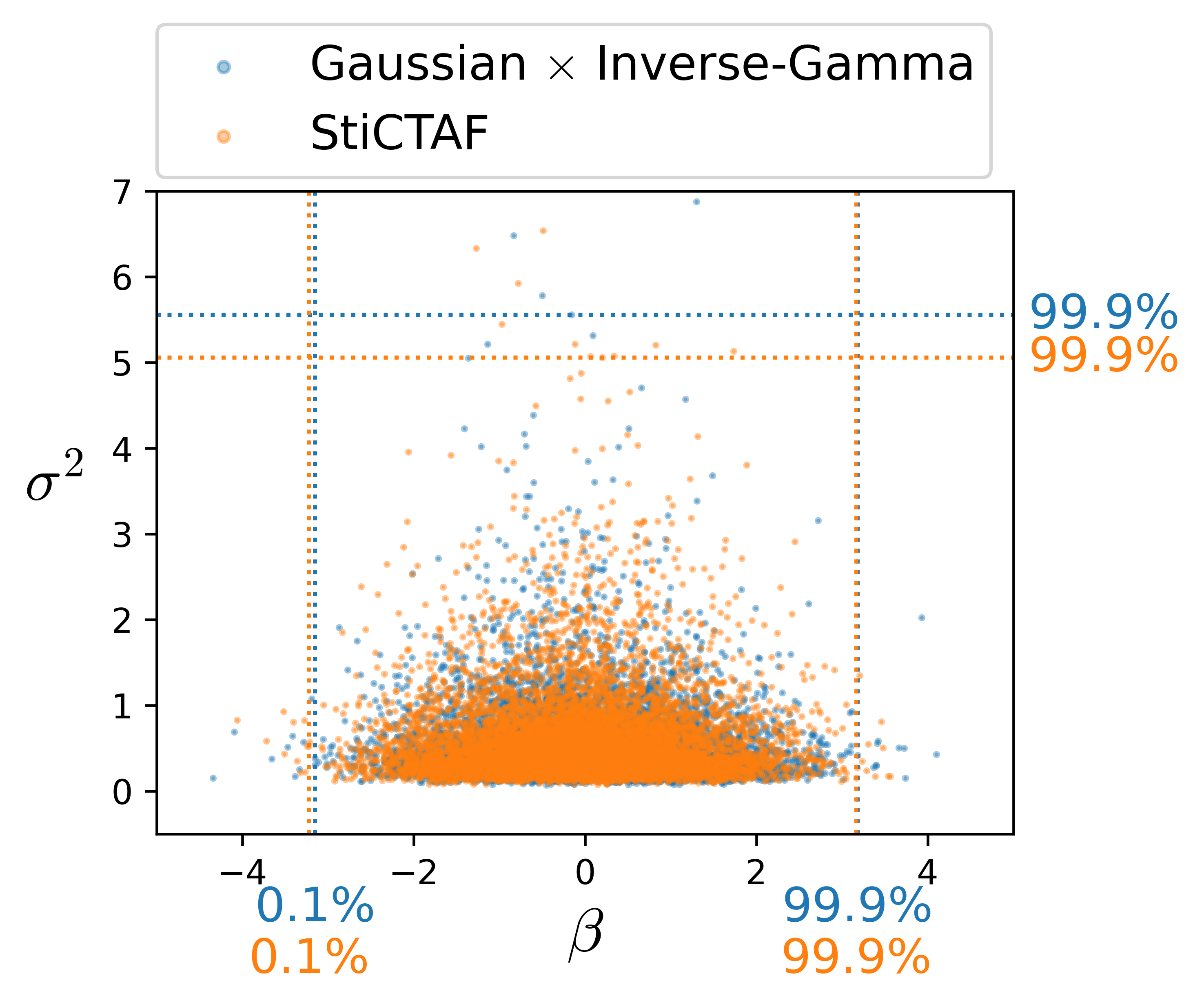}
  \caption{StiCTAF (ours)}
  \label{fig:blr_sbctaf}
\end{subfigure}
\caption{\textbf{Normal $\times$ Inverse-Gamma Target:}
Each panel compares the model and target distributions using Monte Carlo samples of size $10^4$. The dotted lines indicate the $0.1\%$ and $99.9\%$ marginal percentiles for $\beta$, and the $99.9\%$ percentile for $\sigma^2$. From left to right: NF (Gaussian), ATAF, and StiCTAF.}
\label{fig:blr_all}
\end{figure}

\subsection{Normal-Inverse-Gamma Distribution}

We first consider a Normal–Inverse-Gamma (NIG) distribution whose two coordinates exhibit different tail behaviors: one light-tailed and the other heavy-tailed. This distribution frequently arises in Bayesian linear regression (BLR), with likelihood 
\[
y \mid \beta, \sigma^{2} \sim \mathcal{N}(X\beta, \sigma^{2}I_n).
\]
Using the conjugate priors 
$$
\beta \mid \sigma^{2} \sim \mathcal{N}(m_0, \sigma^{2}V_0), 
\qquad 
\sigma^{2} \sim \mathrm{Inv\text{-}Gamma}(a_0, b_0),
$$
the joint posterior is again Normal-Inverse-Gamma.  

As a minimal two-dimensional testbed reflecting this light-versus-heavy tail structure, we adopt the product target 
\[
\mathcal{N}(\mu, \sigma_0^2) \times \mathrm{Inv\text{-}Gamma}(\alpha, \beta),
\]
with parameters set to $(\mu, \sigma_0^2, \alpha, \beta) = (0.0, 1.0, 3.0, 1.0)$. In this setting, the inverse-gamma marginal along the $\sigma^2$-direction has tail index $3.0$, i.e., 
$\Pr(\sigma^2 > t) = \Theta(t^{-3})$.

Figure~\ref{fig:blr_all} shows the sample percentiles for the target posterior and each NF model, with dotted guide lines indicating the $0.1\%$ and $99.9\%$ marginal percentiles for $\beta$, and the $99.9\%$ marginal percentile for $\sigma^2$. Since the target density is available in closed form, we draw $10^4$ samples from the target, using the same sample size for each NF model. The Gaussian-base NF captures the intended light tail in $\beta$ but also imposes an undesirably light tail in $\sigma$, resulting in a large discrepancy between the approximated and target $99.9\%$ lines. ATAF, even when initialized close to the oracle--$(\nu_\beta,\nu_{\sigma^2}) = (\infty,3)$, approximated here by $(\nu_\beta,\nu_{\sigma^2}) = (30,3)$--overestimates the upper tail in the $\beta$-direction while underestimating the extreme quantile in the $\sigma^2$-direction, deviating from the target $99.9\%$ value. In contrast, StiCTAF provides an accurate tail fit: the $\beta$-direction remains light-tailed, with extreme quantiles aligned to the target, and in the positive $\sigma^2$-direction the estimated tail index is $\hat{\xi}_{+\sigma^2} = 3.08$, very close to the target value of $3.0$.

\subsection{Complex Multimodal Target with Heavy Tails}

We next test whether the proposed StiCTAF can fit a complex two-dimensional target that exhibits both heavy tails and multimodality.
The target distribution is a four-component mixture: two Gaussian$\times$Student's-$t$ components (with $\nu=2$ and $\nu=3$, respectively), one Two-Moons component, and one Student's-$t$ ($\nu=2$)$\times$Student's-$t$ ($\nu=3$) component.  

Figure~\ref{fig:toy1_all} shows raw samples for three methods: Gaussian-base NF, Gaussian-mixture-base N
, and StiCTAF.
Each panel displays $2\times10^4$ points drawn from the target and the corresponding trained model, enabling direct comparison of how well the flows approximate the target joint distribution. The curves along the top and right margins depict the marginal densities of the horizontal and vertical coordinates. 

The Gaussian-base NF fails to capture the two Gaussian$\times$Student's-$t$ modes located on the right and at the top, concentrating mass in the lower left and center. The Gaussian-mixture-base NF recovers all modes but places excess probability mass in low-density regions, producing extraneous sample. In contrast, StiCTAF recovers all modes and more faithfully captures the tail thickness across the distribution without generating misplaced samples. Although it does not fully capture the tail of the upper Gaussian$\times$Student's-$t$ component, it nevertheless provides the closest overall match to the target in both the bulk and the tails.  

Quantitatively, since the target density is available in closed form for this synthetic experiment, we estimate the forward KL divergence $D_{\mathrm{KL}}(p\|q)$ by Monte Carlo. 
We also report the effective sample size (ESS), computed from importance weights $w_i = p(z_i)/q(z_i)$ with $z_i \sim q$, using 
$$
\mathrm{ESS} = \frac{(\sum_i w_i)^2}{\sum_i w_i^2}.
$$
Larger values (normalized, closer to $1$) indicate better sample efficiency. 
Table~\ref{tab:toy1_kl} shows that StiCTAF achieves the lowest KL among all methods and the highest ESS.

\begin{figure}[t]
\centering
\begin{subfigure}[t]{0.32\linewidth}
  \includegraphics[width=\linewidth]{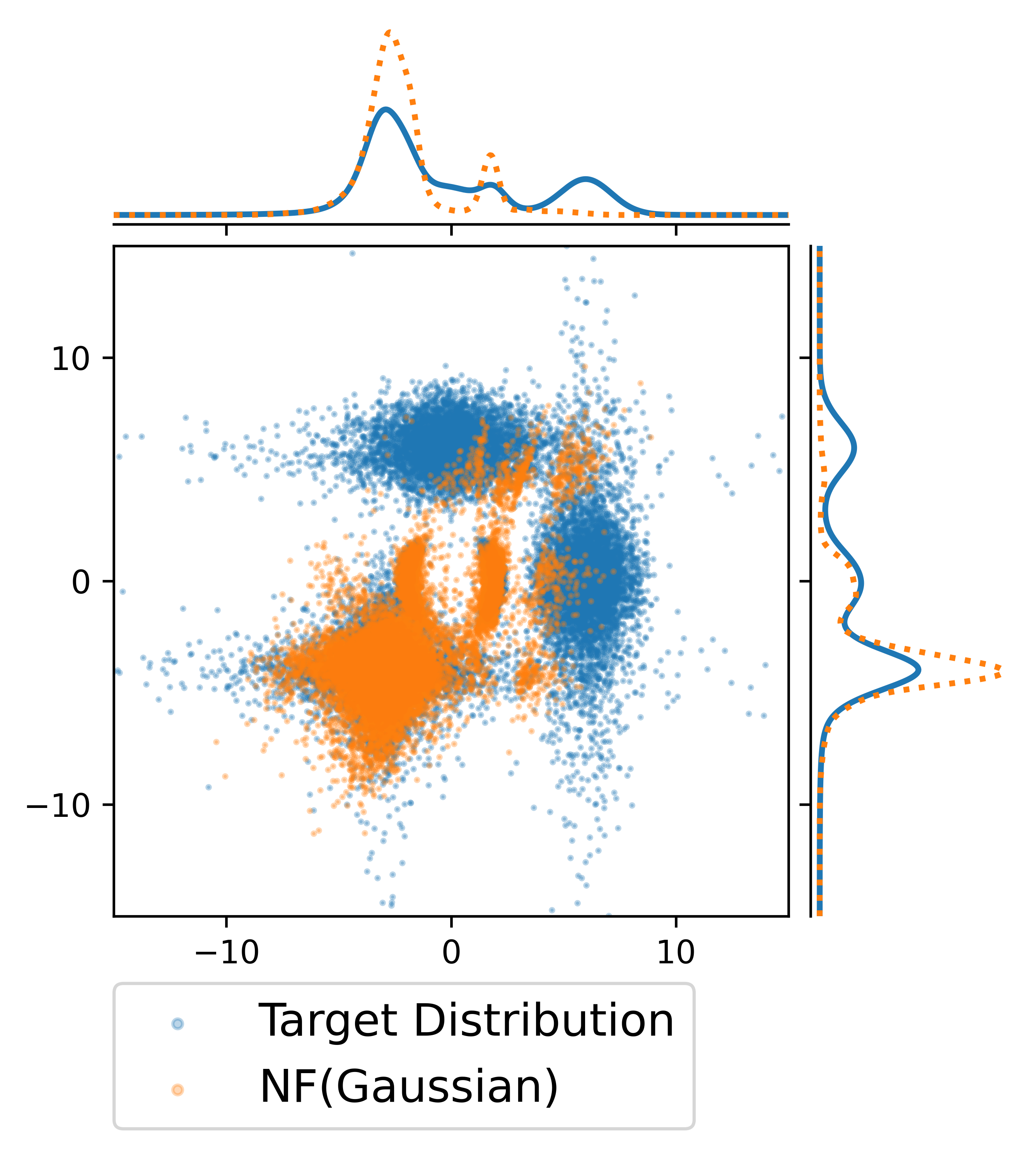}
  \caption{NF (Gaussian)}
  \label{fig:toy1_gauss}
\end{subfigure}\hfill
\begin{subfigure}[t]{0.32\linewidth}
  \includegraphics[width=\linewidth]{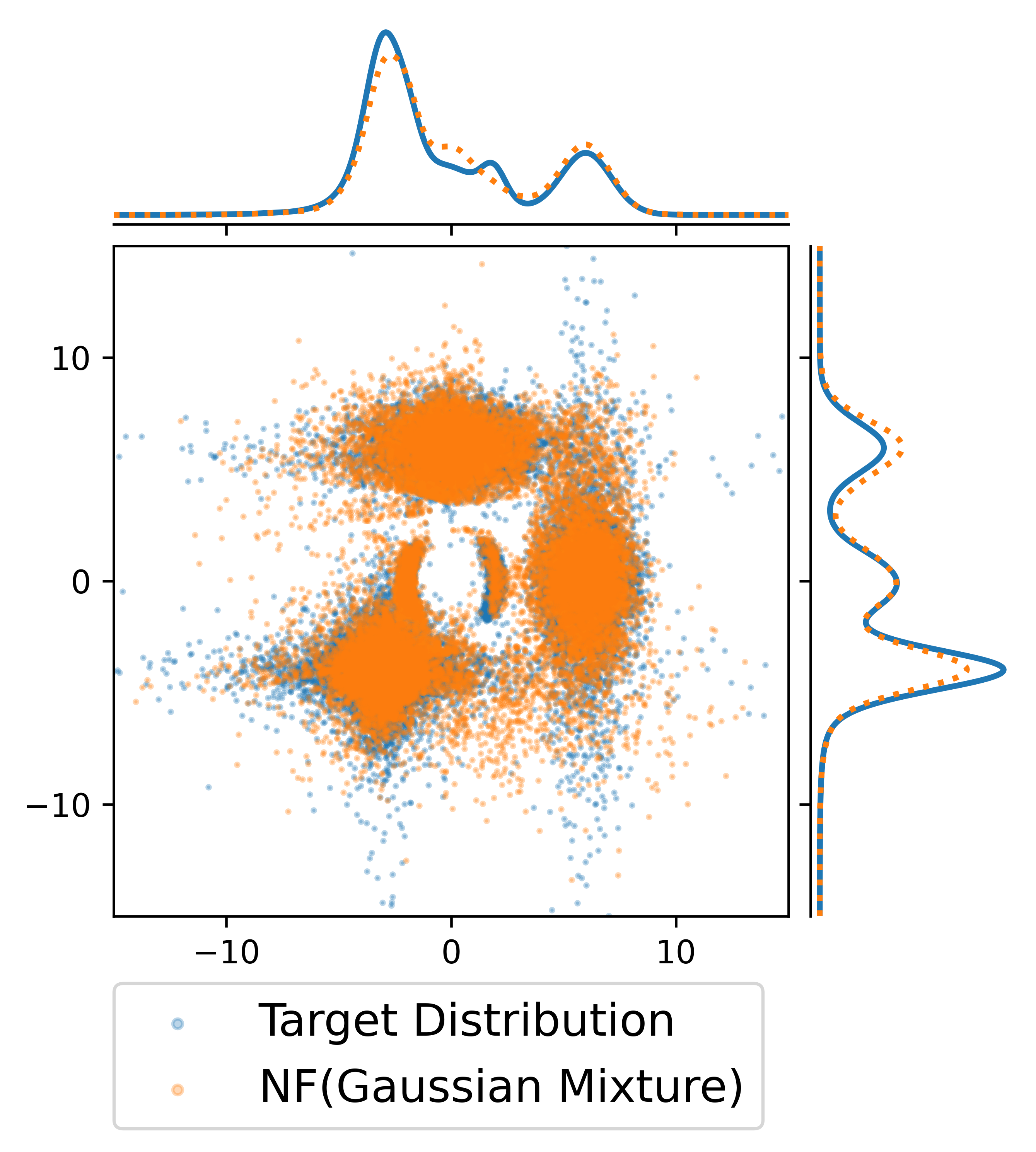}
  \caption{NF (Gaussian Mixture) }
  \label{fig:toy1_ataf}
\end{subfigure}\hfill
\begin{subfigure}[t]{0.32\linewidth}
  \includegraphics[width=\linewidth]{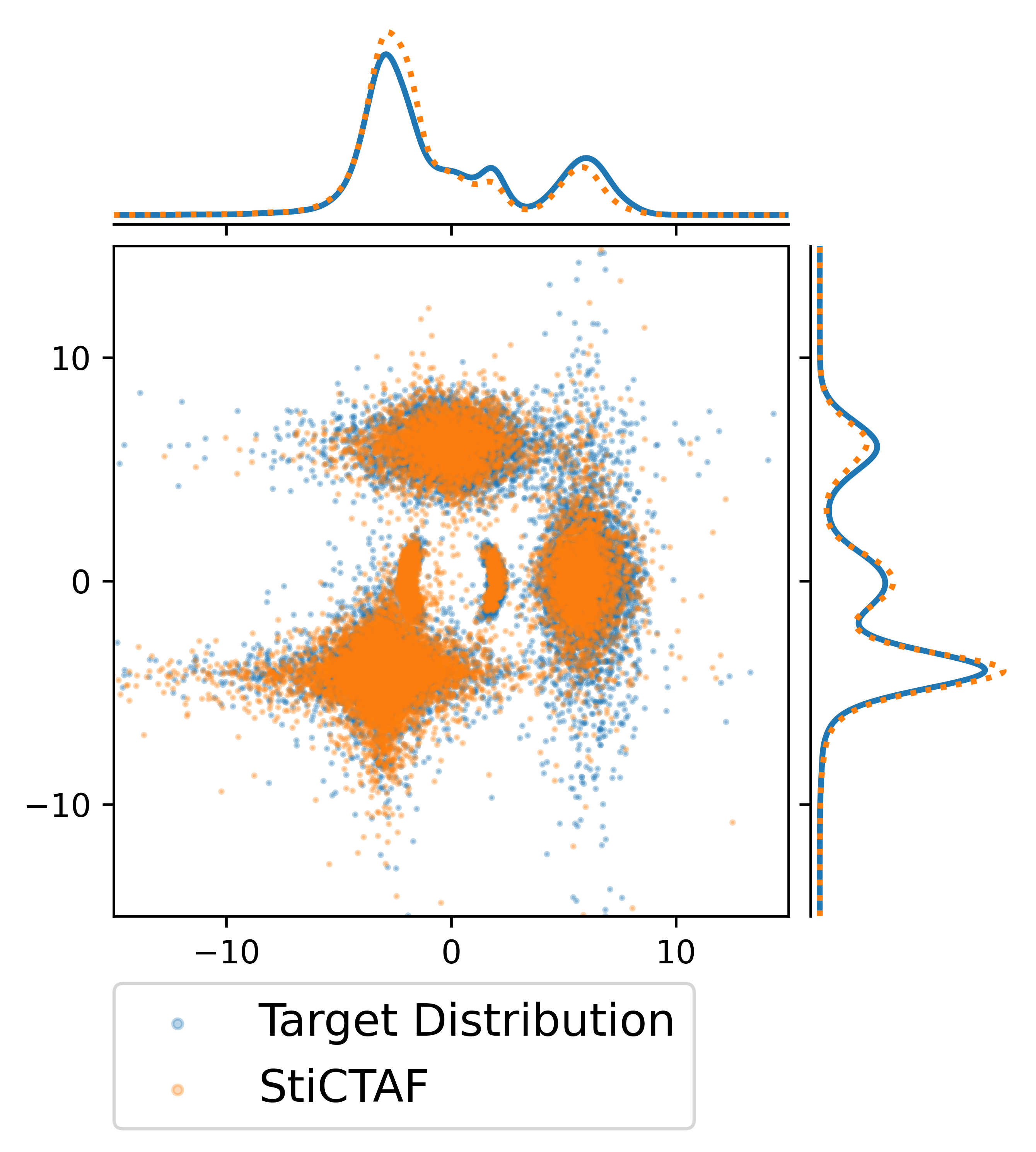}
  \caption{StiCTAF (ours)}
  \label{fig:toy1_sbctaf}
\end{subfigure}

\caption{\textbf{Complex Multimodal Target:}
Each panel compares the model and target distributions using Monte Carlo samples of size $2 \times 10^4$. The curves along the top and right margins show the univariate marginal densities. From left to right: NF (Gaussian), NF (Gaussian Mixture), and StiCTAF.}
\label{fig:toy1_all}
\end{figure}

\begin{table}[t]
\centering
\footnotesize
\caption{\textbf{Forward KL-divergence and normalized ESS for Complex Mixture Target} (mean $\pm$ standard deviation)s over $10$ different seeds; each repeat uses $N{=}1000$ target samples. Lower is better for KL, higher is better for ESS.}
\begin{tabular}{l c c}
\toprule
Method & Forward KL & normalized ESS\\
\midrule
NF (Gaussian)        & $1.92 \pm 1.21$ & $0.31 \pm 0.17$ \\
NF (Gaussian Mixture) & $0.33 \pm 0.05$ & $0.65 \pm 0.23$ \\
StiCTAF (ours)           & $\mathbf{0.22} \pm 0.09$ & $\mathbf{0.79} \pm 0.19$ \\
\bottomrule
\end{tabular}

\label{tab:toy1_kl}
\end{table}

\section{Real Data Analysis: 2024 Daily Maximum Wind Speeds in Korea}
\label{sec:real}
We now evaluate the performance of the proposed method on a real data application, which presents a more complicated posterior density than the simulated examples above. Specifically, we analyze daily maximum wind speed data in 2024 from the Korea Meteorological Administration (https://data.kma.go.kr/). Consecutive threshold–exceedance pairs are modeled using the logistic bivariate extreme value framework of \citet{fawcett2006hierarchical}. Data from four stations are considered, and we analyze four quarters of the year separately. Let $X_{(j,s),t}$ denote the daily maximum wind speed at station $j \in \{1,\ldots,4\}$, season $s \in \{1,\ldots,4\}$, and day $t$. For each $(j,s)$, we fix a high threshold $u_{j,s}$ and work with residuals $Y_{(j,s),t} = X_{(j,s),t} - u_{j,s}$ conditional on exceedance.  

\textbf{Logistic dependence.}  
For $x > u$, define the exceedance–scale transform
\[
Z(x) = \Lambda^{-1}\Bigl(1 + \tfrac{\eta(x-u)}{\sigma}\Bigr)^{1/\eta}.
\]
On this scale, the joint CDF for a consecutive pair of exceedances is
\begin{equation*}
F(x_t, x_{t+1} \mid \sigma, \eta, \alpha)
= 1 - \Bigl[ Z(x_t)^{-1/\alpha} + Z(x_{t+1})^{-1/\alpha} \Bigr]_{+}^{\alpha},
\qquad \alpha \in (0,1],
\end{equation*}
where $\alpha=1$ corresponds to independence and $\alpha \to 0^{+}$ to complete dependence \citep{fawcett2006hierarchical}. Full model details are provided in Appendix~\ref{appsub:real}.  

\textbf{Parameterization and priors.}  
We decompose station and season effects using the additive models
\[
\sigma_{j,s} = \operatorname{softplus}(\gamma^{(\sigma)}_{j}) + \operatorname{softplus}(\varepsilon^{(\sigma)}_{s}), \quad
\eta_{j,s} = \operatorname{softplus}(\gamma^{(\eta)}_{j}) + \operatorname{softplus}(\varepsilon^{(\eta)}_{s}),
\]
with station-specific $\alpha_j \in (0,1)$. To avoid bounded supports during training, we instead infer $\alpha_j^* \in \mathbb{R}$ and set $\alpha_j = \mathrm{sigmoid}(\alpha_j^*)$. The priors are specified as follows: $t_{\nu=10}$ for $\gamma^{(\sigma)}_{1:4}$ and $\varepsilon^{(\sigma)}_{1:4}$, $t_{\nu=3}$ for $\gamma^{(\eta)}_{1:4}$ and $\varepsilon^{(\eta)}_{1:4}$, and $\mathrm{Beta}(1,1)$ for $\alpha_j$.  

\begin{figure}[t]
\centering
\begin{subfigure}[t]{0.5\linewidth}
  \includegraphics[width=\linewidth]{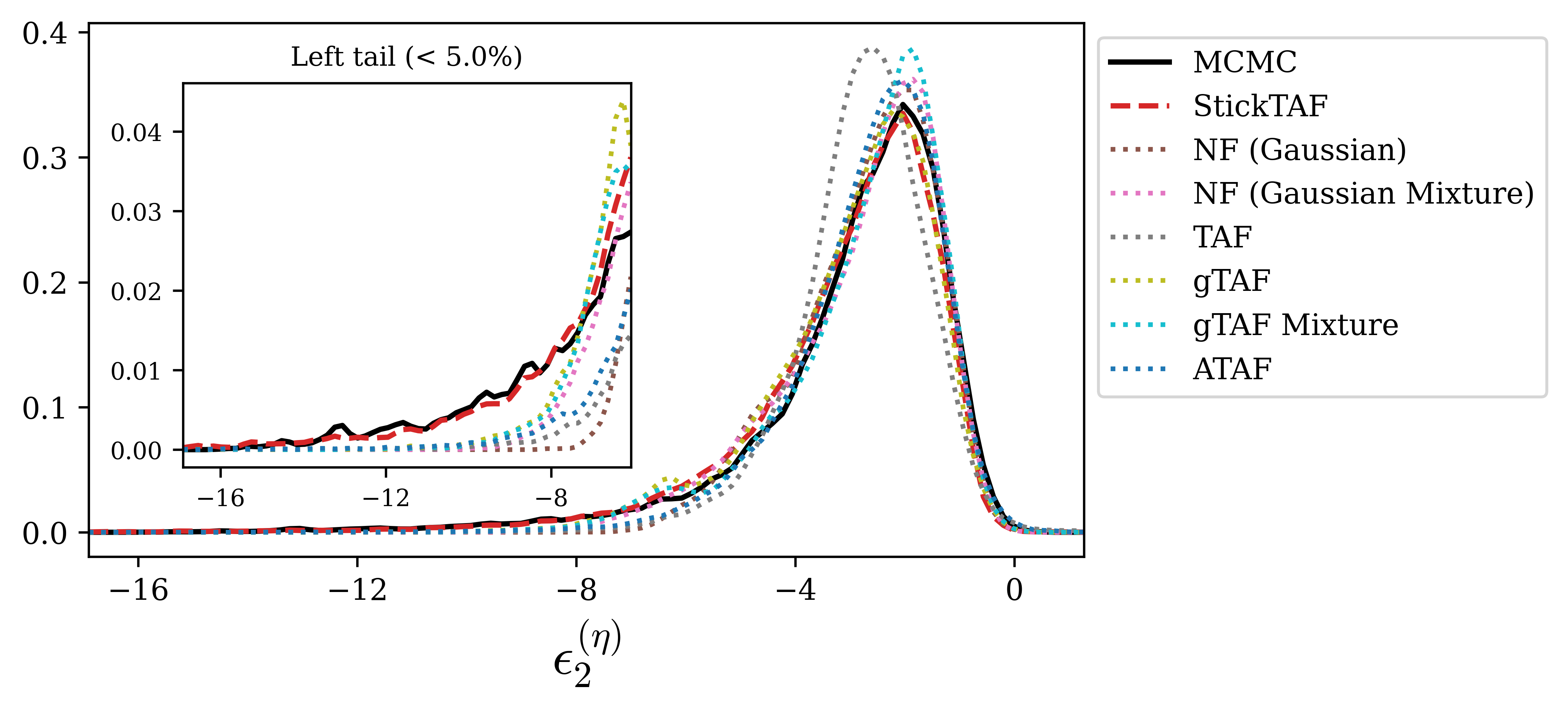}
  \caption{Posterior of $\varepsilon^{(\eta)}_2$}
  \label{fig:real_heavy}
\end{subfigure}\hfill
\begin{subfigure}[t]{0.5\linewidth}
  \includegraphics[width=\linewidth]{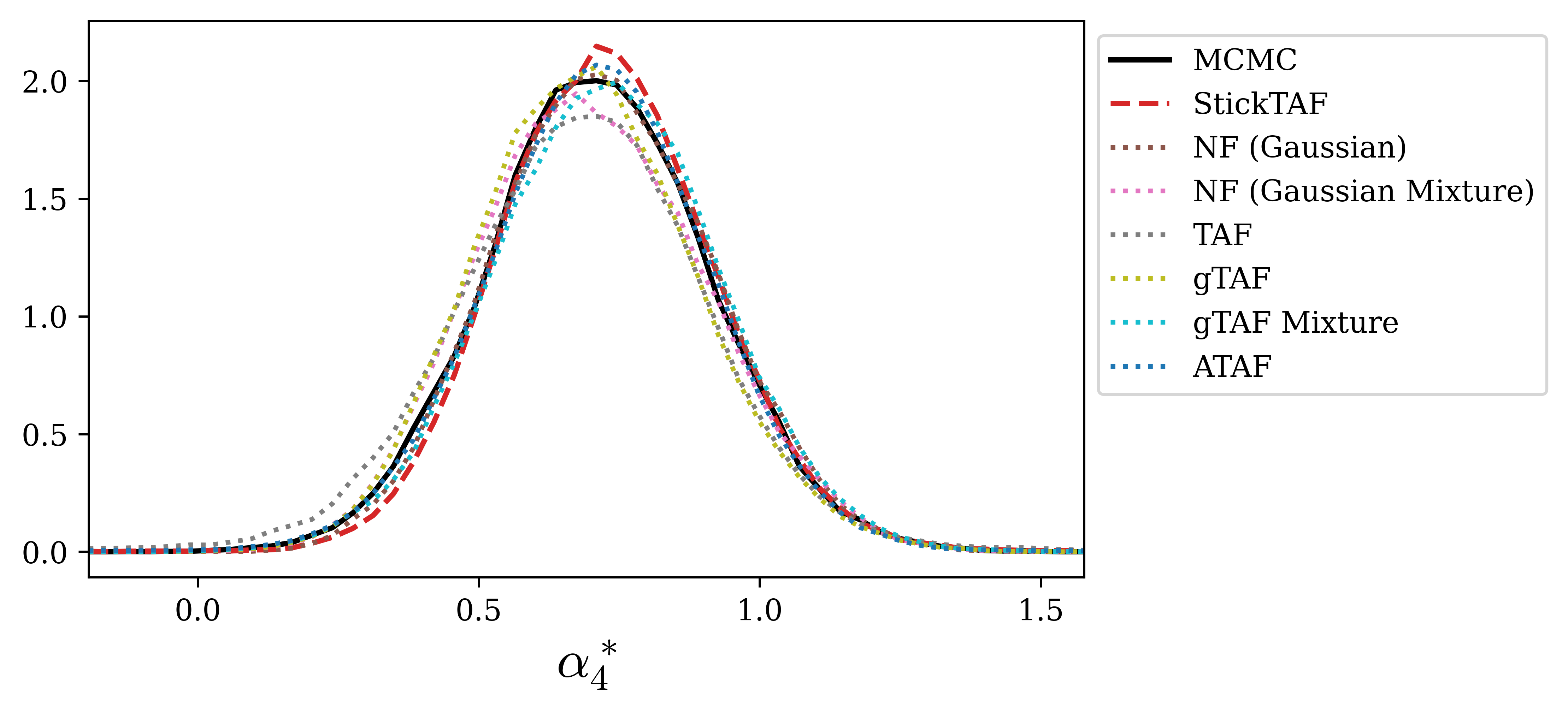}
  \caption{Posterior of $\alpha_4^*$}
  \label{fig:real_light}
\end{subfigure}\hfill

\caption{\textbf{Estimated posteriors for two parameters from the real data analysis:} 
Panel (a) shows $\varepsilon^{(\eta)}_{2}$ and panel (b) shows $\alpha^{*}_{4}$. 
Insets display the left 5\% tail density. 
The black curve represents the MCMC reference, and the red curve corresponds to StiCTAF. 
Baselines include normalizing flows with Gaussian and Gaussian mixture bases, as well as TAF, gTAF, gTAF mixture, and ATAF.}

\label{fig:real}
\end{figure}

\begin{table}[t]
\centering
\caption{\textbf{Inference results for the maximum wind speed dataset.} 
For each parameter, the table reports the estimated mode and the 99\% equal-tail credible interval. 
Computation times (in hours) for each method are also provided.}
\label{tab:post_summaries_partial}
\scriptsize
\begin{tabular}{lcccc}
\toprule
Parameter & MCMC & StickTAF & NF (Gaussian) & TAF \\
\midrule
$\varepsilon^{(\eta)}_{1}$ & -1.69 (-11.21, -0.32) & -1.81 (-11.89, -0.27) & -1.72 (-5.89, -0.31) & -1.70 (-4.71, 0.05) \\
$\varepsilon^{(\eta)}_{2}$ & -2.02 (-11.95, -0.38) & -2.06 (-12.09, -0.51) & -1.95 (-6.29, -0.51) & -2.60 (-7.12, 1.89) \\
$\varepsilon^{(\eta)}_{3}$ & -1.52 (-9.18, -0.13) & -1.62 (-10.21, -0.26) & -1.50 (-4.71, -0.22) & -1.65 (-6.05, 1.02) \\
$\varepsilon^{(\eta)}_{4}$ & -2.09 (-11.64, -0.50) & -2.22 (-13.88, -0.71) & -2.14 (-5.98, -0.52) & -2.32 (-5.32, 0.59) \\
\midrule
Comp. time (hr) & 11.90 & 0.08 & 0.03 & 0.03 \\
\bottomrule
\end{tabular}
\end{table}

Table~\ref{tab:post_summaries_partial} reports posterior modes and 99\% credible intervals for selected four parameters, along with key diagnostics. MCMC is included as the gold standard, given its theoretical guarantees. As expected, all flow models are substantially faster than MCMC. Among the flow models, StiCTAF provides the tightest and most reliable 99\% intervals across marginals, aligning most closely with the MCMC reference. Figure~\ref{fig:real} displays the marginal posteriors for two representative parameters, $\varepsilon^{(\eta)}_{2}$ and $\alpha^{*}_{4}$. For $\varepsilon^{(\eta)}_{2}$, the MCMC reference shows a pronounced heavy left tail; among the approximations, only StiCTAF successfully reproduces both the tail thickness and the overall spread. For $\alpha^{*}_{4}$, which follows a light-tailed distribution, StiCTAF performs comparably to the alternatives, closely matching the central mass and dispersion. Full posterior results for all parameters are also provided in Appendix~\ref{appsub:real}.

\section{Conclusion}
We have introduced a variational inference framework capable of accurately representing posterior distributions that exhibit both multimodality and heavy tails. The central contribution is the replacement of a light-tailed, unimodal base distribution with SBM, whose effective number of components adapts automatically to the target distribution. To further improve tail representation, we incorporate a per-component tail transformation specifically designed to capture heavy-tailed structure when present. In both synthetic and empirical studies, the proposed method achieves close agreement with MCMC results while offering substantially greater computational efficiency. Moreover, it consistently outperforms state-of-the-art flow-based variational inference models in terms of both mode recovery and tail calibration.

\subsubsection*{Acknowledgments}
We acknowledge the use of large language model (LLM) tools to assist in proofreading and improving the clarity of the manuscript.

\bibliography{iclr2026_conference}

\begin{thebibliography}{26}
\providecommand{\natexlab}[1]{#1}
\providecommand{\url}[1]{\texttt{#1}}
\expandafter\ifx\csname urlstyle\endcsname\relax
  \providecommand{\doi}[1]{doi: #1}\else
  \providecommand{\doi}{doi: \begingroup \urlstyle{rm}\Url}\fi

\bibitem[Bingham et~al.(1989)Bingham, Goldie, and Teugels]{bingham1989regular}
Nicholas~H Bingham, Charles~M Goldie, and Jef~L Teugels.
\newblock \emph{Regular variation}, volume~27.
\newblock Cambridge university press, 1989.

\bibitem[Blei \& Jordan(2006)Blei and Jordan]{blei2006variational}
David~M Blei and Michael~I Jordan.
\newblock Variational inference for dirichlet process mixtures.
\newblock 2006.

\bibitem[Blei et~al.(2017)Blei, Kucukelbir, and McAuliffe]{blei2017variational}
David~M Blei, Alp Kucukelbir, and Jon~D McAuliffe.
\newblock Variational inference: A review for statisticians.
\newblock \emph{Journal of the American statistical Association}, 112\penalty0
  (518):\penalty0 859--877, 2017.

\bibitem[Connor \& Mosimann(1969)Connor and Mosimann]{connor1969concepts}
Robert~J Connor and James~E Mosimann.
\newblock Concepts of independence for proportions with a generalization of the
  dirichlet distribution.
\newblock \emph{Journal of the American Statistical Association}, 64\penalty0
  (325):\penalty0 194--206, 1969.

\bibitem[De~Haan \& Ferreira(2006)De~Haan and Ferreira]{de2006extreme}
Laurens De~Haan and Ana Ferreira.
\newblock \emph{Extreme value theory: an introduction}.
\newblock Springer, 2006.

\bibitem[Fawcett \& Walshaw(2006)Fawcett and Walshaw]{fawcett2006hierarchical}
Lee Fawcett and David Walshaw.
\newblock A hierarchical model for extreme wind speeds.
\newblock \emph{Journal of the Royal Statistical Society Series C: Applied
  Statistics}, 55\penalty0 (5):\penalty0 631--646, 2006.

\bibitem[Haario et~al.(2001)Haario, Saksman, and Tamminen]{haario2001adaptive}
Heikki Haario, Eero Saksman, and Johanna Tamminen.
\newblock An adaptive metropolis algorithm.
\newblock 2001.

\bibitem[Hickling \& Prangle(2024)Hickling and Prangle]{hickling2024flexible}
Tennessee Hickling and Dennis Prangle.
\newblock Flexible tails for normalizing flows.
\newblock \emph{arXiv preprint arXiv:2406.16971}, 2024.

\bibitem[Ishwaran \& James(2001)Ishwaran and James]{Ishwaran01032001Gibbs}
Hemant Ishwaran and Lancelot~F James.
\newblock Gibbs sampling methods for stick-breaking priors.
\newblock \emph{Journal of the American Statistical Association}, 96\penalty0
  (453):\penalty0 161--173, 2001.
\newblock \doi{10.1198/016214501750332758}.

\bibitem[Izmailov et~al.(2020)Izmailov, Kirichenko, Finzi, and
  Wilson]{izmailov2020semi}
Pavel Izmailov, Polina Kirichenko, Marc Finzi, and Andrew~Gordon Wilson.
\newblock Semi-supervised learning with normalizing flows.
\newblock In \emph{International conference on machine learning}, pp.\
  4615--4630. PMLR, 2020.

\bibitem[Jaini et~al.(2020)Jaini, Kobyzev, Yu, and Brubaker]{jaini2020tails}
Priyank Jaini, Ivan Kobyzev, Yaoliang Yu, and Marcus Brubaker.
\newblock Tails of lipschitz triangular flows.
\newblock In \emph{International Conference on Machine Learning}, pp.\
  4673--4681. PMLR, 2020.

\bibitem[Jang et~al.(2016)Jang, Gu, and Poole]{jang2016categorical}
Eric Jang, Shixiang Gu, and Ben Poole.
\newblock Categorical reparameterization with gumbel-softmax.
\newblock \emph{arXiv preprint arXiv:1611.01144}, 2016.

\bibitem[Joo et~al.(2020)Joo, Lee, Park, and Moon]{joo2020dirichlet}
Weonyoung Joo, Wonsung Lee, Sungrae Park, and Il-Chul Moon.
\newblock Dirichlet variational autoencoder.
\newblock \emph{Pattern Recognition}, 107:\penalty0 107514, 2020.

\bibitem[Kingma \& Welling(2014)Kingma and Welling]{kingma2014efficient}
Diederik Kingma and Max Welling.
\newblock Efficient gradient-based inference through transformations between
  bayes nets and neural nets.
\newblock In \emph{International Conference on Machine Learning}, pp.\
  1782--1790. PMLR, 2014.

\bibitem[Kingma \& Welling(2013)Kingma and Welling]{kingma2013auto}
Diederik~P Kingma and Max Welling.
\newblock Auto-encoding variational bayes.
\newblock \emph{arXiv preprint arXiv:1312.6114}, 2013.

\bibitem[Kucukelbir et~al.(2017)Kucukelbir, Tran, Ranganath, Gelman, and
  Blei]{kucukelbir2017automatic}
Alp Kucukelbir, Dustin Tran, Rajesh Ranganath, Andrew Gelman, and David~M Blei.
\newblock Automatic differentiation variational inference.
\newblock \emph{Journal of machine learning research}, 18\penalty0
  (14):\penalty0 1--45, 2017.

\bibitem[Kumaraswamy(1980)]{kumaraswamy1980generalized}
Ponnambalam Kumaraswamy.
\newblock A generalized probability density function for double-bounded random
  processes.
\newblock \emph{Journal of hydrology}, 46\penalty0 (1-2):\penalty0 79--88,
  1980.

\bibitem[Laszkiewicz et~al.(2022)Laszkiewicz, Lederer, and
  Fischer]{laszkiewicz2022marginal}
Mike Laszkiewicz, Johannes Lederer, and Asja Fischer.
\newblock Marginal tail-adaptive normalizing flows.
\newblock In \emph{International Conference on Machine Learning}, pp.\
  12020--12048. PMLR, 2022.

\bibitem[Li et~al.(2022)Li, Li, Jiang, and Xia]{li2022deep}
Naiqi Li, Wenjie Li, Yong Jiang, and Shu-Tao Xia.
\newblock Deep dirichlet process mixture models.
\newblock In \emph{Uncertainty in Artificial Intelligence}, pp.\  1138--1147.
  PMLR, 2022.

\bibitem[Liang et~al.(2022)Liang, Mahoney, and Hodgkinson]{liang2022fat}
Feynman Liang, Michael Mahoney, and Liam Hodgkinson.
\newblock Fat--tailed variational inference with anisotropic tail adaptive
  flows.
\newblock In \emph{International Conference on Machine Learning}, pp.\
  13257--13270. PMLR, 2022.

\bibitem[Nalisnick \& Smyth(2016)Nalisnick and Smyth]{nalisnick2016stick}
Eric Nalisnick and Padhraic Smyth.
\newblock Stick-breaking variational autoencoders.
\newblock \emph{arXiv preprint arXiv:1605.06197}, 2016.

\bibitem[Resnick(2007)]{resnick2007heavy}
Sidney~I Resnick.
\newblock \emph{Heavy-tail phenomena: probabilistic and statistical modeling}.
\newblock Springer, 2007.

\bibitem[Rezende \& Mohamed(2015)Rezende and Mohamed]{rezende2015variational}
Danilo Rezende and Shakir Mohamed.
\newblock Variational inference with normalizing flows.
\newblock In \emph{International conference on machine learning}, pp.\
  1530--1538. PMLR, 2015.

\bibitem[Rezende et~al.(2014)Rezende, Mohamed, and
  Wierstra]{rezende2014stochastic}
Danilo~Jimenez Rezende, Shakir Mohamed, and Daan Wierstra.
\newblock Stochastic backpropagation and approximate inference in deep
  generative models.
\newblock In \emph{International conference on machine learning}, pp.\
  1278--1286. PMLR, 2014.

\bibitem[Roeder et~al.(2017)Roeder, Wu, and Duvenaud]{roeder2017sticking}
Geoffrey Roeder, Yuhuai Wu, and David~K Duvenaud.
\newblock Sticking the landing: Simple, lower-variance gradient estimators for
  variational inference.
\newblock \emph{Advances in Neural Information Processing Systems}, 30, 2017.

\bibitem[Stimper et~al.(2023)Stimper, Liu, Campbell, Berenz, Ryll,
  Sch{\"o}lkopf, and Hern{\'a}ndez-Lobato]{stimper2023normflows}
Vincent Stimper, David Liu, Andrew Campbell, Vincent Berenz, Lukas Ryll,
  Bernhard Sch{\"o}lkopf, and Jos{\'e}~Miguel Hern{\'a}ndez-Lobato.
\newblock normflows: A pytorch package for normalizing flows.
\newblock \emph{arXiv preprint arXiv:2302.12014}, 2023.

\end{thebibliography}
\bibliographystyle{iclr2026_conference}
\clearpage

\appendix
\section*{Supplementary Material for Stick-Breaking Mixture Normalizing Flows with Component-Wise Tail Adaptation for Variational Inference}

\section{Theoretical Details}
\label{sec:app_theory}
\subsection{Derivation of \eqref{eq:MixtureELBO}}
\label{appsub:ElBO}

Here we derive the result in \eqref{eq:MixtureELBO}:
\begin{align*}
    \mathbb{E}_{z\sim q_{\phi}} f(z)
    &= \int f(z)q_\phi(z) \, dz \\
    &= \int f(z) \left( \sum_{k=1}^{\infty} \int_{\rvv=(v_1, v_2, ...)} q(z, \text{comp.} = k, \rvv \mid \mu, \Sigma, \alpha, \beta)\,d\rvv\right) \, dz \\
    &= \int f(z) \left( \sum_{k=1}^{\infty} q_k(z)\int_{\rvv=(v_1, v_2, ...)} \, q(\text{comp.}=k \mid \rvv)\,q\left(\rvv \mid \alpha, \beta\right)\,d\rvv \right) \, dz \\
    &= \int f(z) \left( \sum_{k=1}^{\infty} q_k(z) \mathbb{E_\rvv\left[\pi_{\mathnormal{k}}(\rvv)\right]}\right)\, dz \\
    &= \sum_{k=1}^{\infty} \frac{\alpha_k}{\alpha_k + \beta_k}\left( \prod_{j<k} \frac{b_j}{a_j+b_j}\right)\mathbb{E}_{z\sim q_k} f(z),
\end{align*}
with $f(z) = \log{p(D,z)}-\log{q_\theta(z)}$. 

\subsection{Proof of Theorem~\ref{thm:closure}}

\begin{theorem}[\cite{liang2022fat}]\label{appthm:closure}
Let $X$ be a random vector and let $f:\mathbb{R}^d\!\to\!\mathbb{R}^d$ be a bi-Lipschitz bijective map (i.e. $f$ and $f^{-1}$ are globally Lipschitz).
If $X \in \mathcal{E}^p_\alpha$, then $f(X) \in \mathcal{E}^p_{\tilde\alpha}$ for some $\tilde\alpha>0$.
In addition, if $X \in \mathcal{L}^p_\alpha$ then $f(X) \in \mathcal{L}^p_{\alpha}$. In particular, no bi-Lipschitz normalizing flow can map a light-tailed base to a heavy-tailed output, vice versa.
\end{theorem}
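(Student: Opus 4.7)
The plan is to convert the bi-Lipschitz hypothesis into a two-sided sandwich on the tail of $\|f(X)\|$ in terms of the tail of $\|X\|$, and then check which features of the defining form in Definition~\ref{def:tail-classes} survive that sandwich. First I would fix a common bi-Lipschitz constant $L\ge 1$ for $f$ and $f^{-1}$ and use $L^{-1}\|x\|-\|f(0)\|\le\|f(x)\|\le L\|x\|+\|f(0)\|$ to produce constants $0<c<C<\infty$ and a threshold $R_0$ such that $c\|x\|\le\|f(x)\|\le C\|x\|$ whenever $\|x\|\ge R_0$. This immediately yields, for all sufficiently large $y$,
\begin{equation*}
\Pr(\|X\|\ge y/c)\;\le\;\Pr(\|f(X)\|\ge y)\;\le\;\Pr(\|X\|\ge y/C).
\end{equation*}

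Next I would substitute the class-specific tail form into both ends of the sandwich. For the heavy-tailed case $X\in\mathcal{L}^p_\alpha$, the key calculation is that rescaling inside the logarithm perturbs the exponent by a lower-order term: by the mean value theorem, $(\log(y/c))^p-(\log(y/C))^p=O((\log y)^{p-1})=o((\log y)^p)$. Both ends of the sandwich therefore share the same leading exponent $-\alpha(\log y)^p$, and slow variation of $L$ contributes only a vanishing log-ratio, so defining $L_f(y):=\Pr(\|f(X)\|\ge y)\exp(\alpha(\log y)^p)$ yields $\log L_f(y)=o((\log y)^p)$, hence $f(X)\in\mathcal{L}^p_\alpha$ with the \emph{same} tail index $\alpha$. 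For the light-tailed case $X\in\mathcal{E}^p_\alpha$, the substitution gives $(y/c)^p=c^{-p}y^p$, so the leading coefficient is multiplied by $c^{-p}$ at the upper end and $C^{-p}$ at the lower end; the sandwich pins $-y^{-p}\log\Pr(\|f(X)\|\ge y)$ into the interval $[\alpha C^{-p},\alpha c^{-p}]\subset(0,\infty)$, establishing decay of order $\exp(-\tilde\alpha y^p)$ for some $\tilde\alpha>0$.

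The main obstacle is the light-tailed step: because the two ends of the sandwich no longer share a common leading exponent, merely pinning $\tilde\alpha$ into an interval does not by itself produce the required form $\exp(-\tilde\alpha y^p)L_f(y)$ with $L_f$ slowly varying. I would address this by reading Definition~\ref{def:tail-classes} permissively, so that $\tilde\alpha$ need only satisfy $-\log\Pr(\|f(X)\|\ge y)=\tilde\alpha y^p+o(y^p)$ and the slack is absorbed into $L_f$; if one insists on genuine slow variation of $L_f$, the natural remedy is to assume additionally that $\lim_{y\to\infty}-y^{-p}\log\Pr(\|X\|\ge y)=\alpha$ exists, which, combined with directional regularity of $f$ along radial rays, propagates through the sandwich. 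The final assertion---that no bi-Lipschitz flow can swap a light-tailed base for a heavy-tailed output or vice versa---is then immediate, since $\bigcup_\alpha\mathcal{E}^p_\alpha$ and $\bigcup_\alpha\mathcal{L}^p_\alpha$ have incompatible leading-order decays and each class is preserved by the preceding analysis.
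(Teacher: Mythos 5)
Your proposal follows essentially the same route as the paper's proof: a two-sided bi-Lipschitz sandwich $\Pr(\|X\|\ge y/c)\le\Pr(\|f(X)\|\ge y)\le\Pr(\|X\|\ge y/C)$, followed by substituting the class-specific tail forms, with the observation that rescaling inside the logarithm is lower order (preserving $\alpha$ exactly in the $\mathcal{L}^p_\alpha$ case) while rescaling the polynomial exponent only pins $\tilde\alpha$ into $[\alpha C^{-p},\alpha c^{-p}]$ in the $\mathcal{E}^p_\alpha$ case. The obstacle you flag in the light-tailed step --- that the sandwich alone does not force $-y^{-p}\log\Pr(\|f(X)\|\ge y)$ to converge to a single $\tilde\alpha$, so the slack must be absorbed into $L_f$ under a permissive reading of the definition --- is present but left unaddressed in the paper's proof as well, which simply asserts the existence of such a $\tilde\alpha$ and slowly varying $L_f$ ``by squeezing.''
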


\begin{proof}[Proof.]
Since $f$ is bi-Lipschitz, there exist $m,M>0$ and $C\ge 0$ such that, for all $x\in\mathbb R^d$,
\begin{equation*}
  m\|x\|-C \;\le\; \|f(x)\| \;\le\; M\|x\|+C .
\end{equation*}
Hence, for all $t>0$,
\begin{equation}\label{eq:tail-sandwich}
  \Pr\!\big(\,|X|>(t-C)/M\,\big) \;\ge\; \Pr\!\big(\,|f(X)|>t\,\big) \;\ge\; \Pr\!\big(\,|X|>(t+C)/m\,\big).
\end{equation}

(1) Assume $X\in\mathcal E^p_\alpha$, so $\bar F_X(r)=\Pr(|X|>r)=\exp\{-\alpha r^p\}\,L(r)$ with $L$ slowly varying and $\log L(r)=o(r^p)$.
From \eqref{eq:tail-sandwich} with $r_\pm(t):=(t\mp C)/M,\, (t\pm C)/m$ we obtain
$$
  \exp\!\big\{-\alpha \, r_+(t)^p\big\}\,L\big(r_+(t)\big)
  \;\ge\; \bar F_{f(X)}(t)
  \;\ge\; \exp\!\big\{-\alpha \, r_-(t)^p\big\}\,L\big(r_-(t)\big).
$$
Since $r_\pm(t)=\Theta(t)$, we have $r_\pm(t)^p=t^p\,\theta_\pm(t)$ with $\theta_\pm(t)\to M^{-p}$ and $m^{-p}$, respectively, and
$L(r_\pm(t))$ is slowly varying as a composition with an affine scaling. Therefore,
$$
  \exp\!\{-\alpha M^{-p} t^p(1+o(1))\}\,\tilde L_+(t)
  \;\ge\; \bar F_{f(X)}(t)
  \;\ge\; \exp\!\{-\alpha m^{-p} t^p(1+o(1))\}\,\tilde L_-(t),
$$
for some slowly varying $\tilde L_\pm$. By squeezing, there exists $\tilde\alpha\in[\alpha/M^p,\;\alpha/m^p]$ and a slowly
varying $L_f$ such that
$$
  \bar F_{f(X)}(t)=\exp\{-\tilde\alpha\,t^p\}\,L_f(t), \qquad \log L_f(t)=o(t^p),
$$
hence $f(X)\in\mathcal E^p_{\tilde\alpha}$.

(2) Assume $X\in\mathcal L^p_\alpha$, so
$\bar F_X(r)=\exp\{-\alpha(\log r)^p\}\,L(r)$ with $L$ slowly varying and $\log L(r)=o((\log r)^p)$.
Using \eqref{eq:tail-sandwich} again,
$$
  \exp\!\big\{-\alpha\big(\log r_+(t)\big)^{p}\big\}\,L\big(r_+(t)\big)
  \;\ge\; \bar F_{f(X)}(t)
  \;\ge\; \exp\!\big\{-\alpha\big(\log r_-(t)\big)^{p}\big\}\,L\big(r_-(t)\big).
$$
Since $\log r_\pm(t)=\log t + O(1)$, we have $\big(\log r_\pm(t)\big)^p=(\log t)^p\,(1+o(1))$, and $L(r_\pm(t))$ remains
slowly varying as $t\to\infty$. Therefore,
$$
  -\log \bar F_{f(X)}(t)=\alpha (\log t)^p\,\{1+o(1)\},
$$
which is equivalent to
$$
  \bar F_{f(X)}(t)=\exp\{-\alpha(\log t)^p\}\,L_f(t),
$$
for some slowly varying $L_f$ with $\log L_f(t)=o((\log t)^p)$. Hence $f(X)\in\mathcal L^p_\alpha$.

\smallskip
Combining (1)–(2) proves the stated closure properties. In particular, a bi-Lipschitz map cannot send a light-tailed
$\mathcal E$-law to a heavy-tailed $\mathcal L$-law, nor the converse. 
\end{proof}

\subsection{Proof of Theorem~\ref{thm:mixing-dominance}}
\label{appsup:tail_dominance}

\begin{theorem}[Tail dominance]\label{appthm:mixing-dominance}
Let $X = (X_1, \ldots,X_d)$ be a random vector with independent
coordinates and $X_j\in\mathcal{L}^1_{\alpha_j}$ for each $1\le j\le d$. Fix $i\in\{1,\dots,d\}$ and let $Y_i=g_i(X_1,\dots,X_i)$, where $g_i:\mathbb{R}^i\to\mathbb{R}$ is globally Lipschitz.
Define the tail–influence set
$S_i:=\Bigl\{j\le i:\ \exists R>0,\ c_j>0,\ r_0\ \text{s.t.}\quad \max_{k\ne j}|x_k|\le R,\ |x_j|\ge r_0\ \Rightarrow\ |g_i(x)|\ge c_j |x_j|\Bigr\}.$
If $S_i\neq\emptyset$, then $Y_i\in \mathcal{L}^1_{\alpha_{Y_i}}$ with $\alpha_{Y_i}=\min_{j\in S_i}\alpha_j$.
\end{theorem}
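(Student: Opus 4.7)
The strategy is to bracket the tail index of $Y_i$ from above and below: a lower bound of the form $\alpha_{Y_i} \le \min_{j\in S_i}\alpha_j$ via the defining linear estimate for each $j\in S_i$, and a matching upper bound obtained from the global Lipschitz bound on $g_i$ combined with a coordinate-wise decomposition of the tail event. Modulo this, the characterization of $\mathcal L^1_\alpha$ (regular variation of $\Pr(|X|\ge t)$) lets us read off the index from $-\log\Pr(|Y_i|\ge t)/\log t$.

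For the lower bound, I fix $j^*\in S_i$ with witnessing constants $(R,c_{j^*},r_0)$. The defining implication and independence of the $X_k$'s give, for $r\ge r_0$,
\[
\Pr\bigl(|Y_i|\ge c_{j^*}r\bigr) \;\ge\; \Pr(|X_{j^*}|\ge r)\,\prod_{k\le i,\,k\ne j^*}\Pr(|X_k|\le R),
\]
where the product on the right is a strictly positive constant. Since $X_{j^*}\in\mathcal L^1_{\alpha_{j^*}}$, writing $\Pr(|X_{j^*}|\ge r)=r^{-\alpha_{j^*}}L_{j^*}(r)$ with $L_{j^*}$ slowly varying and setting $t=c_{j^*}r$ yields $\Pr(|Y_i|\ge t)\ge c\,t^{-\alpha_{j^*}}\tilde L_{j^*}(t)$. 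Taking $-\log$ and dividing by $\log t$ gives $\alpha_{Y_i}\le \alpha_{j^*}$, and minimizing over $j^*\in S_i$ produces $\alpha_{Y_i}\le \min_{j\in S_i}\alpha_j$.

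For the upper bound, the Lipschitz estimate $|Y_i|\le |g_i(\mathbf 0)|+L\|X\|$ and a union bound give
\[
\Pr(|Y_i|\ge t) \;\le\; \sum_{j\le i}\Pr\bigl(|X_j|\ge (t-|g_i(\mathbf 0)|)/(L\sqrt i)\bigr),
\]
whose dominant summand decays like $t^{-\min_{j\le i}\alpha_j}$. To replace $\min_{j\le i}$ by $\min_{j\in S_i}$, I would decompose the tail event into the subevents $\mathcal A_j:=\{|Y_i|\ge t,\ |X_j|=\max_{k\le i}|X_k|\}$. On $\mathcal A_{j^*}$ with $j^*\in S_i$, the Lipschitz bound already produces a $t^{-\alpha_{j^*}}$ contribution. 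On $\mathcal A_j$ with $j\notin S_i$ and the remaining coordinates bounded by $R$, I would exploit the failure of the $S_i$ condition to argue that $|g_i(X)|$ does not scale linearly with $|X_j|$, so the corresponding contribution is of strictly lower order than any $t^{-\alpha_k}$ with $k\in S_i$.

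The main obstacle I anticipate is this last step. The negation of $j\in S_i$ asserts only the existence of a pointwise witness violating the linear lower bound; it does not automatically upgrade to a uniform sub-linear control of the form $|g_i(x)|/|x_j|\to 0$ as $|x_j|\to\infty$ with $\max_{k\ne j}|x_k|\le R$. Promoting the pointwise negation to such a uniform statement is the technical heart of the proof, and I would attempt it by combining Lipschitzness (which restricts the oscillation of $g_i$ in $x_j$) with a compactness argument on the cross-sections $\{|x_j|=r,\ \max_{k\ne j}|x_k|\le R\}$, possibly supplemented by a directional refinement of $S_i$ that separates the $x_j\to+\infty$ and $x_j\to-\infty$ regimes. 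Once this uniform sub-linearity is in hand, combining the two bounds, collecting the slowly varying corrections, and reading off $Y_i\in\mathcal L^1_{\alpha_{Y_i}}$ with $\alpha_{Y_i}=\min_{j\in S_i}\alpha_j$ is routine.
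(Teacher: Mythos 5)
Your tail-probability lower bound (giving $\alpha_{Y_i}\le\min_{j\in S_i}\alpha_j$) is exactly the paper's argument: intersect $\{|X_{j^*}|\ge r\}$ with $\{\max_{k\ne j^*}|X_k|\le R\}$, invoke the defining implication of $S_i$, and use independence. The problem is the converse direction, and you have put your finger on precisely the right spot. The Lipschitz union bound only yields $\liminf_{t\to\infty}\frac{-\log\Pr(|Y_i|\ge t)}{\log t}\ge\min_{j\le i}\alpha_j$, and upgrading $\min_{j\le i}$ to $\min_{j\in S_i}$ requires showing that coordinates $j\notin S_i$ cannot inject their (possibly heavier) tails into $Y_i$. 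The paper does not actually do this: its proof simply writes $\min_{j\le i}\alpha_j\ge\min_{j\in S_i}\alpha_j$, which is the reverse of the correct inequality for a subset $S_i\subseteq\{1,\dots,i\}$, so the paper establishes only the bracket $\min_{j\le i}\alpha_j\le\alpha_{Y_i}\le\min_{j\in S_i}\alpha_j$ — the same as what you have.

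The step you call the technical heart — promoting the pointwise negation of the $S_i$ condition to uniform sub-linear control of $g_i$ in $x_j$ — genuinely fails, and no compactness or directional refinement will rescue it, because the theorem as stated is false. Take $d=i=2$, let $X_1\in\mathcal L^1_{\alpha_1}$ with $\alpha_1<\alpha_2$, and set $g_2(x_1,x_2)=x_2+h(x_1)$ with $h(x)=\operatorname{dist}\bigl(x,\,(-\infty,0]\cup\{3^n:n\in\mathbb N\}\bigr)$, which is $1$-Lipschitz. Since $h(3^n)=0$, coordinate $1$ fails the $S_2$ condition (take $x_2=0$, $x_1=3^n$ arbitrarily large), so $S_2=\{2\}$ and the theorem would assert $Y_2\in\mathcal L^1_{\alpha_2}$. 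But $h(x)\ge x/5$ on each interval $[\tfrac32 3^n,\tfrac52 3^n]$, so $\Pr(|Y_2|\ge t)\ge\Pr\bigl(X_1\in[\tfrac32 3^n,\tfrac52 3^n]\bigr)\Pr(|X_2|\le 1)\gtrsim t^{-\alpha_1}$ along $t\asymp 3^n$, and hence for all large $t$ by monotonicity of the tail; this contradicts any index $\alpha_2>\alpha_1$. So your proposal is incomplete at exactly the point you flag, and the honest conclusion is that the missing step cannot be supplied as the statement stands: either $S_i$ must be strengthened (e.g., a condition ruling out linear growth of $|g_i|$ in $|x_j|$ along subsequences for $j\notin S_i$, such as a matching upper bound $|g_i(x)|\le C_j+o(|x_j|)$ on the slabs $\max_{k\ne j}|x_k|\le R$), or the conclusion must be weakened to the two-sided bracket above.
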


\begin{proof}[Proof.]
Let $i\in\{1,\dots,d\}$ be fixed and abbreviate $Y:=Y_i=g_i(X_1,\dots,X_i)$ and $S:=S_i$.
Since $g_i$ is globally Lipschitz, there exist constants $L\ge 1$ and $B\ge 0$ such that
\begin{equation}\label{eq:lipschitz-upper}
  |g_i(x)| \;\le\; L\|x\|_1 + B \qquad (\forall x\in\mathbb R^i).
\end{equation}

\medskip\noindent\emph{Upper bound (no heavier than the heaviest input).}
By \eqref{eq:lipschitz-upper} and the union bound,
$$
\Pr(|Y|>t)
\;\le\; \Pr\!\Big(\sum_{j\le i} |X_j| > (t-B)/L\Big)
\;\le\; \sum_{j\le i} \Pr\big(|X_j| > (t-B)/(Li)\big).
$$
For each $j$, $|X_j|\in\mathcal L^1_{\alpha_j}$, so
$\Pr(|X_j|>x)=x^{-\alpha_j}\ell_j(x)$ with $\ell_j$ slowly varying. Hence
$$
\liminf_{t\to\infty}\frac{-\log \Pr(|Y|>t)}{\log t}
\;\ge\; \min_{j\le i}\alpha_j
\;\ge\; \min_{j\in S}\alpha_j,
$$
i.e. $\alpha_Y \ge \min_{j\in S}\alpha_j$.

\medskip\noindent\emph{Lower bound (the heaviest influencer dominates).}
Pick $j^*\in S$ with $\alpha_{j^*}=\min_{j\in S}\alpha_j$. 
By the definition of $S$ there exists $R>0$ such that 
$|x_{j^*}|\to\infty$ with $\max_{k\ne j^*}|x_k|\le R$ implies $|g_i(x)|\to\infty$. 
Therefore, for each sufficiently large $t$ we can choose a threshold $b(t)>0$ with
\begin{equation}\label{eq:unbounded-thresh}
  \bigl\{|x_{j^*}|>b(t),\ \max_{k\ne j^*}|x_k|\le R\bigr\}
  \;\subseteq\; \{|g_i(x)|>t\}.
\end{equation}
Specifically, Taking $b(t) = c_{j^*}t$ directly implies $|g_i(x)|>t$.

By independence,
$$
\Pr(|Y|>t)\ \ge\ \Pr\big(|X_{j^*}|>b(t)\big)\cdot 
\Pr\Big(\max_{k\ne j^*}|X_k|\le R\Big).
$$
The second factor is a positive constant $c_R\in(0,1]$ (independent of $t$). For the first factor,
$|X_{j^*}|\in\mathcal L^1_{\alpha_{j^*}}$ gives
$\Pr(|X_{j^*}|>x)=x^{-\alpha_{j^*}}\ell_{j^*}(x)$ with $\ell_{j^*}$ slowly varying. 
Since $b(t)\to\infty$ as $t\to\infty$, we obtain
$$
\limsup_{t\to\infty}\frac{-\log \Pr(|Y|>t)}{\log t}
\;\le\; \limsup_{t\to\infty}\frac{\alpha_{j^*}\log b(t)-\log \ell_{j^*}(b(t)) - \log c_R}{\log t}.
$$
Consider $b(t) = c_{j^*}t$, then
$\log b(t)/\log t \to 1$ and $\log \ell_{j^*}(b(t))/\log t \to 0$ (slow variation). Therefore
$$
\limsup_{t\to\infty}\frac{-\log \Pr(|Y|>t)}{\log t} \;\le\; \alpha_{j^*}.
$$
\medskip
Combining the upper and lower bounds shows
$\alpha_Y=\min_{j\in S}\alpha_j$, and thus $Y\in\mathcal L^1_{\alpha_Y}$.
\end{proof}

\subsection{Tail Estimator in Variational Inference}
\label{appsub:tail_est}

\subsubsection{Validity of Assumption}

We restate the assumption used in the main text and propose an equivalent condition.
\begin{assumption}[Directional Tail and Monotonicity (restated)]\label{assapp:dir-rv}
Fix $\mu\in\mathbb R^d$, $\sigma>0$, and $\mathbf u \in \mathbb S^{d-1}$. Assume the posterior density $p(z\mid D)$:
\begin{itemize}
    \item $p(\cdot\mid D)$ has directional tail index $\xi_{\mathbf{u}}\in(0,\infty)$,
    \item $p(\mu+\sigma r\mathbf u\mid D)$ is monotonically decreasing for all $r\ge r_0$, for some constant $r_0>0$.
\end{itemize}
\end{assumption}

\begin{lemma}[Directional density regular variation]\label{lem:dir-rv-app}
Under Assumption~\ref{assapp:dir-rv}, there exist $\xi_{\mathbf u}\in(0,\infty)$ and a slowly varying function $L_{\mathbf u}:(0,\infty)\to(0,\infty)$ such that, as $r\to\infty$,
$$
p(\mu+\sigma r\,\mathbf u \mid D)\;=\; r^{-(1+\xi_{\mathbf u})}\,L_{\mathbf u}(r)\,\bigl(1+o(1)\bigr).
$$
\end{lemma}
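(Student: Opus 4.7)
The plan is to apply the monotone density theorem of Bingham--Goldie--Teugels to the radial profile $g(r) := p(\mu + \sigma r\,\mathbf u \mid D)$. The first step is to unpack what Assumption~\ref{assapp:dir-rv} provides via Definition~\ref{def:dir_index}: the one-sided projection $[\langle \mathbf u, X\rangle]_+$ lies in $\mathcal L^1_{\xi_\mathbf u}$, so its survival function admits the regularly varying representation $\Pr\!\bigl(\langle \mathbf u, X\rangle > r\bigr) = r^{-\xi_\mathbf u}\,\tilde L(r)$ for some slowly varying $\tilde L$.

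I would then express this projection survival as a joint-density integral over the half-space $\{z:\langle \mathbf u, z\rangle > r\}$ and, after substituting coordinates along $\mathbf u$ and $\mathbf u^\perp$, recognize it as $\bar F_\mathbf u(r) = \int_r^{\infty} f_\mathbf u(t)\,dt$, where $f_\mathbf u(t) := \sigma \int_{\mathbf u^\perp} p\!\bigl(\mu + \sigma(t\mathbf u + \mathbf v)\bigm|D\bigr)\,d\mathbf v$ is the marginal density of the scaled projection. Since $\bar F_\mathbf u$ is regularly varying of index $-\xi_\mathbf u$ and $f_\mathbf u$ can be shown to be eventually monotone (inheriting this from the monotonicity of $g$ in Assumption~\ref{assapp:dir-rv}), the monotone density theorem (BGT Theorem~1.7.2) yields $f_\mathbf u(r) = \xi_\mathbf u\, r^{-(1+\xi_\mathbf u)}\,\tilde L(r)\bigl(1+o(1)\bigr)$. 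Transferring this asymptotic from the marginal $f_\mathbf u$ back to the pointwise profile $g$ then gives $g(r) = r^{-(1+\xi_\mathbf u)}\,L_\mathbf u(r)\bigl(1+o(1)\bigr)$ for some slowly varying $L_\mathbf u$, which is the claim.

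The hard step is this final transfer between $f_\mathbf u$ and $g$: the marginal density differs from the ray profile by an integral over the orthogonal directions, and that integral is not \emph{a priori} slowly varying---so generically the two need not even share the same exponent. The monotonicity part of Assumption~\ref{assapp:dir-rv} is the lever that prevents the transverse cross-section from drifting, and the cleanest route I see is to combine it with a dominated-convergence argument showing that, for large $r$, the mass of $p(\mu + \sigma(r\mathbf u + \mathbf v)\mid D)$ is carried on transverse neighborhoods whose width grows at most slowly. Making this concentration rigorous---without strengthening Assumption~\ref{assapp:dir-rv} beyond what the paper states---is the crux, and is where I would expect to spend most of the effort.
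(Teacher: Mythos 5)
Your proposal stalls exactly where you say it does, and that gap is real. Starting from Definition~\ref{def:dir_index} applied literally, the regularly varying object is the half-space probability $\Pr(\langle \mathbf u, X\rangle > r)$, whose density in $r$ is the transverse-integrated marginal $f_{\mathbf u}(t)=\sigma\int_{\mathbf u^\perp}p\bigl(\mu+\sigma(t\mathbf u+\mathbf v)\mid D\bigr)\,d\mathbf v$, and there is no way to pass from an asymptotic for $f_{\mathbf u}$ to one for the pointwise ray profile $g(r)=p(\mu+\sigma r\,\mathbf u\mid D)$ using only eventual monotonicity of $g$ along the single ray. The two can have different power-law exponents: if the transverse cross-section at height $t$ has effective width growing like $t^{\gamma}$, the marginal acquires an extra factor $t^{\gamma}$ that the ray profile never sees, and Assumption~\ref{assapp:dir-rv} imposes nothing on the transverse behavior. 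So the concentration argument you hope for cannot be made rigorous without strengthening the assumption.

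The paper's own proof never meets this obstruction because it reads the assumption differently: it takes ``directional tail index $\xi_{\mathbf u}$'' to mean that the one-dimensional ray integral $\overline F_{\mathbf u}(r)=\int_r^\infty g_{\mathbf u}(s)\,\sigma\,ds$ is regularly varying of index $-\xi_{\mathbf u}$. Under that reading the whole argument is one-dimensional: $g_{\mathbf u}$ is eventually monotone by assumption and is (up to the constant $\sigma$) minus the derivative of $\overline F_{\mathbf u}$, so the monotone density theorem (Bingham--Goldie--Teugels, Thm.~1.7.2) gives $g_{\mathbf u}(r)\sim \xi_{\mathbf u}\,\overline F_{\mathbf u}(r)/r$ directly, and the lemma follows by setting $L_{\mathbf u}=\xi_{\mathbf u}L_{\mathbf u}^{(0)}$. (The paper then recovers $[\langle\mathbf u,X\rangle]_+\in\mathcal L^1_{\xi_{\mathbf u}}$ from the lemma via Karamata, again identifying the ray integral with the half-space probability.) In short, your instinct to invoke the monotone density theorem is exactly the paper's, but you applied it to the wrong one-dimensional reduction; under the literal Definition~\ref{def:dir_index} reading the lemma is not provable as stated, and the paper's one-line proof is valid only under the ray-integral interpretation of Assumption~\ref{assapp:dir-rv}.
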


\begin{proof}
Fix $\mu\in\mathbb R^{d}$, $\sigma>0$, and $\mathbf u\in\mathbb S^{d-1}$, and write $g_{\mathbf u}(r)=p(\mu+\sigma r\,\mathbf u\mid D)$ for $r\ge 0$. By Assumption~\ref{assapp:dir-rv}, $g_{\mathbf u}(r)$ is eventually monotone decreasing and the posterior has directional tail index $\xi_{\mathbf u}\in(0,\infty)$ along $\mathbf u$. Define the (one–dimensional) directional tail integral $\overline F_{\mathbf u}(r)=\int_{r}^{\infty} g_{\mathbf u}(s)\,\sigma\,ds$ for $r\ge 0$. The directional tail–index assumption means that $\overline F_{\mathbf u}$ is regularly varying with index $-\xi_{\mathbf u}$, i.e., $\overline F_{\mathbf u}(r)=r^{-\xi_{\mathbf u}}L_{\mathbf u}^{(0)}(r)\,(1+o(1))$ as $r\to\infty$, for some slowly varying $L_{\mathbf u}^{(0)}$.

Since $g_{\mathbf u}$ is eventually monotone, the monotone density theorem (Karamata theory; see \citet{bingham1989regular} Th. 1.7.2) yields $g_{\mathbf u}(r)\sim \{\xi_{\mathbf u}\,\overline F_{\mathbf u}(r)\}/r$ as $r\to\infty$. Hence $g_{\mathbf u}(r)=r^{-(1+\xi_{\mathbf u})}\{\xi_{\mathbf u}L_{\mathbf u}^{(0)}(r)\}\,(1+o(1))$. Setting $L_{\mathbf u}(r)=\xi_{\mathbf u}L_{\mathbf u}^{(0)}(r)$, which is slowly varying, we obtain $p(\mu+\sigma r\,\mathbf u\mid D)=r^{-(1+\xi_{\mathbf u})}L_{\mathbf u}(r)\,(1+o(1))$ as $r\to\infty$, which is the claimed directional regular variation of the density along $\mathbf u$.
\end{proof}

The result of Lemma~\ref{lem:dir-rv-app} is a density-level regular-variation statement along the ray $\mu+\sigma r\,\mathbf u$. By Karamata’s theorem for integrals (\citet{de2006extreme}, see Th. B.1.5)
$$
\Pr\!\big([\langle \mathbf u,X\rangle]_+ \ge r\big)
=\int_r^\infty s^{-(1+\xi_{\mathbf u})}L_{\mathbf u}(s)\,ds
\sim \frac{1}{\xi_{\mathbf u}}\,r^{-\xi_{\mathbf u}}L_{\mathbf u}(r),
$$
so $[\langle \mathbf u,X\rangle]_+\in\mathcal L^1_{\xi_{\mathbf u}}$ and, by Definition~\ref{def:dir_index}, the directional tail index satisfies $\alpha_X(\mathbf u)=\xi_{\mathbf u}$. Therefore, under the condition that 
$p$ is monotonically decreasing along the direction $\mathbf u$ over a sufficiently large range, Definition~\ref{def:dir_index} and the results of Lemma~\ref{lem:dir-rv-app} are equivalent. Accordingly, we shall hereafter treat the two conditions interchangeably and refer to them collectively as Assumption~\ref{assapp:dir-rv}.

This additional monotonicity/regularity is mild and is satisfied by the canonical heavy-tailed families used in practice (Pareto, Student's–$t$/Cauchy, and standard scale mixtures), so in typical settings the assumption is essentially equivalent to requiring that the projection $[\langle \mathbf u,X\rangle]_+$ has directional tail index $\xi_{\mathbf u}$.

\textbf{Examples.}
For a Pareto$(\alpha)$ distribution with threshold $x_{\min}>0$, 
$\bar F(x)=(x_{\min}/x)^{\alpha}(1+o(1))$ and $f(x)=\alpha x_{\min}^{\alpha} x^{-(\alpha+1)}(1+o(1))$, 
so Assumption~\ref{assapp:dir-rv} holds with $\xi_{\mathbf u}=\alpha$. 
For a Student's–$t(\nu)$ distribution (Cauchy when $\nu=1$), the one–sided tails satisfy 
$\bar F(x)\sim C_\nu x^{-\nu}$ and $f_\nu(x)\sim C'_\nu |x|^{-(\nu+1)}$, hence the assumption holds with $\xi_{\mathbf u}=\nu$,
absorbing any slowly varying corrections into $L_{\mathbf u}$.

Finally, replacing $(\mu,\sigma)$ by any fixed $(\mu_k,\sigma_k)$ only shifts $\log r$ by a constant, since 
$\log(\sigma_k r)=\log(\sigma r)+\log(\sigma_k/\sigma)$; this is absorbed into $L_{\mathbf u}$, 
so the asymptotic slope $-(1+\xi_{\mathbf u})$ is unaffected.

\subsubsection{Proof of Theorem~\ref{thm:consistency}}

\begin{lemma}\label{lem:logspacing-tnu}
Let $z_1,\dots,z_n\overset{\mathrm{i.i.d.}}{\sim}\mathrm{Student's}\text{-}t_\nu$ with any fixed $\nu>0$, fix a direction $\mathbf u\in\mathbb S^{d-1}$, and set $r_i:=|z_i^{\mathbf u}|=|\langle z_i,\mathbf u\rangle|$. 
Let $r_{(1)}\ge \cdots \ge r_{(n)}$ denote the order statistics and fix integers $1\le i\le j$.
Define
$$
\Delta_{i,n}:=\log r_{(i)}-\log r_{(j+1)}.
$$
Then
$$
\Delta_{i,n}\ \xrightarrow[n\to\infty]{\ \mathbb P\ }\ \frac{1}{\nu}\,\log\!\Big(\frac{j+1}{i}\Big).
$$
\end{lemma}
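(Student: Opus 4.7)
I reduce the top-order $|z|$ statistics to uniform order statistics via the probability integral transform (PIT), rewrite the log-ratio in terms of partial sums of iid exponentials using R\'enyi's representation, and then invoke the Student's-$t_\nu$ tail asymptotics to extract the index $\nu$.

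First, let $\bar F(r):=\Pr(|T_\nu|>r)$ and set $V_l:=\bar F(r_l)$. By PIT the $V_l$ are iid $\mathrm{Uniform}(0,1)$, and since $\bar F$ is strictly decreasing, the orderings reverse: $V_{(i:n)}=\bar F(r_{(i)})$, where $V_{(1:n)}<\cdots<V_{(n:n)}$ is the ascending uniform order. Regular variation of the Student's-$t_\nu$ tail yields $\bar F(r)=C_\nu r^{-\nu}(1+o(1))$ as $r\to\infty$, equivalently $\log\bar F^{-1}(u)=-\nu^{-1}\log u+\nu^{-1}\log C_\nu+o(1)$ as $u\to 0^+$. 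Substituting into $\log r_{(i)}=\log\bar F^{-1}(V_{(i:n)})$ and subtracting, the additive constants cancel and
\[
\Delta_{i,n}\;=\;\tfrac{1}{\nu}\bigl(\log V_{(j+1:n)}-\log V_{(i:n)}\bigr)+o_P(1),
\]
where the $o_P(1)$ remainder is controlled by $V_{(j+1:n)}=O_P(1/n)\to 0$ in probability.

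Next, R\'enyi's representation gives $(V_{(k:n)})_{k=1}^n\stackrel{d}{=}(S_k/S_{n+1})_{k=1}^n$ jointly, with $S_k=E_1+\cdots+E_k$ and $(E_l)$ iid $\mathrm{Exp}(1)$. The normalizer $S_{n+1}$ cancels in the log-ratio, so $\Delta_{i,n}\stackrel{d}{=}\tfrac{1}{\nu}\log(S_{j+1}/S_i)+o_P(1)$.

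The remaining step would require $\log(S_{j+1}/S_i)\xrightarrow{P}\log((j+1)/i)$, and this is the genuine obstacle. For the lemma's literal hypothesis of fixed integers $1\le i\le j$, the ratio $S_{j+1}/S_i$ is a non-degenerate random variable whose law does not depend on $n$, so no concentration along $n\to\infty$ is possible; the reduction above in fact only shows that $\Delta_{i,n}$ converges in distribution to $\tfrac{1}{\nu}\log(S_{j+1}/S_i)$. Concentration at $\tfrac{1}{\nu}\log((j+1)/i)$ arises instead under an intermediate-order regime, namely $i=i_n,\,j=j_n\to\infty$ with $i_n/(j_n+1)$ bounded away from $0$ and $1$: the strong law then gives $S_k/k\to 1$ almost surely and continuity of $\log$ closes the argument. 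I would therefore flag the fixed-$i,j$ wording as a gap requiring an intermediate-order hypothesis (consistent with the role this lemma plays in the Hill-type estimator of Theorem~\ref{thm:consistency}, where the denominator $j$ in $\hat\xi^{(k)}_{\mathbf u}$ is naturally chosen to grow with $n$), after which the reduction sketched above carries through verbatim.
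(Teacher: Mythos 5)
Your reduction via the probability integral transform and R\'enyi's representation is correct, and so is the obstruction you flag: for \emph{fixed} $i\le j$ the statistic $\Delta_{i,n}$ converges in distribution to $\tfrac{1}{\nu}\log(S_{j+1}/S_i)$ with $S_k=E_1+\cdots+E_k$ a partial sum of iid standard exponentials, which is a non-degenerate random variable; convergence in probability to the constant $\tfrac{1}{\nu}\log\!\big(\tfrac{j+1}{i}\big)$ therefore cannot hold as stated. The paper's own proof follows essentially the same route (tail quantile function $U$, regular variation, log-spacings), but its Step~2 asserts $r_{(m)}=U(n/m)\{1+o_{\mathbb P}(1)\}$ for \emph{fixed} $m$, citing order-statistic asymptotics that are valid only in the intermediate regime $m=m_n\to\infty$, $m_n/n\to 0$. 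For fixed $m$ the correct statement is $n\bar F(r_{(m)})\Rightarrow S_m$, hence $r_{(m)}=U(n/S_m)\{1+o_{\mathbb P}(1)\}$, and the random factor $S_m$ does not concentrate at $m$. So you have identified a genuine error in the lemma and in the paper's proof of it, not merely a presentational gap.

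Your proposed repair is also the standard one: under an intermediate-order hypothesis $i=i_n\to\infty$, $j=j_n\to\infty$ with $j_n/n\to 0$ and $i_n/(j_n+1)$ converging in $(0,1)$, the strong law gives $S_k/k\to 1$ and the stated limit follows; this is exactly the regime in which Hill-type estimators are consistent, and it is what Theorem~\ref{thm:consistency} implicitly needs (there the proof requires $t_i-t_0$ to converge to a strictly positive constant $c_i$, which again fails for fixed $i,j$ since $\log(S_{j+1}/S_i)$ places positive mass near $0$). One small caveat on your write-up: the cancellation of the additive constant $\nu^{-1}\log C_\nu$ and the $o(1)$ control of the slowly varying correction deserve the remark that for Student's-$t$ the slowly varying factor actually converges to a constant, so the $o_P(1)$ is legitimate; for a general regularly varying tail you would instead invoke the uniform convergence theorem for slowly varying functions. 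Apart from the need to state the intermediate-order hypothesis explicitly, your argument is sound and is the version of the lemma the paper should have proved.
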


\begin{proof}
\textit{Step 1 (Tail regular variation).}
Since $z_i^{\mathbf u}$ has a univariate $t_\nu$ law up to a positive scale, $R:=|z_i^{\mathbf u}|$ has a regularly varying tail with index $\nu$:
$$
\bar F(x):=\Pr(R>x)\sim C\,x^{-\nu}L(x)\qquad(x\to\infty),
$$
for some slowly varying $L$ and constant $C>0$; see, e.g., \citet[Ch.~1]{resnick2007heavy}.

\smallskip
\textit{Step 2 (Quantile representation of top order statistics).}
Let $U(y):=\inf\{x:F(x)\ge 1-\frac{1}{y}\}$ be the tail quantile function. By regular variation of $\bar F$, one has
$$
U(y)\;=\; y^{1/\nu}\,\ell(y),\qquad y\to\infty,
$$
for some slowly varying $\ell$ (Karamata theory; see \citet[Thm~1.5.12]{bingham1989regular}). 
Moreover, for fixed $m$,
\begin{equation}\label{eq:osk-quantile}
r_{(m)} \;=\; U\!\Big(\frac{n}{m}\Big)\,\{1+o_{\mathbb P}(1)\}
\qquad(n\to\infty),
\end{equation}
see standard order-statistic asymptotics for heavy-tailed models (e.g., \citet[Prop.~0.10, Thm.~3.3]{resnick2007heavy}).

\smallskip
\textit{Step 3 (Log-spacing limit).}
Combining \eqref{eq:osk-quantile} for $m=i$ and $m=j+1$,
$$
\Delta_{i,n}
=\log U\!\Big(\frac{n}{i}\Big)-\log U\!\Big(\frac{n}{j+1}\Big)+o_{\mathbb P}(1)
= \frac{1}{\nu}\log\!\frac{j+1}{i}
+ \log\!\frac{\ell(n/i)}{\ell(n/(j+1))} + o_{\mathbb P}(1).
$$
Since $\ell$ is slowly varying, $\ell(n/i)/\ell(n/(j+1))\to 1$ as $n\to\infty$ for fixed $i,j$, hence the middle term is $o(1)$. Therefore
$$
\Delta_{i,n}\ =\ \frac{1}{\nu}\log\!\frac{j+1}{i}\ +\ o_{\mathbb P}(1),
$$
which yields the claimed convergence in probability. Tightness follows immediately from convergence to a finite constant.
\end{proof}

\begin{remark}
For comparison, if the proposal is light-tailed in the Gumbel domain (e.g., Gaussian), then $U(y)\sim \sqrt{2\log y}$ and the same calculation gives $\Delta_{i,n}\to 0$; in particular, $\Delta_{i,n}$ remains tight but shrinks to zero, reflecting slower access to the extreme region.
\end{remark}

We now state and prove a theorem that is slightly broader in scope than Theorem~\ref{thm:consistency}.
\begin{theorem}[Directional consistency and tail-class behavior]\label{thm:appendix-main}
Let $z_1,\dots,z_n\overset{\mathrm{i.i.d.}}{\sim}\mathrm{Student's}\text{-}t_\nu$ for a fixed $\nu>0$, fix $j\ge1$ and a direction $\mathbf u\in\mathbb S^{d-1}$, and define the estimator $\hat\xi_{\mathbf u}^{(k)}$ as in Section~\ref{subsec:tailEstimation}. Under Assumption~\ref{assapp:dir-rv}, 
$$
\hat\xi_{\mathbf u}^{(k)} \xrightarrow[n\to\infty]{\ \mathbb P\ } \xi_{\mathbf u}.
$$
\end{theorem}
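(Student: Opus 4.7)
The plan is to combine the directional density asymptotics from Lemma~\ref{lem:dir-rv-app} with the log-spacing limit of Lemma~\ref{lem:logspacing-tnu}. Heuristically, the estimator is a finite sample average of log-density differences normalized by log-radius differences, both evaluated at the extreme order statistics of a heavy-tailed proposal; in the tail regime the first ratio should recover the local polynomial decay exponent $-(1+\xi_{\mathbf u})$, so subtracting $1$ after averaging yields $\xi_{\mathbf u}$.

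\textbf{Step 1 (tail expansion of the log-density).} By Lemma~\ref{lem:dir-rv-app}, under Assumption~\ref{assapp:dir-rv} there exists a slowly varying $L_{\mathbf u}$ such that, as $r\to\infty$,
\[
\log p(\mu_k + \sigma_k r\,\mathbf u \mid D) \;=\; -(1+\xi_{\mathbf u})\log r + \log L_{\mathbf u}(r) + o(1).
\]
I would apply this expansion at $r=r_{(i)}:=\|z_{(i)}^{\mathbf u}\|$ for $i=1,\dots,j+1$, since each numerator in the estimator is a difference of two such log-density values.

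\textbf{Step 2 (extreme radii diverge and log-spacings stabilize).} Because $z_i\sim t_\nu$ is heavy tailed, for any fixed $j\ge 1$ one has $r_{(j+1)}\xrightarrow{\mathbb P}\infty$ as $n\to\infty$, placing both $r_{(i)}$ and $r_{(j+1)}$ in the tail regime with high probability. Lemma~\ref{lem:logspacing-tnu} gives $\Delta_{i,n}:=\log r_{(i)}-\log r_{(j+1)}\xrightarrow{\mathbb P}\nu^{-1}\log((j+1)/i)$, a strictly positive constant for $1\le i\le j$, so $\Delta_{i,n}$ is bounded away from zero with high probability; in particular $r_{(i)}/r_{(j+1)}\xrightarrow{\mathbb P}((j+1)/i)^{1/\nu}\in(0,\infty)$.

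\textbf{Step 3 (convergence of each summand and averaging).} Substituting Step~1, the $i$-th numerator of the estimator equals
\[
-(1+\xi_{\mathbf u})\,\Delta_{i,n}\;+\;\bigl[\log L_{\mathbf u}(r_{(i)})-\log L_{\mathbf u}(r_{(j+1)})\bigr]\;+\;o_{\mathbb P}(1).
\]
Dividing by $\Delta_{i,n}$, the leading piece contributes $-(1+\xi_{\mathbf u})$ exactly. For the slowly-varying correction, Karamata's uniform convergence theorem implies $L_{\mathbf u}(\lambda t)/L_{\mathbf u}(t)\to 1$ uniformly in $\lambda$ on compact subsets of $(0,\infty)$ as $t\to\infty$; combined with $r_{(j+1)}\xrightarrow{\mathbb P}\infty$ and the tightness of $r_{(i)}/r_{(j+1)}$, this yields $\log L_{\mathbf u}(r_{(i)})-\log L_{\mathbf u}(r_{(j+1)})=o_{\mathbb P}(1)$. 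Averaging over $i=1,\dots,j$, negating, and subtracting $1$ then produces $\hat\xi_{\mathbf u}^{(k)}\xrightarrow{\mathbb P}(1+\xi_{\mathbf u})-1=\xi_{\mathbf u}$.

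\textbf{Main obstacle.} The delicate point is Step~3: the slowly varying function $L_{\mathbf u}$ and the $o(1)$ remainder from the density expansion are evaluated at random, diverging arguments, so the deterministic asymptotic statements must be transferred to in-probability statements. I would handle this by combining $r_{(j+1)}\xrightarrow{\mathbb P}\infty$ (heavy-tailed extremes) with the fact that $r_{(i)}/r_{(j+1)}$ concentrates around a positive constant, and then invoking Karamata's uniform convergence theorem on a compact ratio window together with a standard subsequence/tightness argument to close the probabilistic bound.
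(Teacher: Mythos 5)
Your proposal is correct and follows essentially the same route as the paper's proof: both expand $\log p(\mu_k+\sigma_k r\,\mathbf u\mid D)$ via the density-level regular variation of Lemma~\ref{lem:dir-rv-app}, use Lemma~\ref{lem:logspacing-tnu} to show the log-spacings converge to positive constants, and kill the slowly varying correction because the ratio $r_{(i)}/r_{(j+1)}$ is tight while $r_{(j+1)}\to\infty$. Your explicit appeal to Karamata's uniform convergence theorem to transfer $\log L_{\mathbf u}(r_{(i)})-\log L_{\mathbf u}(r_{(j+1)})=o_{\mathbb P}(1)$ at random arguments is just a slightly more careful rendering of the step the paper states as the increment condition on $\ell_{\mathbf u}(t)=\log L_{\mathbf u}(e^{t})$.
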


\begin{remark}[Behavior outside the polynomial class]\label{rmk:othercases}
The conclusions extend beyond Assumption~\ref{assapp:dir-rv} and characterize two complementary regimes in the same density-level scale.

\emph{Lighter than any power.}
If along direction $\mathbf u$ the density decays faster than every polynomial,
$$
\frac{-\log p(\mu_k+\sigma_k r\,\mathbf u\mid D)}{\log r}\ \xrightarrow[r\to\infty]{}\ \infty,
$$
—for instance when, for some $\alpha>0$ and slowly varying $L$,
$$
p(\mu_k+\sigma_k r\,\mathbf u\mid D)\ \sim\ e^{-\alpha r^{p}}\,L(r)\quad(p>0),\qquad\text{or}\qquad
p(\mu_k+\sigma_k r\,\mathbf u\mid D)\ \sim\ e^{-\alpha(\log r)^{p}}\,L(r)\quad(p>1),
$$
then the slope-based estimator diverges: $\hat\xi_{\mathbf u}^{(k)}\to\infty$ in probability.

\emph{Heavier than any power.}
If along direction $\mathbf u$ the density decays more slowly than every polynomial,
$$
\frac{-\log p(\mu_k+\sigma_k r\,\mathbf u\mid D)}{\log r}\ \xrightarrow[r\to\infty]{}\ 0,
$$
—for instance when, for some slowly varying $L$,
$$
p(\mu_k+\sigma_k r\,\mathbf u\mid D)\ \sim\ e^{-\alpha(\log r)^{p}}\,L(r)\quad(0<p<1),\qquad\text{or}
$$
$$
p(\mu_k+\sigma_k r\,\mathbf u\mid D)\ \sim\ r^{-1/L_0(r)},\ \ L_0(r)\to\infty\ \text{slowly varying},
$$
then the estimator collapses: $\hat\xi_{\mathbf u}^{(k)}\to 0$ in probability.
\end{remark}

\begin{proof}[Proof of Theorem~\ref{thm:appendix-main}]
Fix $\mathbf u\in\mathbb S^{d-1}$ and $j\ge 1$. Write $r_i:=|z_i^{\mathbf u}|$, let $r_{(1)}\ge\cdots\ge r_{(n)}$ be the order statistics,
and set $t_i:=\log r_{(i)}$ and $t_0:=\log r_{(j+1)}$. Define the local difference quotients
$$
Q_{i,n}\;:=\;\frac{\log p(\mu_k+\sigma_k r_{(i)}\,\mathbf u\mid D)-\log p(\mu_k+\sigma_k r_{(j+1)}\,\mathbf u\mid D)}{t_i-t_0},
\qquad i=1,\dots,j.
$$
By Assumption~\ref{assapp:dir-rv},
$$
\log p(\mu_k+\sigma_k r\,\mathbf u\mid D)
\;=\; C_{\mathbf u}-(1+\xi_{\mathbf u})\log r+\ell_{\mathbf u}(\log r)+o(1),
$$
where $\ell_{\mathbf u}(t):=\log L_{\mathbf u}(e^t)$ satisfies $\ell_{\mathbf u}(t+\Delta)-\ell_{\mathbf u}(t)\to 0$ for each fixed $\Delta>0$.
Hence
$$
Q_{i,n}=-(1+\xi_{\mathbf u})+\frac{\ell_{\mathbf u}(t_i)-\ell_{\mathbf u}(t_0)}{t_i-t_0}+o(1).
$$
By Lemma~\ref{lem:logspacing-tnu}, $t_0\to\infty$ and $t_i-t_0=\Delta_{i,n}\xrightarrow{\mathbb P} c_i>0$ for each fixed $i\le j$,
so the fraction $(\ell_{\mathbf u}(t_i)-\ell_{\mathbf u}(t_0))/(t_i-t_0)\xrightarrow{\mathbb P}0$ uniformly over $i=1,\dots,j$.
Therefore
$$
\frac{1}{j}\sum_{i=1}^{j} Q_{i,n}\ \xrightarrow{\mathbb P}\ -(1+\xi_{\mathbf u}),
\qquad\text{so}\qquad
\hat\xi_{\mathbf u}^{(k)} \;=\; -\frac{1}{j}\sum_{i=1}^{j} Q_{i,n}-1 \ \xrightarrow{\mathbb P}\ \xi_{\mathbf u}.
$$
\end{proof}

\begin{proof}[Proof of Remark~\ref{rmk:othercases}]
We show the two regimes separately. Let $t_i:=\log r_{(i)}$ and $t_0:=\log r_{(j+1)}$ as above, and set
$$
Q_{i,n}
=\frac{\log p(\mu_k+\sigma_k e^{t_i}\mathbf u\mid D)-\log p(\mu_k+\sigma_k e^{t_0}\mathbf u\mid D)}{t_i-t_0}
= -\,\frac{\phi(t_i)-\phi(t_0)}{t_i-t_0},
$$
where $\phi(t):=-\log p(\mu_k+\sigma_k e^{t}\mathbf u\mid D)$.

\emph{(i) Lighter than any power.}
Assume $\phi(t)/t\to\infty$ as $t\to\infty$. For each fixed $i\le j$, Lemma~\ref{lem:logspacing-tnu} yields
$t_i-t_0\to c_i>0$ in probability and $t_0\to\infty$. Hence
$$
Q_{i,n} \;=\; -\,\frac{\phi(t_i)-\phi(t_0)}{t_i-t_0}\ \xrightarrow{\mathbb P}\ -\infty,
$$
so $-\frac{1}{j}\sum_{i=1}^j Q_{i,n}-1 \xrightarrow{\mathbb P} +\infty$ and therefore $\hat\xi_{\mathbf u}^{(k)}\to\infty$ in probability.

\emph{(ii) Heavier than any power.}
Assume $\phi(t)=o(t)$ as $t\to\infty$. Then for any $\varepsilon>0$ there exists $T$ such that $\phi(t)\le \varepsilon t$ for all $t\ge T$.
Take $t=t_0\ge T$ and $c=t_i-t_0$; for large $n$, Lemma~\ref{lem:logspacing-tnu} ensures $c\to c_i>0$, and with $\phi$ eventually
increasing we have
$$
0 \ \le\ \frac{\phi(t_0+c)-\phi(t_0)}{c} \ \le\ \varepsilon.
$$
Thus $Q_{i,n}\xrightarrow{\mathbb P}0$ for each $i\le j$ and $j^{-1}\sum_i Q_{i,n}\xrightarrow{\mathbb P}0$. Consequently,
$-\frac{1}{j}\sum_{i=1}^j Q_{i,n}-1 \xrightarrow{\mathbb P} -1$. As is standard for tail-index estimation we truncate at $0$, hence
$\hat\xi_{\mathbf u}^{(k)}\to 0$ in probability.
\end{proof}

\subsubsection{Convergence Rates under Second-Order Regular Variation}
By making an additional assumption on the slowly varying factor $L_{\mathbf u}(r)$, we can compute the convergence rate of the proposed estimator $\hat\xi^{(k)}_{\mathbf u}$.

\begin{assumption}[Second-order directional density regular variation]\label{ass:2nd-rv-app}
In Assumption~\ref{assapp:dir-rv}, write $\ell_{\mathbf u}(t):=\log L_{\mathbf u}(e^{t})$.
Assume there exist an index $\rho\le 0$ and an auxiliary function $A:(0,\infty)\to\mathbb R$ with $A(r)\to 0$ such that the following \emph{uniform second-order increment} holds: for every compact set $K\subset(0,\infty)$,
$$
\sup_{x\in K}
\left|
\frac{\ell_{\mathbf u}(t+\log x)-\ell_{\mathbf u}(t)}{A(e^{t})}
- H_\rho(x)
\right|
\;\longrightarrow\; 0
\qquad (t\to\infty),
$$
where $H_\rho(x)=(x^\rho-1)/\rho$ for $\rho\ne 0$ and $H_0(x)=\log x$.
Equivalently, along the ray $\mu+\sigma r\,\mathbf u$ we have the \emph{uniform log–density increment expansion}
$$
\sup_{x\in K}
\left|
\bigl[\log p(\mu+\sigma r x\,\mathbf u\mid D)-\log p(\mu+\sigma r\,\mathbf u\mid D)\bigr]
+\,(1+\xi_{\mathbf u})\log x
- A(r)\,H_\rho(x)
\right|
= o\!\big(A(r)\big),
$$
as $r\to\infty$, for every compact $K\subset(0,\infty)$.
\end{assumption}

Assumption~\ref{ass:2nd-rv-app} is the standard de Haan second-order refinement for slow variation \citep{de2006extreme}. 
Typical examples (with $L_{\mathbf u}$ eventually positive) include:
(i) $L_{\mathbf u}(r)=(\log r)^{\beta}$ with $\beta\in\mathbb R$, for which $\rho=0$ and one may take $A(r)\asymp (\log r)^{-1}$ (hence rates in powers of $\log r$);
(ii) $L_{\mathbf u}(r)=1+c\,r^{\rho}+o(r^{\rho})$ with $\rho<0$ and $c\neq 0$, giving $A(r)\asymp r^{\rho}$ (hence polynomial rates).
Both examples satisfy the uniform convergence on compact $x$–sets required in Assumption~\ref{ass:2nd-rv-app}.

\begin{theorem}[Rate of convergence]\label{thm:rate}
Under Assumptions~\ref{assapp:dir-rv} and \ref{ass:2nd-rv-app}, with $z_i\overset{\mathrm{i.i.d.}}{\sim}t_\nu$ ($\nu>0$) and fixed $j\ge1$, let $r_{(m)}:=|z_{(m)}^{\mathbf u}|$, $t_m:=\log r_{(m)}$, and denote $c_i:=\lim_{n\to\infty}(t_i-t_{j+1})=\frac{1}{\nu}\log\!\frac{j+1}{i}$ (Lemma~\ref{lem:logspacing-tnu}). Then
$$
\hat\xi^{(k)}_{\mathbf u}-\xi_{\mathbf u}
\;=\;
\kappa_{j,\nu,\rho}\,A\!\big(r_{(j+1)}\big)
\;+\;o_{\mathbb P}\!\big(A(r_{(j+1)})\big),
\qquad
\kappa_{j,\nu,\rho}\;=\;\frac{1}{j}\sum_{i=1}^{j}\frac{H_\rho(e^{c_i})}{c_i}.
$$
In particular, if $A(r)\asymp(\log r)^{-\beta}$ with $\beta>0$, then $A(r_{(j+1)})\asymp(\log n)^{-\beta}$ and 
$\hat\xi^{(k)}_{\mathbf u}-\xi_{\mathbf u}=O_{\mathbb P}\big((\log n)^{-\beta}\big)$; 
if $A(r)\asymp r^{\rho}$ with $\rho<0$, then $A(r_{(j+1)})\asymp n^{\rho/\nu}$ and 
$\hat\xi^{(k)}_{\mathbf u}-\xi_{\mathbf u}=O_{\mathbb P}\big(n^{\rho/\nu}\big)$.
\end{theorem}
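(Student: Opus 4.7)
The plan is to plug the second-order expansion in Assumption~\ref{ass:2nd-rv-app} into each summand $Q_{i,n}$ of the estimator and propagate the error through the averaging step. Concretely, I would set $r:=r_{(j+1)}$ and $x_i:=r_{(i)}/r_{(j+1)}=e^{t_i-t_0}$, so that each log-density increment in the numerator of $Q_{i,n}$ is exactly of the form appearing in the uniform expansion of Assumption~\ref{ass:2nd-rv-app}. This gives, on an event of probability $\to 1$,
\begin{equation*}
  \log p(\mu_k+\sigma_k r_{(i)}\mathbf u\mid D)-\log p(\mu_k+\sigma_k r_{(j+1)}\mathbf u\mid D)
  \;=\; -(1+\xi_{\mathbf u})(t_i-t_0)+A(r_{(j+1)})\,H_\rho(e^{t_i-t_0})+o_{\mathbb P}\!\big(A(r_{(j+1)})\big),
\end{equation*}
so that dividing by $t_i-t_0$ yields
$Q_{i,n}=-(1+\xi_{\mathbf u})+A(r_{(j+1)})\,H_\rho(e^{t_i-t_0})/(t_i-t_0)+o_{\mathbb P}(A(r_{(j+1)}))$.

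Next, I would invoke Lemma~\ref{lem:logspacing-tnu} to get $t_i-t_0\xrightarrow{\mathbb P}c_i>0$ for each fixed $i\le j$, and $t_0=\log r_{(j+1)}\to\infty$ in probability. Continuity of $x\mapsto H_\rho(e^{x})/x$ on a neighborhood of $c_i$ together with the continuous mapping theorem then gives $H_\rho(e^{t_i-t_0})/(t_i-t_0)\xrightarrow{\mathbb P}H_\rho(e^{c_i})/c_i$. Averaging over $i=1,\dots,j$ and subtracting $1$ produces
\begin{equation*}
  \hat\xi^{(k)}_{\mathbf u}-\xi_{\mathbf u}
  \;=\; A(r_{(j+1)})\cdot\frac{1}{j}\sum_{i=1}^{j}\frac{H_\rho(e^{c_i})}{c_i}+o_{\mathbb P}\!\big(A(r_{(j+1)})\big),
\end{equation*}
which is the claimed expansion with constant $\kappa_{j,\nu,\rho}$.

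For the two explicit rate regimes, I would use the standard top-order-statistic asymptotics for regularly varying tails already invoked in Lemma~\ref{lem:logspacing-tnu}: $r_{(j+1)}=U(n/(j+1))(1+o_{\mathbb P}(1))$ with $U(y)=y^{1/\nu}\ell(y)$ for a slowly varying $\ell$. Hence $\log r_{(j+1)}=\nu^{-1}\log n+o_{\mathbb P}(\log n)$ and $r_{(j+1)}^{\rho}=n^{\rho/\nu}(1+o_{\mathbb P}(1))$ after absorbing slow variation into lower-order terms. Substituting these into $A(r_{(j+1)})$ yields $O_{\mathbb P}((\log n)^{-\beta})$ when $A(r)\asymp(\log r)^{-\beta}$ and $O_{\mathbb P}(n^{\rho/\nu})$ when $A(r)\asymp r^{\rho}$ with $\rho<0$.

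The main technical obstacle is justifying the use of the \emph{uniform} second-order expansion at the random base point $r=r_{(j+1)}$ and the random evaluation ratios $x=e^{t_i-t_0}$. Since Assumption~\ref{ass:2nd-rv-app} provides uniform control only over compact $x$-sets as $r\to\infty$, I would first use tightness of $\{t_i-t_0\}_{i\le j}$ (a direct consequence of Lemma~\ref{lem:logspacing-tnu}) to restrict, with probability tending to one, the ratios $x_i$ to a common compact set $K\subset(0,\infty)$, and then apply the uniform bound on $K$ together with $r_{(j+1)}\xrightarrow{\mathbb P}\infty$. A small additional care is needed to absorb the $o(A(r))$ in Assumption~\ref{ass:2nd-rv-app} into $o_{\mathbb P}(A(r_{(j+1)}))$ after the random substitution; this follows by a standard subsequence/Slutsky argument once the compact-set localization is in place.
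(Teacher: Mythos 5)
Your proposal is correct and follows essentially the same route as the paper's proof: expand each $Q_{i,n}$ via the uniform second-order increment of Assumption~\ref{ass:2nd-rv-app} at the random base point $r_{(j+1)}$ and ratios $x_{i,n}=e^{t_i-t_0}$, localize those ratios to a common compact set using Lemma~\ref{lem:logspacing-tnu}, pass to the limit by the continuous mapping theorem, average, and finally substitute $r_{(j+1)}\asymp n^{1/\nu}$ for the explicit rates. The compact-set localization you flag as the main technical point is exactly how the paper justifies the random substitution.
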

\begin{proof}[Proof of Theorem~\ref{thm:rate}]
Fix $\mathbf u\in\mathbb S^{d-1}$ and $j\ge 1$. Let $r_{(m)}:=|z_{(m)}^{\mathbf u}|$, $t_m:=\log r_{(m)}$ and
$$
Q_{i,n}
:=\frac{\log p(\mu_k+\sigma_k r_{(i)}\mathbf u\mid D)-\log p(\mu_k+\sigma_k r_{(j+1)}\mathbf u\mid D)}{t_i-t_{j+1}},
\qquad i=1,\dots,j.
$$
By Assumption~\ref{ass:dir-rv}, along the ray we can write
$$
\log p(\mu_k+\sigma_k e^{t}\mathbf u\mid D)
= C_{\mathbf u}-(1+\xi_{\mathbf u})t+\ell_{\mathbf u}(t),
\quad\text{with}\quad \ell_{\mathbf u}(t):=\log L_{\mathbf u}(e^{t}).
$$
Hence
$$
Q_{i,n}=-(1+\xi_{\mathbf u})
+ \frac{\ell_{\mathbf u}(t_i)-\ell_{\mathbf u}(t_{j+1})}{t_i-t_{j+1}}.
$$

By Lemma~\ref{lem:logspacing-tnu}, $t_{j+1}\to\infty$ and $t_i-t_{j+1}\xrightarrow{\mathbb P}c_i>0$ with
$c_i=\frac{1}{\nu}\log\!\frac{j+1}{i}$. Set $x_{i,n}:=\exp(t_i-t_{j+1})$. Then $x_{i,n}\xrightarrow{\mathbb P}e^{c_i}$ and,
for all large $n$, the random multipliers $x_{i,n}$ take values in a common compact set
$K\subset (0,\infty)$ (since $\{c_i\}_{i=1}^j$ is finite). Apply Assumption~\ref{ass:2nd-rv-app} with $t=t_{j+1}$
and $x=x_{i,n}$ to obtain the \emph{uniform second-order increment}
$$
\ell_{\mathbf u}(t_i)-\ell_{\mathbf u}(t_{j+1})
= A(e^{t_{j+1}})\,H_\rho(x_{i,n}) + o\!\big(A(e^{t_{j+1}})\big)
$$
uniformly over $i=1,\dots,j$. Therefore,
$$
Q_{i,n}
=-(1+\xi_{\mathbf u})
+\frac{A(e^{t_{j+1}})}{t_i-t_{j+1}}\,H_\rho(x_{i,n})
+ o_{\mathbb P}\!\big(A(e^{t_{j+1}})\big).
$$
Averaging over $i=1,\dots,j$ and using the continuous mapping theorem with
$t_i-t_{j+1}\xrightarrow{\mathbb P}c_i$ and $x_{i,n}\xrightarrow{\mathbb P}e^{c_i}$ yields
$$
\frac{1}{j}\sum_{i=1}^{j} Q_{i,n}
=-(1+\xi_{\mathbf u})
+\Big(\frac{1}{j}\sum_{i=1}^{j}\frac{H_\rho(e^{c_i})}{c_i}\Big)\,A(e^{t_{j+1}})
+ o_{\mathbb P}\!\big(A(e^{t_{j+1}})\big).
$$
By the definition of the estimator,
$$
\hat\xi^{(k)}_{\mathbf u}
= -\frac{1}{j}\sum_{i=1}^{j} Q_{i,n}-1
= \kappa_{j,\nu,\rho}\,A(e^{t_{j+1}})+o_{\mathbb P}\!\big(A(e^{t_{j+1}})\big),
$$
with $\kappa_{j,\nu,\rho}=\frac{1}{j}\sum_{i=1}^{j}\frac{H_\rho(e^{c_i})}{c_i}$.
Finally, $e^{t_{j+1}}=r_{(j+1)}$ gives the stated expansion.

For the concrete rates, recall that for $t_\nu$ proposals the $(j+1)$-st upper order statistic satisfies
$r_{(j+1)}\asymp n^{1/\nu}$, so that:
(i) if $A(r)\asymp(\log r)^{-\beta}$ with $\beta>0$, then $A(r_{(j+1)})\asymp(\log n)^{-\beta}$;
(ii) if $A(r)\asymp r^{\rho}$ with $\rho<0$, then $A(r_{(j+1)})\asymp n^{\rho/\nu}$.
This yields the two $O_{\mathbb P}(\cdot)$ displays and completes the proof.
\end{proof}

\subsection{Component-wise Pushforward}

\paragraph{Construction of a probability measure.}
Let \((\pi_k)_{k\in\mathbb N}\) be a probability mass function on \(\mathbb N\) (i.e., \(\pi_k\ge 0\) and \(\sum_{k=1}^\infty \pi_k=1\)).
For each \(k\in\mathbb N\),
let \(q_k\) be a probability measure on \((\mathbb R^d,\mathcal B(\mathbb R^d))\) and let \(T^{(k)}:\mathbb R^d\to\mathbb R^d\) be a measurable map.
Define \(\nu:\mathcal B(\mathbb R^d)\to[0,1]\) by
\[
\nu(A):=\sum_{k=1}^\infty \pi_k\, q_k\!\big((T^{(k)})^{-1}(A)\big),\qquad A\in\mathcal B(\mathbb R^d).
\]
Then \(\nu\) is a probability measure on \((\mathbb R^d,\mathcal B(\mathbb R^d))\).
\emph{Proof.} For each fixed \(k\), the set function \(A\mapsto q_k((T^{(k)})^{-1}(A))\) is a measure because \(T^{(k)}\) is measurable and \(q_k\) is a measure. Since \(\pi_k\ge 0\), Tonelli's theorem implies that the pointwise countable sum of measures is a measure; thus \(\nu\) is countably additive and \(\nu(\varnothing)=0\).
Moreover,
\[
\nu(\mathbb R^d)=\sum_{k=1}^\infty \pi_k\, q_k\!\big((T^{(k)})^{-1}(\mathbb R^d)\big)
=\sum_{k=1}^\infty \pi_k\, q_k(\mathbb R^d)=\sum_{k=1}^\infty \pi_k=1,
\]
so \(\nu\) has total mass one. Hence \(\nu\) is a probability measure.\hfill\(\square\)

\noindent\emph{Equivalent product-space construction.}
On the product space \((\mathbb N\times\mathbb R^d,\ 2^{\mathbb N}\otimes\mathcal B(\mathbb R^d))\), define the probability measure \(\mu\) by
\[
\mu(\{k\}\times A):=\pi_k\, q_k(A),\qquad k\in\mathbb N,\ A\in\mathcal B(\mathbb R^d),
\]
and the measurable map \(\mathcal T:\mathbb N\times\mathbb R^d\to\mathbb R^d\) by \(\mathcal T(k,x):=T^{(k)}(x)\).
Then \(\mathcal T_\#\mu\) is a probability measure on \(\mathbb R^d\) and satisfies
\[
(\mathcal T_\#\mu)(A)=\sum_{k=1}^\infty \pi_k\, q_k\!\big((T^{(k)})^{-1}(A)\big)=\nu(A),
\]
so the component-wise transform followed by marginalization over the index yields the same \(\nu\).

\paragraph{Change-of-variables for component-wise transforms and the resulting density.}
Assume each \(q_k\) admits a density (again denoted \(q_k\)) with respect to Lebesgue measure and each \(T^{(k)}:\mathbb R^d\to\mathbb R^d\) is a bijective \(C^1\) map with measurable inverse and Jacobian \(J^{(k)}(x)=\nabla T^{(k)}(x)\).
For \(z\in\mathbb R^d\), write \(x_k:=(T^{(k)})^{-1}(z)\).
The change-of-variables formula gives
\[
(T^{(k)}_{\#}q_k)(z)
= q_k(x_k)\,\bigl|\det J_{(T^{(k)})^{-1}}(z)\bigr|
= q_k(x_k)\bigl|\det J^{(k)}(x_k)\bigr|^{-1}.
\]
Consequently, the density of the (countably) infinite transformed mixture \(\nu=\sum_{k=1}^\infty \pi_k\, T^{(k)}_{\#}q_k\) is
\[
q(z)=\sum_{k=1}^\infty \pi_k\,q_k\!\big((T^{(k)})^{-1}(z)\big){\bigl|\det J^{(k)}\!\big((T^{(k)})^{-1}(z)\big)\bigr|}^{-1},
\qquad z\in\mathbb R^d,
\]
whenever the sum is finite (e.g., by Tonelli/DCT under mild tail and Jacobian growth controls). This identity is sufficient for exact likelihood evaluation of transformed mixtures.

\paragraph{Sampling and likelihood evaluation (combined).}
Sampling follows directly from the product-space viewpoint in A.X.1: draw \(K\sim\mathrm{Categorical}(\pi_1,\pi_2,\dots)\), then \(X\sim q_K\), and output \(Z=T^{(K)}(X)\), i.e., \(Z=\mathcal T(K,X)\sim\mathcal T_\#\mu=\nu\).
For likelihood evaluation at \(z\), compute for each \(k\) the inverse \(x_k=(T^{(k)})^{-1}(z)\) and \(\log\bigl|\det J^{(k)}(x_k)\bigr|\), then assemble
\[
\log q(z)=\log\sum_{k=1}^\infty \exp\!\left\{\log \pi_k+\log q_k(x_k)-\log\bigl|\det J^{(k)}(x_k)\bigr|\right\},
\]
using standard log-sum-exp stabilization.

\subsection{Tail Transform Flows}
\label{appsub:ttf}
This section presents further details on Tail Transform Flows \citep{hickling2024flexible}, which we employ to perform component-wise transformations from light-tailed base distributions to heavy-tailed targets.

\newcommand{\erfc}{\operatorname{erfc}}
\newcommand{\ierfc}{\operatorname{erfc}^{-1}}

For component $k$ and coordinate $l$, write
$$
x^{(l)} \;=\; T_{\mathrm{TTF}}^{(k),(l)}\!\left(z^{(l)};\,\hat\xi^{(k)}_{+e_l},\hat\xi^{(k)}_{-e_l}\right)
= \mu_k^{(l)} + \sigma_k^{(l)}\,
\frac{s_l}{\hat\xi^{(k)}_{s_l}}\Bigl[\erfc\!\Bigl(\tfrac{|z^{(l)}-\mu_k^{(l)}|}{\sigma_k^{(l)}\sqrt{2}}\Bigr)^{-\hat\xi^{(k)}_{s_l}} - 1\Bigr],
$$
where $s_l=\mathrm{sign}(z^{(l)}-\mu_k^{(l)})\in\{+1,-1\}$ selects the right/left tail and we denote
\[
r^{(l)} \;=\; \frac{z^{(l)}-\mu_k^{(l)}}{\sigma_k^{(l)}}, \qquad
u^{(l)} \;=\; \frac{|r^{(l)}|}{\sqrt{2}}.
\]

\paragraph{Forward Jacobian.}
Differentiating the scalar map in $z^{(l)}$ yields a closed form:
\[
\frac{\partial\, T_{\mathrm{TTF}}^{(k),(l)}}{\partial z^{(l)}}(z^{(l)})
\;=\; \frac{\sqrt{2}}{\sqrt{\pi}}\,
\exp\!\Bigl(-\frac{(r^{(l)})^2}{2}\Bigr)\,
\erfc\!\Bigl(\frac{|r^{(l)}|}{\sqrt{2}}\Bigr)^{-\left(\hat\xi^{(k)}_{s_l}+1\right)}.
\]
Hence the full Jacobian determinant of the component-wise transform
\(T_{\mathrm{TTF}}^{(k)}=\bigl(T_{\mathrm{TTF}}^{(k),(1)},\dots,T_{\mathrm{TTF}}^{(k),(d)}\bigr)\)
is the product
\[
\left|\det J_{T_{\mathrm{TTF}}^{(k)}}(z)\right|
= \prod_{l=1}^d
\frac{\sqrt{2}}{\sqrt{\pi}}\,
\exp\!\Bigl(-\frac{1}{2}\bigl(\tfrac{z^{(l)}-\mu_k^{(l)}}{\sigma_k^{(l)}}\bigr)^{\!2}\Bigr)\,
\erfc\!\Bigl(\frac{|z^{(l)}-\mu_k^{(l)}|}{\sigma_k^{(l)}\sqrt{2}}\Bigr)^{-\left(\hat\xi^{(k)}_{s_l}+1\right)}.
\]
\emph{Monotonicity and smoothness.} Each scalar map is strictly increasing (derivative \(>0\)),
and it is $C^1$ at \(z^{(l)}=\mu_k^{(l)}\) with
\(\bigl.\partial T/\partial z^{(l)}\bigr|_{z^{(l)}=\mu_k^{(l)}}=\sqrt{2/\pi}\), independent of \(\hat\xi\) and \(\sigma_k^{(l)}\).

\paragraph{Closed-form inverse.}
Given \(x^{(l)}\), define
\[
t^{(l)} \;=\; \frac{x^{(l)}-\mu_k^{(l)}}{\sigma_k^{(l)}},\qquad
s_l \;=\; \mathrm{sign}\!\bigl(t^{(l)}\bigr),\qquad
E^{(l)} \;=\; \Bigl(1 + s_l\,\hat\xi^{(k)}_{s_l}\,t^{(l)}\Bigr)^{-1/\hat\xi^{(k)}_{s_l}}.
\]
Then the inverse map is
\[
\bigl(T_{\mathrm{TTF}}^{(k),(l)}\bigr)^{-1}(x^{(l)})
\;=\; \mu_k^{(l)} + \sigma_k^{(l)}\, s_l \sqrt{2}\;\ierfc\!\bigl(E^{(l)}\bigr).
\]

\paragraph{Inverse Jacobian.}
Let \(u^{(l)}_{*}=\ierfc\!\bigl(E^{(l)}\bigr)\). The per-coordinate inverse derivative is
\[
\frac{\partial\, (T_{\mathrm{TTF}}^{(k),(l)})^{-1}}{\partial x^{(l)}}(x^{(l)})
\;=\; \frac{\sqrt{\pi}}{\sqrt{2}}\,
\erfc\!\bigl(u^{(l)}_{+}\bigr)^{\,\hat\xi^{(k)}_{s_l}+1}\,
\exp\!\Bigl(\,\bigl(u^{(l)}_{*}\bigr)^{\!2}\Bigr).
\]
Consequently,
\[
\left|\det J_{(T_{\mathrm{TTF}}^{(k)})^{-1}}(x)\right|
= \prod_{l=1}^d
\frac{\sqrt{\pi}}{\sqrt{2}}\,
\erfc\!\bigl(u^{(l)}_{*}\bigr)^{\,\hat\xi^{(k)}_{s_l}+1}\,
\exp\!\Bigl(\,\bigl(u^{(l)}_{*}\bigr)^{\!2}\Bigr),
\quad
u^{(l)}_{*}=\ierfc\!\Bigl(\bigl(1+s_l\hat\xi^{(k)}_{s_l}t^{(l)}\bigr)^{-1/\hat\xi^{(k)}_{s_l}}\Bigr).
\]

\section{Experiments Details}
\label{sec:app_exp}
This section provides further details for the experimental results conducted on Section~\ref{sec:exp} and Section~\ref{sec:real}. As previously illustrated, we include flow models with a standard Gaussian base, a Gaussian mixture base, TAF \citep{jaini2020tails}, gTAF \citep{laszkiewicz2022marginal}, and ATAF \citep{liang2022fat} as the benchmark methods. In addition, we consider a normalizing flow model with a stick–breaking heavy–tailed mixture base to show that a heavy–tailed mixture base alone is insufficient. See Section~\ref{sec:app_gTAF} for the details.

For every flow based models, we use $N=2$ blocks, each block consisting of an  autoregressive rational–quadratic spline (ARQS) transform with 3 bins and a coupling network with 64 hidden units, followed by a learnable LU-linear permutation layer. For every mixture base flow models including StiCTAF, $K=20$ mixture components are used. For the numerical stability, the TTF were applied to those components with expected weight higher than $1\times10^{-2}$. All models are implemented in \texttt{PyTorch}~2.7.0+cu126 with \texttt{CUDA}~12.6 using the \texttt{normflows} library \citep{stimper2023normflows}, and are executed on a single NVIDIA GeForce RTX~4090 GPU.

\subsection{Gaussian-Inverse-Gamma Distribution}

This section details the numerical experiment in Section~\ref{sec:exp}, where the target distribution is
$$
(\beta,\sigma^2)\sim \mathcal{N}(0,1)\times \mathrm{Inv\text{-}Gamma}(\text{shape}=3,\text{scale}=1).
$$
We ran $300$ iterations for ATAF and for NF with Gaussian and Gaussian–mixture bases; $500$ iterations for TAF and StiCTAF (with $450$ for base learning and $50$ for flow learning); and $1000$ iterations for gTAF and gTAF–Mixture, until convergence.
The base distributions were initialized as Student’s $t$ distributions with degrees of freedom $4$ for TAF, $(30,3)$ for ATAF, $(30,2.89)$ for gTAF, and $(30,2.72)$ for gTAF–Mixture.  
For the last two, the initial degrees of freedom were estimated using the tail estimator proposed in Section~\ref{subsec:tailEstimation}. Student's-\emph{t}-based methods showed highly unstable training process, so the initial degrees of freedom where clamped greater than $2.0$, and $4.0$ for TAF which showed the highest instability. All method used learning rate of $5\times10^{-3}$.

Figure~\ref{figapp:NIG_full} presents the results across all methods, based on $10^4$ samples from each approximating distribution.  
The Dotted reference lines indicate the $0.1\%$ and $99.9\%$ marginal percentiles for $\beta$, and the $99.9\%$ marginal percentile for $\sigma^2$.  Table~\ref{tblapp:NIG} summarizes these values for each method. We observe for TAF and ATAF that mixing across dimensions leads to a failure to accurately capture the tail thickness of $\beta$, which corroborates Theorem~\ref{thm:mixing-dominance}. Only StiCTAF accurately captures the tail behavior in both $\beta$ and $\sigma^2$. 

\begin{table}[t]
\centering
\footnotesize
\caption{\textbf{Estimated percentiles of Gaussian-Inverse-Gamma distribution} Reported are the 0.1- and 99.9-percentiles for the first coordinate and the 99.9-percentile for the second coordinate from sample of  10,000.}
\label{tblapp:NIG}
\begin{tabular}{l r r r}
\toprule
Method & $\beta$ 0.1-ptile & $\beta$ 99.9-ptile & $\sigma^2$ 99.9-ptile \\
\midrule
Target Gaussian-Inverse-Gamma & $-3.15$ & $3.18$ & $5.56$ \\
NF (Gaussian)              & $-3.06$ & $3.01$ & $2.63$ \\
NF (Gaussian Mixture)      & $-2.91$ & $3.08$ & $3.30$ \\
TAF                        & $-7.98$ & $8.48$ & $5.86$ \\
gTAF                       & $-2.98$ & $3.06$ & $3.07$ \\
gTAF Mixture               & $-3.21$ & $3.20$ & $2.71$ \\
ATAF                       & $-3.17$ & $4.33$ & $2.81$ \\
StiCTAF                    & $-3.11$ & $3.11$ & $5.15$ \\
\bottomrule
\end{tabular}
\label{tab:extreme_percentiles}
\end{table}

\begin{figure}[t]
  \centering
  \begin{subfigure}{0.32\linewidth}
    \includegraphics[width=\linewidth]{section/figures/NormalIGNF_Gaussian.png}\caption{NF (Gaussian)}
  \end{subfigure}\hfill
  \begin{subfigure}{0.32\linewidth}
    \includegraphics[width=\linewidth]{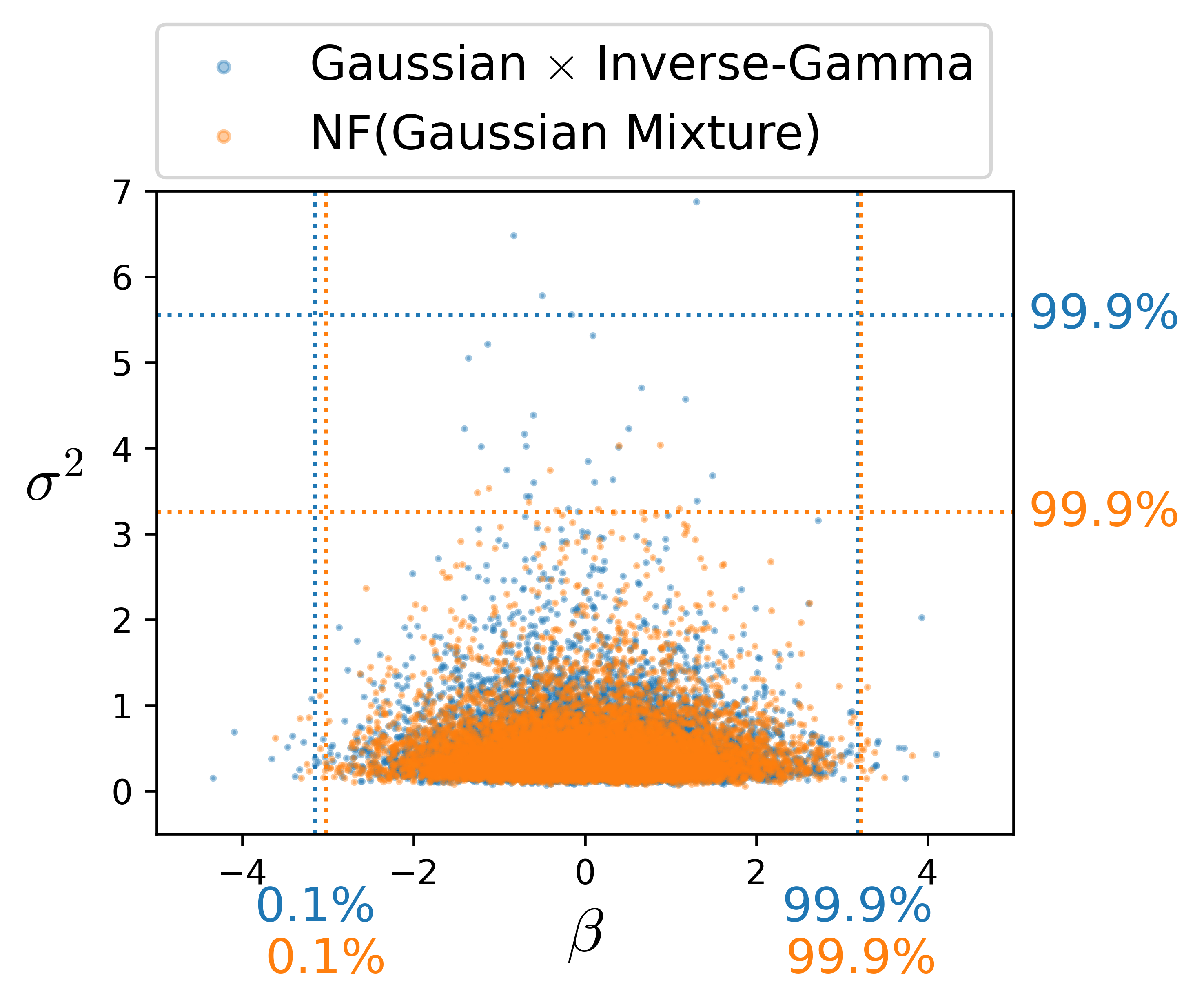}\caption{NF (Gaussian Mixture)}
  \end{subfigure}\hfill
  \begin{subfigure}{0.32\linewidth}
    \includegraphics[width=\linewidth]{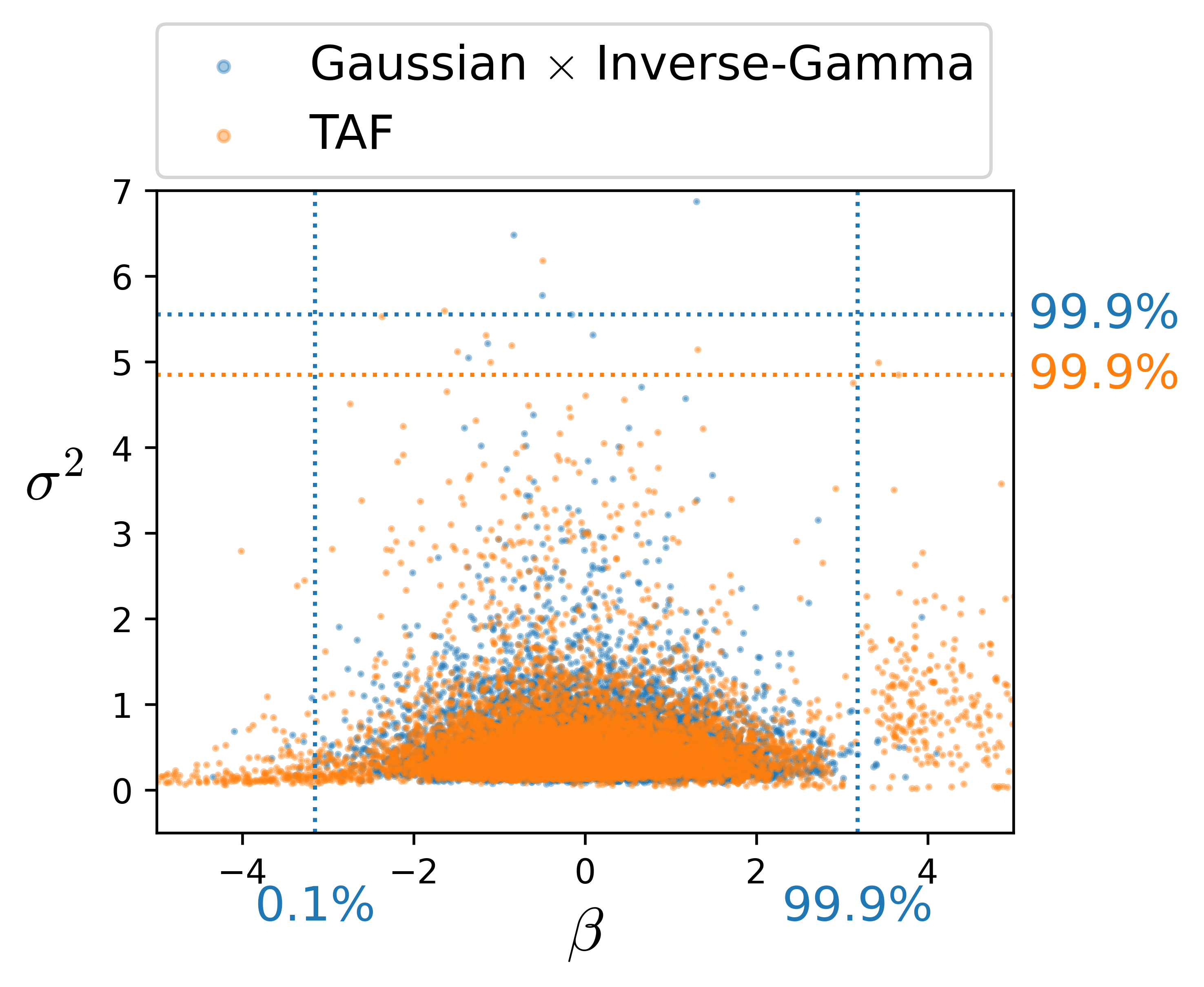}\caption{TAF}
  \end{subfigure}

  \vspace{2pt}

  \begin{subfigure}{0.32\linewidth}
    \includegraphics[width=\linewidth]{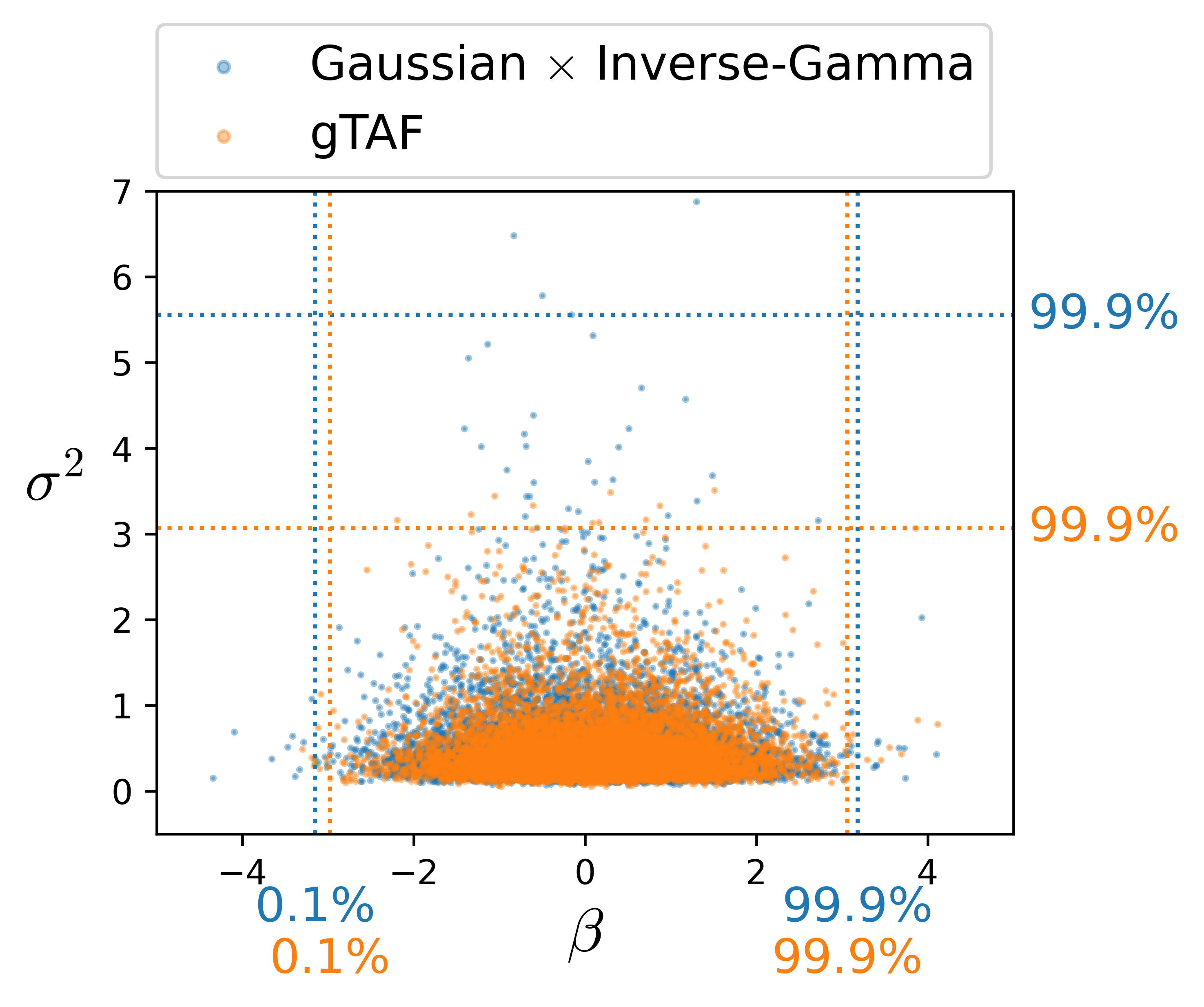}\caption{gTAF}
  \end{subfigure}\hfill
  \begin{subfigure}{0.32\linewidth}
    \includegraphics[width=\linewidth]{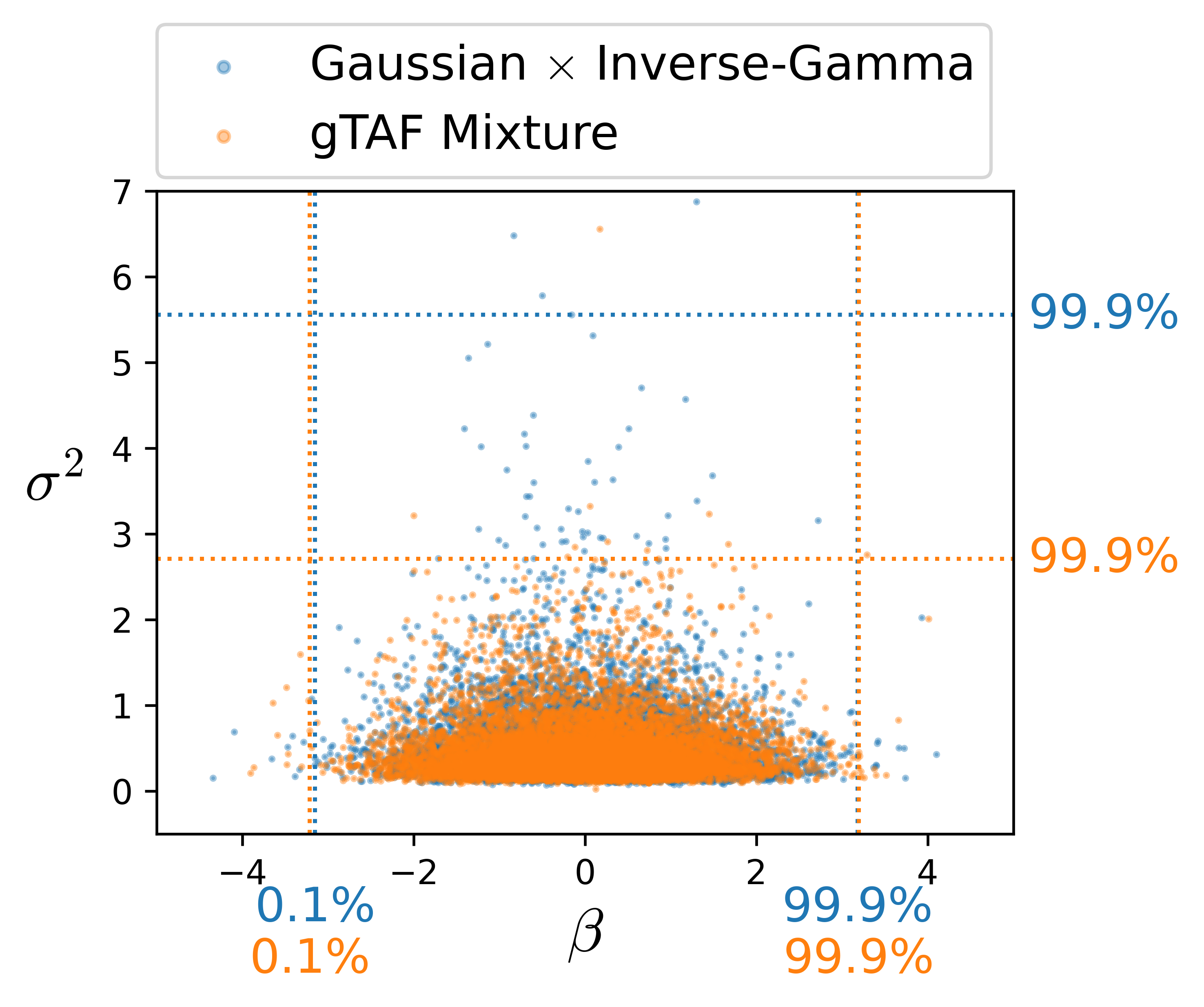}\caption{gTAF Mixture}
  \end{subfigure}\hfill
  \begin{subfigure}{0.32\linewidth}
    \includegraphics[width=\linewidth]{section/figures/NormalIGATAF.png}\caption{ATAF}
  \end{subfigure}

  \vspace{2pt}

  \begin{subfigure}{0.32\linewidth}
    \includegraphics[width=\linewidth]{section/figures/NormalIGStiCTAF.png}\caption{StiCTAF}
  \end{subfigure}

  \caption{\textbf{Normal $\times$ Inverse-Gamma Target:} Full comparison with benchmark methods using samples of size $10^4$ per model; dotted lines indicate the $0.1\%$/$99.9\%$ marginal percentiles.}
\label{figapp:NIG_full}
\end{figure}

\subsection{Complex Mixture Target Distribution}

For the second numerical experiment in Section~\ref{sec:exp}, the target distribution is a mixture of four components: two $ \text{Gaussian}\times\text{Student-}t $ components (with $ \nu=2 $ and $ \nu=3 $, respectively), one Two-Moons component, and one $ \text{Student-}t(\nu=2)\times\text{Student-}t(\nu=3) $ component. 
The component centers are $ (6,0) $, $ (0,6) $, $ (-3,-4) $, and $ (0,0) $, and the mixture weights are $ (0.2,\,0.2,\,0.1,\,0.5) $ in the same order.

We ran $ 1500 $ iterations for NF (Gaussian Mixture) and $ 1000 $ iterations for all other methods (for StiCTAF: $ 800 $ iterations for base learning and $ 200 $ for flow learning), continuing until convergence. 
The initial degrees of freedom were $ (2,2) $ for TAF, ATAF, and gTAF–Mixture, and $ (2.54,2) $ for gTAF. 
All methods used a learning rate of $ 1\times 10^{-3} $.

Figure~\ref{figapp:Toy_full} shows $ 2\times 10^4 $ samples from each approximating distribution, with marginal density curves displayed along the top and right margins. 
It is evident that mixture-based flow models (Gaussian Mixture, gTAF–Mixture, and StiCTAF) more effectively recover all modes compared with unimodal-based flow models. 
For a quantitative comparison, Table~\ref{tblapp:TMix} summarizes the mean and standard deviation of the forward Kullback–Leibler (KL) divergence (estimated via Monte Carlo using true target samples) and the normalized effective sample size (ESS) computed from samples drawn from the trained models. 
Among all methods, StiCTAF attains the best performance—yielding the lowest KL divergence and the highest ESS.

\begin{figure}[t]
  \centering

  \begin{subfigure}{0.32\linewidth}
    \includegraphics[width=\linewidth]{section/figures/Toy1_NF_Gaussian.png}\caption{NF (Gaussian)}
  \end{subfigure}\hfill
  \begin{subfigure}{0.32\linewidth}
    \includegraphics[width=\linewidth]{section/figures/Toy1_NF_Gaussian_Mixture.png}\caption{NF (Gaussian Mixture)}
  \end{subfigure}\hfill
  \begin{subfigure}{0.32\linewidth}
    \includegraphics[width=\linewidth]{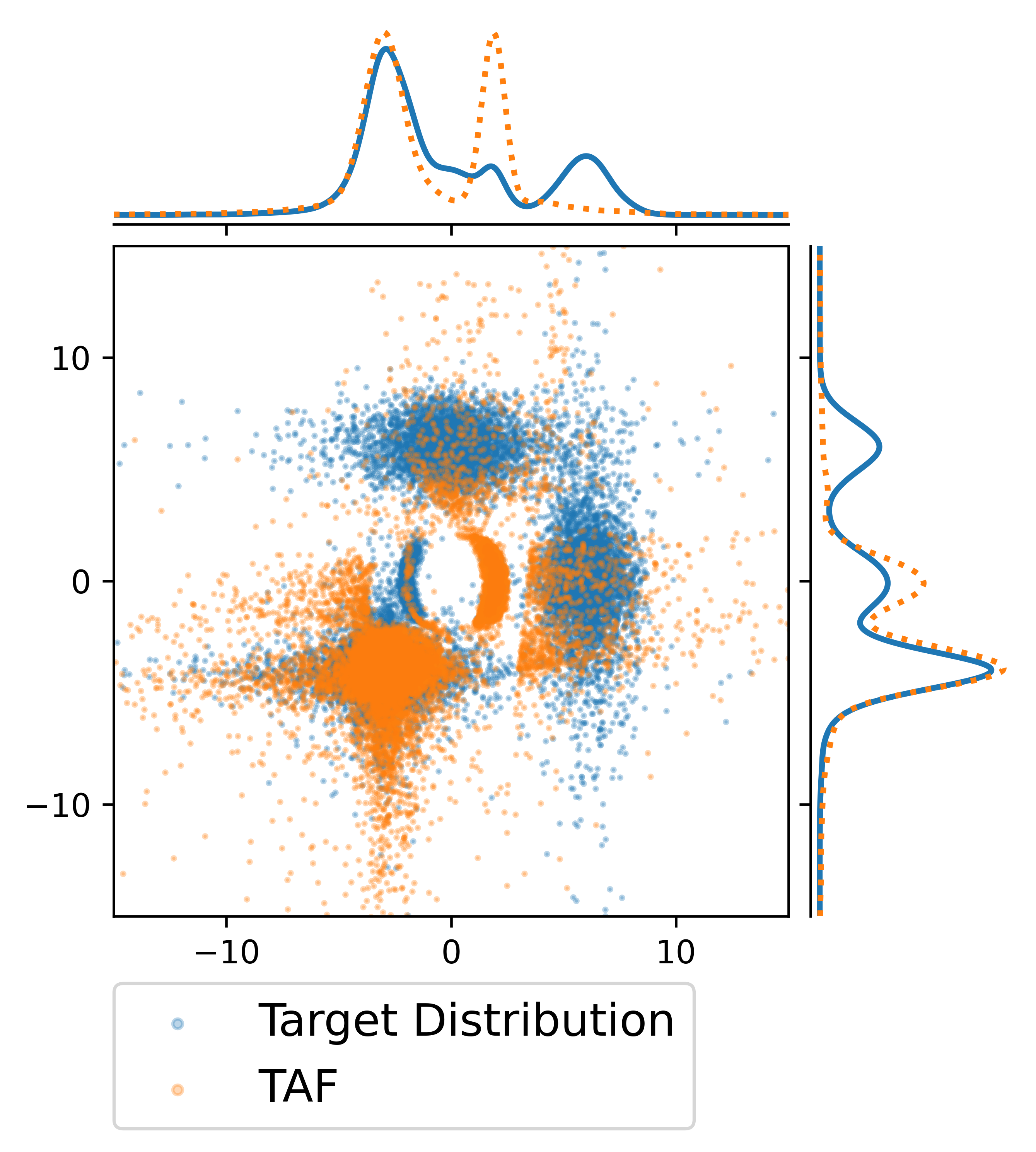}\caption{TAF}
  \end{subfigure}

  \vspace{2pt}

  \begin{subfigure}{0.32\linewidth}
    \includegraphics[width=\linewidth]{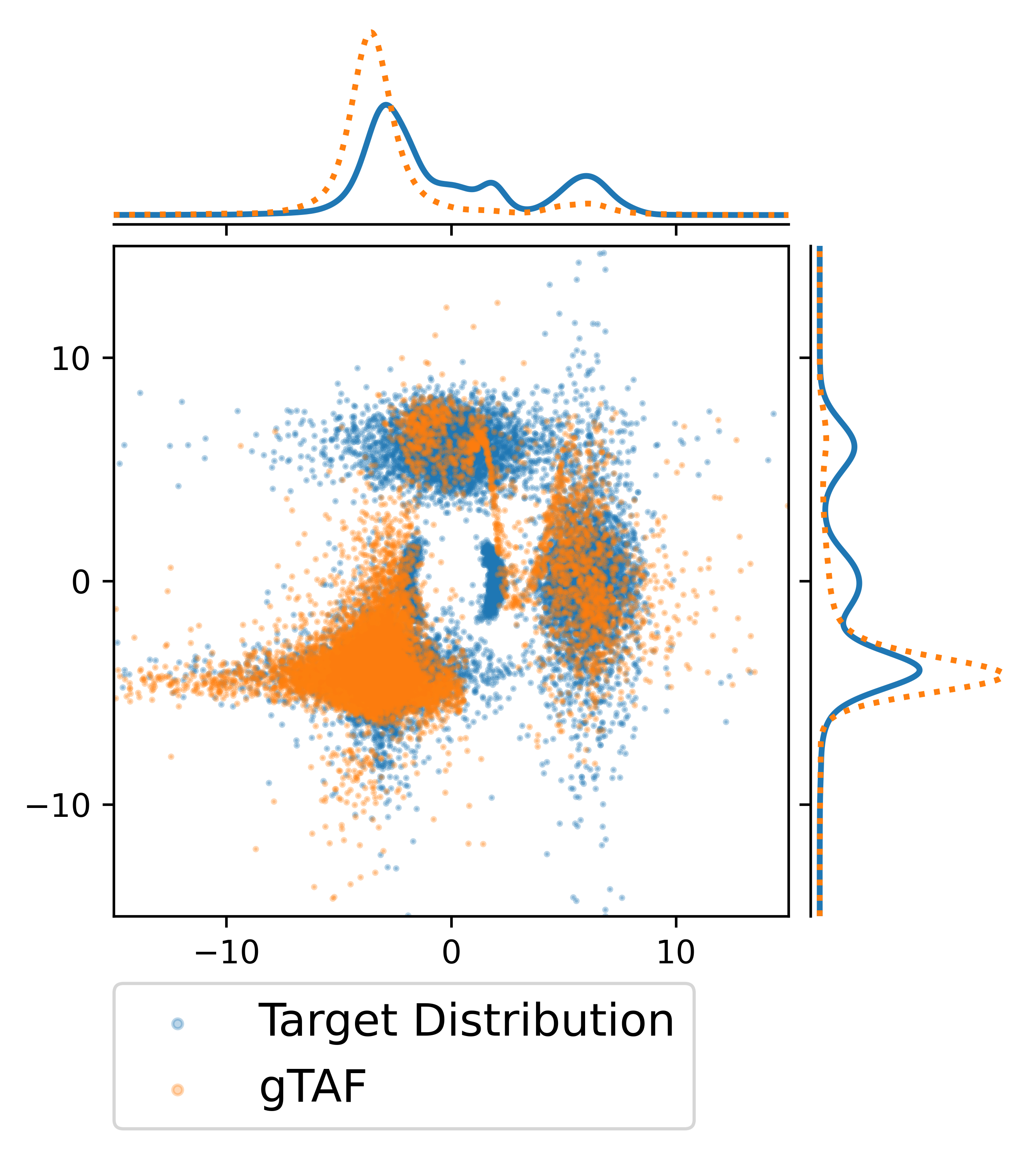}\caption{gTAF}
  \end{subfigure}\hfill
  \begin{subfigure}{0.32\linewidth}
    \includegraphics[width=\linewidth]{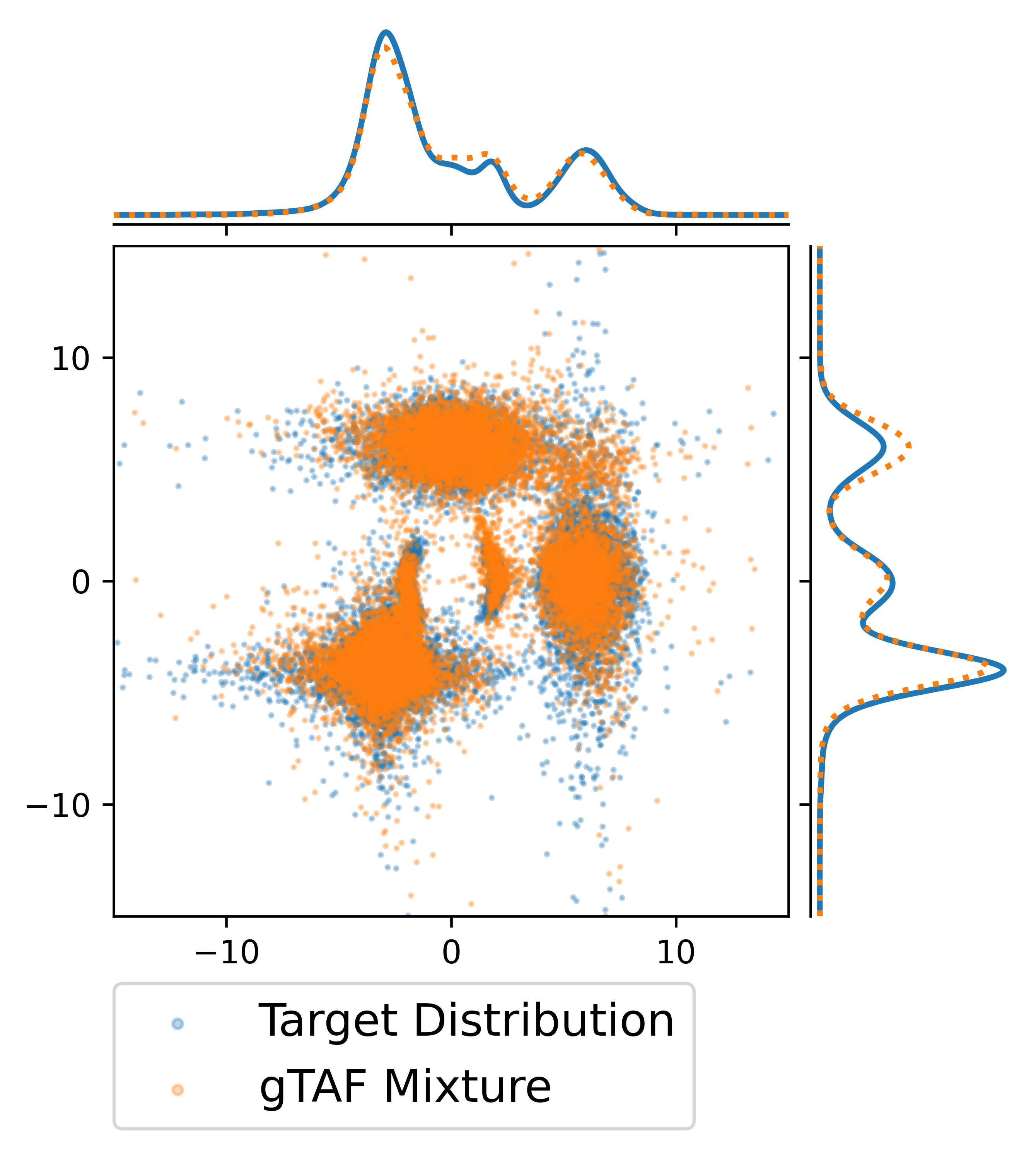}\caption{gTAF Mixture}
  \end{subfigure}\hfill
  \begin{subfigure}{0.32\linewidth}
    \includegraphics[width=\linewidth]{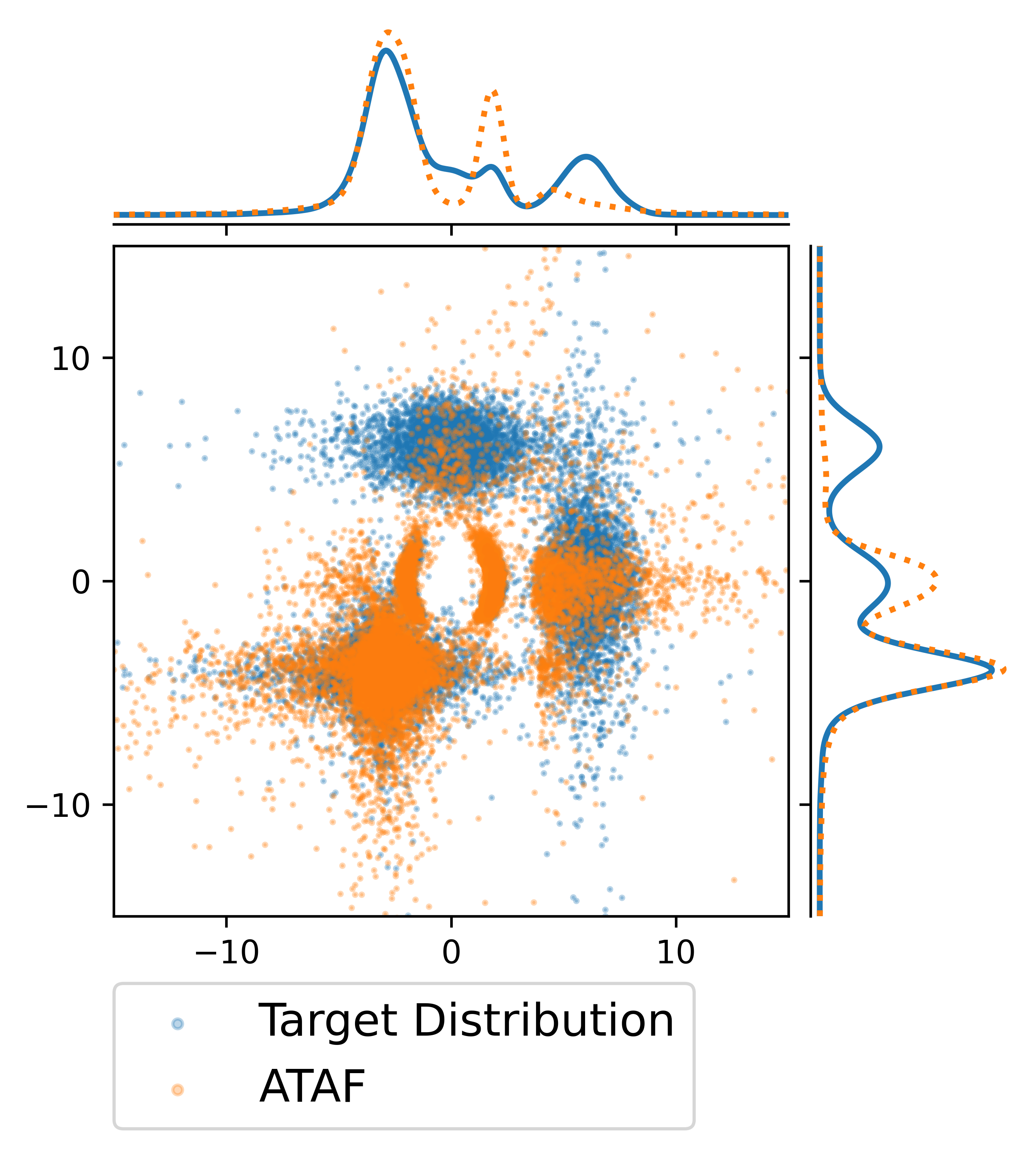}\caption{ATAF}
  \end{subfigure}

  \vspace{2pt}

  \begin{subfigure}{0.32\linewidth}
    \includegraphics[width=\linewidth]{section/figures/Toy1_StiCTAF.png}\caption{StiCTAF}
  \end{subfigure}

\caption{\textbf{Complex Multimodal Target:} Full comparison with benchmark methods using samples of size $10^4$ per model; curves along the top and right margins show the marginal densities.}
\label{figapp:Toy_full}
\end{figure}

\begin{table}[t]
\centering
\footnotesize
\caption{\textbf{Forward KL-divergence and normalized ESS} Scores for estimating complex multimodal target distributions. (mean $\pm$ standard deviation)s over 10 random seeds are reported. Lower KL and higher ESS is the better.}
\label{tblapp:TMix}
\begin{tabular}{l r r}
\toprule
Method & Forward KL & ESS(normalized)  \\
\midrule
NF (Gaussian)              & $1.92\pm1.21$ & $0.31\pm0.17$ \\
NF (Gaussian Mixture)      & $0.33\pm0.05$ & $0.65\pm0.23$ \\
TAF                        & $0.90\pm0.09$ & $0.21\pm0.06$ \\
gTAF                       & $2.80\pm0.20$ & $0.07\pm0.07$ \\
gTAF Mixture               & $0.43\pm0.12$ & $0.48\pm0.20$ \\
ATAF                       & $0.94\pm0.33$ & $0.19\pm0.07$ \\
StiCTAF                    & $\mathbf{0.22}\pm0.09$ & $\mathbf{0.79}\pm0.19$ \\
\bottomrule
\end{tabular}
\end{table}

\subsection{Real Data Analysis}
\label{appsub:real}

We analyze the daily maximum wind speed data for the year 2024 obtained from the Korea Meteorological Administration (https://data.kma.go.kr/). We consider four stations ($J=4$) and four seasons ($S=4$). Let $X_{(j,s),t}$ denote the daily maximum wind at station $j\in\{1,\dots,J\}$, season $s\in\{1,\dots,S\}$, and day index $t$. For each $(j,s)$, we fix a high threshold $u_{j,s}$ and work with exceedance residuals
\[
Y_{(j,s),t)} \;=\; X_{(j,s),t} - u_{j,s}
\quad\text{conditioned on } X_{(j,s),t} > u_{j,s}.
\]
Following \citet{fawcett2006hierarchical}, we adopt a pairwise extreme-value framework: threshold exceedances are modeled with GPD margins, and dependence between consecutive days is captured by a logistic bivariate extreme-value model. We use this structure to analyze the 2024 KMA wind data across stations and seasons.

\paragraph{GPD marginal for threshold exceedances.}
Following \citet{fawcett2006hierarchical}, conditional on exceeding $u_{j,s}$, the residual
$Y_{(j,s),t} = X_{(j,s),t} - u_{j,s}$ is modeled by a Generalized Pareto Distribution (GPD) with
scale parameter $\sigma_{j,s}>0$ and shape parameter $\eta_{j,s}$. Its CDF and PDF are
\begin{equation}
H(y \mid \sigma_{j,s}, \eta_{j,s}; u_{j,s})
= 1 - \left(1+\frac{\eta_{j,s} y}{\sigma_{j,s}}\right)^{-1/\eta_{j,s}},
\qquad y \in \mathcal{S}(\sigma_{j,s},\eta_{j,s}),
\label{eq:gpd-cdf}
\end{equation}
\begin{equation}
h(y \mid \sigma_{j,s}, \eta_{j,s}; u_{j,s})
= \frac{1}{\sigma_{j,s}}\left(1+\frac{\eta_{j,s} y}{\sigma_{j,s}}\right)^{-1/\eta_{j,s}-1}
,\qquad y \in \mathcal{S}(\sigma_{j,s},\eta_{j,s})
\label{eq:gpd-pdf}
\end{equation}
respectively, where the support is
\[
\mathcal{S}(\sigma_{j,s},\eta_{j,s})=\{\,y\ge 0: 1+\eta_{j,s}y/\sigma_{j,s} > 0\,\}
= \begin{cases}
[0,\infty) & \text{if } \eta_{j,s}\ge 0,\\
[0,-\sigma_{j,s}/\eta_{j,s}) & \text{if } \eta_{j,s}<0.
\end{cases}
\]
In the limit $\eta_{j,s}\to 0$, \eqref{eq:gpd-cdf} reduces to the exponential CDF $H(y)=1-\exp(-y/\sigma_{j,s})$. When $\eta_{j,s}>0$, the GPD is heavy-tailed with Pareto-type tail index $1/\eta_{j,s}$; when $\eta_{j,s}<0$, it has a finite upper endpoint (short/light tail).Since our focus is on heavy tails in wind extremes, we restrict attention to the case $\eta_{j,s}>0$ (enforced by the positive parameterization introduced below).

\paragraph{First-order pairwise likelihood decomposition.}
Daily wind extremes often display short-range temporal dependence—consecutive days tend to co-vary. To address this, \citet{fawcett2006hierarchical} model adjacent-day pairs and aggregate them via a first-order (Markov) composite likelihood. Within a fixed $(j,s)$ cell, write $x_t = X_{(j,s),t}$ for brevity, the composite likelihood in terms of $(\sigma_{j,s}, \eta_{j,s})$ is
\begin{equation}
\mathcal{L}_{j,s}(\sigma_{j,s}, \eta_{j,s})
= f(x_{1}\mid \sigma_{j,s}, \eta_{j,s})\;
\prod_{t=1}^{\,n_{j,s}-1}
\frac{f(x_{t},x_{t+1}\mid \sigma_{j,s}, \eta_{j,s})}{f(x_{t}\mid \sigma_{j,s}, \eta_{j,s})}\,,
\label{eq:pairwise-like}
\end{equation}
where $f(x_{t},x_{t+1}\mid \sigma_{j,s}, \eta_{j,s})$ is the joint density for the consecutive pair and $f(x_{t}\mid \sigma_{j,s}, \eta_{j,s})$ is the corresponding marginal. The joint density is specified in the next paragraph via a logistic bivariate extreme-value model after transforming each marginal to an extreme-value scale.

\paragraph{Logistic bivariate extreme–value model after marginal transformation.}
Fix a station–season cell and suppress indices.
Let $u$ be the threshold, $\Lambda$ the exceedance rate, and $(\sigma>0,\eta>0)$ the GPD parameters.
Transform $x>u$ to an extreme–value scale via
\[
Z(x)\;=\;\Lambda^{-1}\Bigl(1+\tfrac{\eta(x-u)}{\sigma}\Bigr)^{1/\eta}.
\]
The joint CDF for a consecutive pair $(x_t,x_{t+1})$ on this (generalized–Pareto) scale is
\begin{equation}
F(x_t,x_{t+1}\mid \sigma,\eta,\alpha)
\;=\;
1-\Bigl[\,Z(x_t)^{-1/\alpha}+Z(x_{t+1})^{-1/\alpha}\,\Bigr]_{+}^{\alpha},
\qquad \alpha\in(0,1],
\label{eq:logistic-cdf-direct}
\end{equation}
where $(a)_{+}=\max(a,0)$. Here, $\alpha=1$ corresponds to independence, and $\alpha\to 0^{+}$ yields complete dependence.

\paragraph{Region–wise contributions for consecutive pairs.}
For each $(j,s)$ and consecutive pair $(x_t,x_{t+1})$, define the four regions by thresholding at $u_{j,s}$:
\[
R_{11}:\{x_t>u,\,x_{t+1}>u\},\quad
R_{10}:\{x_t>u,\,x_{t+1}\le u\},\quad
R_{01}:\{x_t\le u,\,x_{t+1}>u\},\quad
R_{00}:\{x_t\le u,\,x_{t+1}\le u\},
\]
with $u\equiv u_{j,s}$ and $x_t=X_{(j,s),t}$ for brevity. Let $Z(\cdot)$ be as above and write
\[
\frac{\partial Z(x)}{\partial x}
\;=\;\frac{Z(x)}{\sigma_{j,s}\bigl(1+\eta_{j,s}(x-u)/\sigma_{j,s}\bigr)}\qquad(x>u).
\]
Using $F(\cdot,\cdot\mid \sigma_{j,s},\eta_{j,s},\alpha_j)$ from \eqref{eq:logistic-cdf-direct}, the numerator
$f(x_t,x_{t+1}\mid \sigma_{j,s},\eta_{j,s})$ in \eqref{eq:pairwise-like} is
\begin{align*}
\textbf{(i) }R_{11}:~&
f(x_t,x_{t+1}\mid \sigma_{j,s},\eta_{j,s})
= \frac{\partial^2 F}{\partial x_t\,\partial x_{t+1}}\Big|_{(x_t,x_{t+1})}
= \frac{\partial^2 F}{\partial z_1\,\partial z_2}\Big|_{(Z(x_t),Z(x_{t+1}))}\,\frac{\partial Z(x_t)}{\partial x}\,\frac{\partial Z(x_{(t+1)})}{\partial x},
\\[2pt]
\textbf{(ii) }R_{10}:~&
f(x_t,x_{t+1}\mid \sigma_{j,s},\eta_{j,s})
= \frac{\partial F}{\partial x_t}\Big|_{(x_t,\,u^{+})}
= \frac{\partial F}{\partial z_1}\Big|_{(Z(x_t),\,Z(u^{+}))}\,\frac{\partial Z(x_t)}{\partial x},
\\[2pt]
\textbf{(iii) }R_{01}:~&
f(x_t,x_{t+1}\mid \sigma_{j,s},\eta_{j,s})
= \frac{\partial F}{\partial x_{t+1}}\Big|_{(u^{+},\,x_{t+1})}
= \frac{\partial F}{\partial z_2}\Big|_{(Z(u^{+}),\,Z(x_{t+1}))}\,\frac{\partial Z(x_{(t+1})}{\partial x},
\\[2pt]
\textbf{(iv) }R_{00}:~&
f(x_t,x_{t+1}\mid \sigma_{j,s},\eta_{j,s})
= F(u^{+},u^{+}\mid \sigma_{j,s},\eta_{j,s},\alpha_j)
= 1-\Bigl[\,Z(u^{+})^{-1/\alpha_j}+Z(u^{+})^{-1/\alpha_j}\,\Bigr]_{+}^{\alpha_j},
\end{align*}
where $u^{+}$ denotes evaluation at the threshold from the exceedance side. The denominator $f(x_t\mid \sigma_{j,s},\eta_{j,s})$ in \eqref{eq:pairwise-like} equals the GPD density $h(x_t-u\mid \sigma_{j,s},\eta_{j,s})$ if $x_t>u$,
and the non–exceedance mass otherwise.

\paragraph{Overall likelihood.}
The full composite likelihood is $\mathcal{L}(\theta)=\prod_{j=1}^{J}\prod_{s=1}^{S}\mathcal{L}_{j,s}(\theta)$
with $\mathcal{L}_{j,s}(\theta)$ as in \eqref{eq:pairwise-like} and region-wise $c_{j,s,t}$ given above.
All other implementation details (threshold choice, numerical derivatives at $u^{+}$, and
season-specific handling of $\Lambda_{j,s}$) are deferred to the Appendix.

\paragraph{Parameterization.}
We dispense with hierarchical random effects and use a non–hierarchical positive parameterization. For $j\in\{1,\dots,4\}$ and $s\in\{1,\dots,4\}$,
\[
\sigma_{j,s} \;=\; \operatorname{softplus}\big(\gamma^{(\sigma)}_{j}\big)
                 \;+\; \operatorname{softplus}\big(\varepsilon^{(\sigma)}_{s}\big),
\qquad
\eta_{j,s} \;=\; \operatorname{softplus}\big(\gamma^{(\eta)}_{j}\big)
             \;+\; \operatorname{softplus}\big(\varepsilon^{(\eta)}_{s}\big),
\]
which enforces $\sigma_{j,s}>0$ and $\eta_{j,s}>0$. This additive form models station and season effects separately. For extremal dependence, each station has its own parameter $\alpha_j\in(0,1)$. Since the normalizing flows operate most naturally on unconstrained real supports, we introduce $a^*_j\in\mathbb{R}$ with $\alpha_j=\mathrm{sigmoid}(a^*_j)$ and perform inference on $a^*_j$.

\paragraph{Prior and experiment settings.}
We assign independent priors to the 20 parameters as follows: the raw scale effects $\gamma^{(\sigma)}_{1:4}$ and $\varepsilon^{(\sigma)}_{1:4}$ receive Student-$t_{\nu=10}$, and the raw shape effects $\gamma^{(\eta)}_{1:4}$ and $\varepsilon^{(\eta)}_{1:4}$ receive Student-$t_{\nu=3}$. This choice allows the posterior to accommodate dimension-specific tail thickness (lighter tails for the $\sigma$-effects, heavier tails for the $\eta$-effects). For extremal dependence, $\alpha_j\in(0,1)$ is given a $\mathrm{Beta}(1,1)$ prior; as noted above, we work with $a^*_j\in\mathbb{R}$ via $\alpha_j=\mathrm{sigmoid}(a^*_j)$, and include the change-of-variables term $\sum_{j=1}^4\log\{\alpha_j(1-\alpha_j)\}$ in the log-likelihood. 

Baseline MCMC uses an adaptive random walk Metropolis sampler \citep{haario2001adaptive}. The proposal covariance is updated online to follow the local posterior geometry, which improves mixing for correlated and moderately high-dimensional targets while requiring neither gradients nor extensive tuning. This sampler is widely used in applied Bayesian analysis—especially for hierarchical models, state-space time series, and latent-variable settings—and is available in mainstream software. The code is implemented in $\mathrm{R}$ and was run on a dual-socket Intel Xeon Silver 4510 system with 24 physical cores and 48 threads (peak 4.10,GHz) under x$86\_64$ Linux.

Figure~\ref{figapp:full_gpd} compares the estimated posteriors across all methods. Among the flow-based approaches, StiCTAF most consistently recovers the correct tail thickness in the majority of dimensions and captures the overall density shape with high stability.

Table~\ref{tblapp:post_summaries_full} reports the posterior modes and 99\% equal-tailed credible intervals for each method, together with computing time. StiCTAF requires more training time than other flows because it optimizes parameters at the component level, yet it remains far faster than MCMC while achieving comparable accuracy. Notably, flows with heavy-tailed bases such as TAF, gTAF, and ATAF do not fully recover the 99\% intervals, whereas StiCTAF delivers reliable tail behavior and outperforms all baselines.

\begin{table}[t]
\centering
\caption{Posterior modes with 99\% equal–tailed credible intervals for all twenty parameters of the 2024 KMA wind dataset. The compared methods include MCMC (reference), NF (Gaussian/Gaussian mixture), and TAF variants (TAF, gTAF, gTAF mixture, ATAF, StiCTAF). Computing time (hours) is reported per method.}
\label{tblapp:post_summaries_full}
\scriptsize
\resizebox{\textwidth}{!}{
\begin{tabular}{lcccc}
\toprule
Parameter & MCMC & StickTAF & NF(Gaussian) & NF(Gaussian Mixture) \\
\midrule
$\gamma^{(\sigma)}_{1}$ & 0.40 (-2.09, 1.66) & 0.38 (-1.96, 1.51) & 0.38 (-1.62, 1.58) & 0.37 (-1.82, 1.56) \\
$\gamma^{(\sigma)}_{2}$ & 1.46 (-0.43, 2.90) & 1.45 (-11.43, 2.72) & 1.47 (-0.43, 2.73) & 1.32 (-0.74, 2.78) \\
$\gamma^{(\sigma)}_{3}$ & 1.52 (-0.24, 2.91) & 1.49 (-0.75, 3.01) & 1.46 (-0.24, 2.72) & 1.45 (-0.28, 2.80) \\
$\gamma^{(\sigma)}_{4}$ & 0.40 (-1.87, 1.73) & 0.36 (-2.21, 1.57) & 0.39 (-1.80, 1.61) & 0.40 (-2.17, 1.63) \\
$\varepsilon^{(\sigma)}_{1}$ & 1.09 (-1.83, 2.55) & 0.99 (-0.99, 2.56) & 1.04 (-0.77, 2.45) & 1.15 (-0.82, 2.46) \\
$\varepsilon^{(\sigma)}_{2}$ & 0.29 (-2.27, 1.76) & 0.38 (-1.98, 1.78) & 0.38 (-1.70, 1.71) & 0.32 (-1.87, 1.80) \\
$\varepsilon^{(\sigma)}_{3}$ & 1.10 (-0.71, 2.85) & 1.32 (-0.65, 2.99) & 1.26 (-0.52, 2.74) & 1.14 (-0.46, 2.64) \\
$\varepsilon^{(\sigma)}_{4}$ & 0.56 (-1.60, 1.85) & 0.53 (-1.78, 1.88) & 0.55 (-1.48, 1.80) & 0.52 (-1.46, 1.92) \\
$\gamma^{(\eta)}_{1}$ & -2.35 (-16.55, -0.72) & -3.37 (-11.15, -0.70) & -2.45 (-6.50, -0.78) & -2.46 (-6.93, -0.71) \\
$\gamma^{(\eta)}_{2}$ & -1.17 (-9.62, 0.19) & -1.43 (-9.50, 0.30) & -1.23 (-4.89, 0.19) & -1.30 (-4.72, 0.22) \\
$\gamma^{(\eta)}_{3}$ & -1.96 (-10.06, -0.48) & -2.14 (-9.30, -0.50) & -2.04 (-5.42, -0.41) & -2.25 (-6.03, -0.40) \\
$\gamma^{(\eta)}_{4}$ & -1.67 (-11.47, -0.35) & -1.80 (-6.43, -0.18) & -1.78 (-5.02, -0.34) & -1.82 (-5.52, -0.44) \\
$\varepsilon^{(\eta)}_{1}$ & -1.69 (-11.21, -0.32) & -1.81 (-11.89, -0.27) & -1.72 (-5.89, -0.31) & -1.53 (-6.66, -0.40) \\
$\varepsilon^{(\eta)}_{2}$ & -2.02 (-11.95, -0.38) & -2.06 (-12.09, -0.51) & -1.95 (-6.29, -0.51) & -1.89 (-7.61, -0.51) \\
$\varepsilon^{(\eta)}_{3}$ & -1.52 (-9.18, -0.13) & -1.62 (-10.21, -0.26) & -1.50 (-4.71, -0.22) & -1.45 (-5.14, -0.14) \\
$\varepsilon^{(\eta)}_{4}$ & -2.09 (-11.64, -0.50) & -2.22 (-13.88, -0.71) & -2.14 (-5.98, -0.52) & -2.03 (-6.82, -0.45) \\
$\alpha^*_{1}$ & 0.50 (0.05, 1.02) & 0.58 (0.13, 1.06) & 0.52 (0.06, 1.02) & 0.55 (0.09, 1.07) \\
$\alpha^*_{2}$ & 0.81 (0.30, 1.33) & 0.78 (0.30, 1.32) & 0.77 (0.31, 1.30) & 0.79 (0.29, 1.32) \\
$\alpha^*_{3}$ & 0.72 (0.21, 1.23) & 0.72 (0.24, 1.27) & 0.72 (0.26, 1.23) & 0.67 (0.22, 1.24) \\
$\alpha^*_{4}$ & 0.55 (0.08, 1.08) & 0.60 (0.11, 1.13) & 0.53 (0.08, 1.04) & 0.56 (0.10, 1.07) \\
\midrule
comp.time (hr) & 11.90 & 0.08 & 0.03 & 0.03 \\
\bottomrule
\end{tabular}
}
\resizebox{\textwidth}{!}{
\begin{tabular}{lcccc}
\toprule
Parameter & TAF & gTAF  & gTAF Mixture & ATAF \\
\midrule
$\gamma^{(\sigma)}_{1}$ & 0.38 (-1.78, 3.07) & 0.47 (-1.72, 1.63) & 0.43 (-1.86, 1.63) & 0.42 (-1.70, 1.73) \\
$\gamma^{(\sigma)}_{2}$ & 1.40 (-2.72, 2.98) & 1.41 (-0.36, 2.77) & 1.39 (-0.31, 2.72) & 1.42 (-0.08, 2.85) \\
$\gamma^{(\sigma)}_{3}$ & 1.49 (-0.89, 3.42) & 1.47 (-0.07, 2.86) & 1.49 (-0.25, 2.80) & 1.49 (-0.13, 2.93) \\
$\gamma^{(\sigma)}_{4}$ & 0.34 (-1.97, 3.23) & 0.45 (-1.91, 1.69) & 0.44 (-1.76, 1.66) & 0.35 (-1.63, 1.70) \\
$\varepsilon^{(\sigma)}_{1}$ & 0.98 (-3.44, 3.70) & 1.01 (-1.06, 2.54) & 1.08 (-0.87, 2.56) & 0.88 (-1.10, 2.47) \\
$\varepsilon^{(\sigma)}_{2}$ & 0.39 (-3.17, 3.44) & 0.34 (-2.02, 1.64) & 0.45 (-1.65, 1.70) & 0.44 (-2.36, 1.82) \\
$\varepsilon^{(\sigma)}_{3}$ & 1.14 (-2.96, 3.82) & 1.28 (-0.66, 2.88) & 1.17 (-0.70, 2.83) & 1.18 (-1.04, 2.82) \\
$\varepsilon^{(\sigma)}_{4}$ & 0.44 (-3.24, 4.10) & 0.66 (-1.48, 1.88) & 0.45 (-1.70, 1.78) & 0.39 (-2.25, 2.11) \\
$\gamma^{(\eta)}_{1}$ & -2.56 (-5.35, 0.61) & -2.41 (-9.71, -0.69) & -2.60 (-10.51, -0.76) & -2.47 (-7.15, -0.66) \\
$\gamma^{(\eta)}_{2}$ & -1.30 (-4.93, 3.82) & -1.15 (-6.67, 0.31) & -1.19 (-6.43, 0.26) & -1.2 (-4.33, 0.28) \\
$\gamma^{(\eta)}_{3}$ & -2.33 (-6.97, 0.84) & -1.87 (-7.66, -0.51) & -2.07 (-9.18, -0.46) & -2.02 (-6.07, -0.18) \\
$\gamma^{(\eta)}_{4}$ & -1.74 (-4.96, 2.63) & -1.68 (-7.56, -0.22) & -1.63 (-7.50, -0.31) & -1.64 (-5.45, -0.15) \\
$\varepsilon^{(\eta)}_{1}$ & -1.70 (-4.71, 0.05) & -1.66 (-6.73, -0.27) & -1.71 (-7.24, -0.34) & -1.79 (-6.15, -0.11) \\
$\varepsilon^{(\eta)}_{2}$ & -2.60 (-7.12, 1.89) & -2.17 (-8.20, -0.47) & -1.90 (-8.04, -0.45) & -2.04 (-7.91, -0.11) \\
$\varepsilon^{(\eta)}_{3}$ & -1.64 (-6.04, 1.02) & -1.56 (-6.54, -0.12) & -1.63 (-6.74, -0.22) & -1.45 (-5.85, 0.01) \\
$\varepsilon^{(\eta)}_{4}$ & -2.32 (-5.31, 0.59) & -1.96 (-8.82, -0.47) & -1.91 (-9.21, -0.45) & -2.16 (-6.62, -0.35) \\
$\alpha^*_{1}$ & 0.50 (-0.14, 1.23) & 0.48 (0.03, 0.97) & 0.50 (0.04, 0.99) & 0.53 (0.08, 1.00) \\
$\alpha^*_{2}$ & 0.79 (-0.60, 2.78) & 0.79 (0.33, 1.32) & 0.78 (0.31, 1.30) & 0.77 (0.26, 1.33) \\
$\alpha^*_{3}$ & 0.70 (-0.10, 2.11) & 0.70 (0.22, 1.22) & 0.74 (0.20, 1.24) & 0.72 (0.16, 1.26) \\
$\alpha^*_{4}$ & 0.55 (-0.25, 1.56) & 0.55 (0.08, 1.08) & 0.54 (0.08, 1.07) & 0.50 (-0.03, 1.04) \\
\midrule
comp.time (hr) & 0.03 & 0.03 & 0.08 & 0.03 \\
\bottomrule
\end{tabular}
}
\end{table}

\begin{figure}[t]
    \centering
    \includegraphics[width=0.7\linewidth]{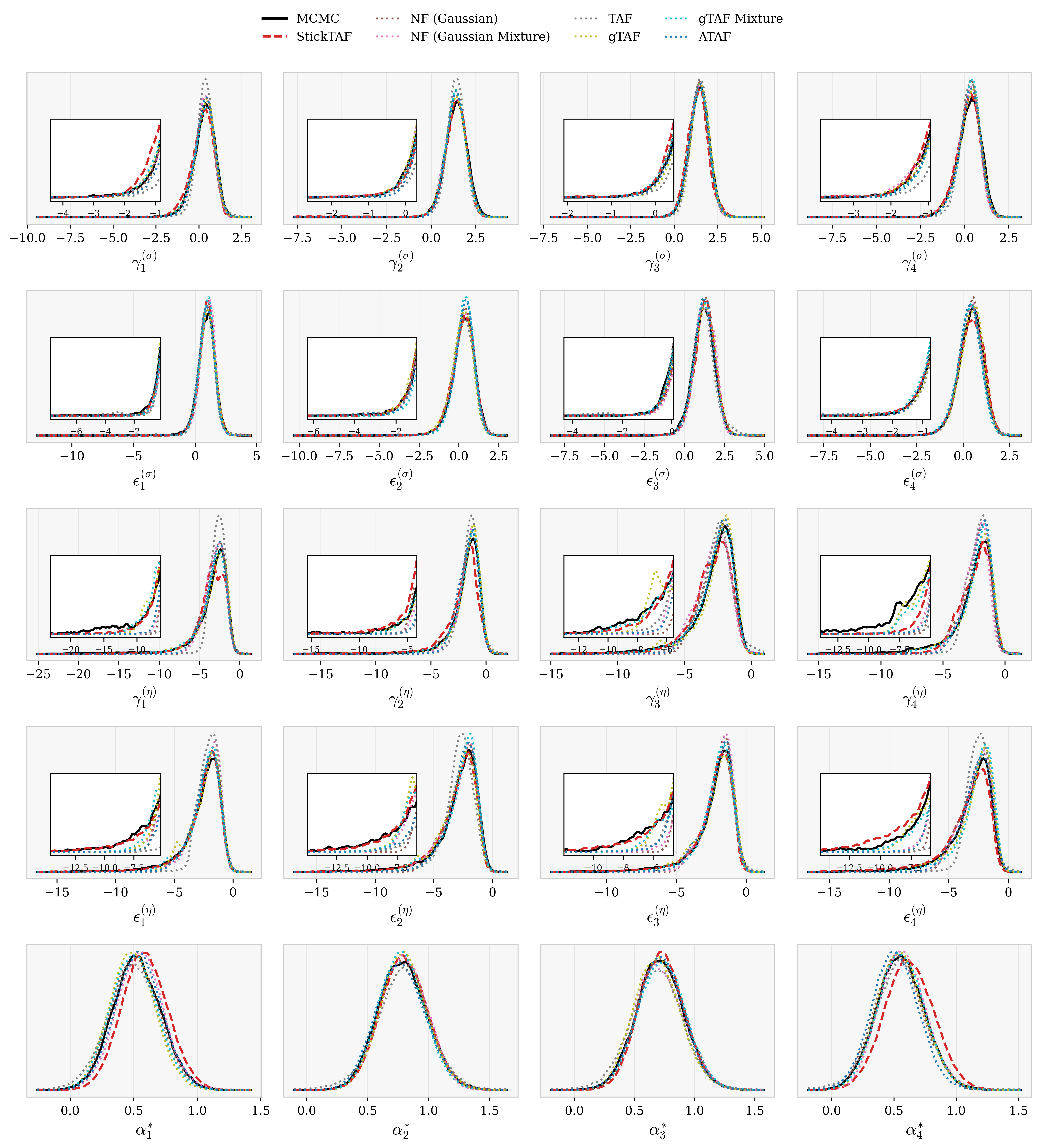}
    \caption{\textbf{Estimated posteriors for all twenty parameters from the real-data analysis.} Insets highlight the left 5\% tail. Black curves show the MCMC reference; red curves show StiCTAF.}
    \label{figapp:full_gpd}
\end{figure}

\section{Details of gTAF-mixture}
\label{sec:app_gTAF}
This appendix provides the essential details of gTAF-mixture, our extension of gTAF \citep{laszkiewicz2022marginal}, which augments the method with a stick-breaking mixture of Student-\(t\) bases instead of a single Student-\(t\) base.  

\subsection{Reparameterized truncated stick-breaking weights}
\label{appsub:sbp}

For a mixture of \(J\) components, we use a truncated stick-breaking prior with concentration parameter \(\alpha > 0\):
\[
v_k \sim \mathrm{Beta}(1,\alpha),\quad
w_k = v_k \prod_{j=1}^{k-1} (1-v_j),\ (k=1,\dots,J-1),\qquad
w_J = \prod_{j=1}^{J-1}(1-v_j).
\]
To enable pathwise gradients, we employ the inverse-CDF reparameterization for \(\mathrm{Beta}(1,\alpha)\):
\[
v_k = 1 - \epsilon_k^{1/\alpha},\qquad \epsilon_k \sim \mathcal{U}(0,1),
\]
and then deterministically map \(\{v_k\}\) into \(\{w_k\}\) as above.  
We deliberately choose this variant because it has a simple closed-form reparameterization, allowing pathwise gradients without introducing additional approximations or custom samplers that a more general Beta reparameterization would require.  

\subsection{Tail-index profiling}
\label{appsub:tail-profiling}

\paragraph{Pilot base and center pushforward.}
We train a mixture of diagonal Gaussians with stick-breaking weights \(\{w_1,\dots,w_J\}\), whose density is given by
\[
q_0(z) = \sum_{j=1}^{J} w_j\, \mathcal{N}\!\bigl(z \,\big|\, \mu_j, \operatorname{diag}(\sigma_{0}^2)\bigr), 
\qquad \sigma_{0}^2 \in (0,\infty).
\]
We then push each center through a normalizing flow model \(f\) at its initial state to compute
\[
x_j = f(\mu_j) \in \mathbb{R}^D.
\]
We retain only the components with sufficiently large weights:
\[
\mathcal{J}_{\mathrm{valid}} = \bigl\{\, j \in \{1,\dots,J\} : w_j \ge w_{\min}\,\bigr\}, 
\qquad w_{\min} = 0.1.
\]

\paragraph{Radius clustering and representatives.}
Within \(\{x_j\}_{j\in\mathcal J_{\mathrm{valid}}}\) we perform fixed-radius clustering with radius \(r = \rho \sqrt{D}\), where \(\rho > 0\). From each cluster, we retain the member with the largest \(w_j\). Let \(\mathcal{M} \subseteq \mathcal{J}_{\mathrm{valid}}\) denote the selected indices and \(\{x_m\}_{m\in\mathcal M}\) their centers.  

\paragraph{Local linearization and anchors.}
For each \(m \in \mathcal{M}\), we approximate the pushforward covariance at \(z = \mu_m\) using the Jacobian
\[
J_m = \left.\frac{\partial f(z)}{\partial z}\right|_{z=\mu_m} \in \mathbb{R}^{D\times D}, \qquad
\Sigma_{x,m} \approx J_m \, \mathrm{diag}\!\bigl(\sigma_{0}^2\bigr) \, J_m^\top.
\]
For per-dimension and per-component standard deviations \(\sigma_{x,m,i} = \sqrt{(\Sigma_{x,m})_{ii}}\), we define the dimension-wise standard deviation as
\[
\sigma_i = \max_{m \in \mathcal{M}} \sigma_{x,m,i},\qquad i=1,\dots,D.
\]
We also record per-dimension anchors from the selected centers:
\[
a^{R}_i = \max_{m \in \mathcal{M}} (x_m)_i, \qquad
a^{L}_i = \min_{m \in \mathcal{M}} (x_m)_i,
\]
and collect the anchor vectors \(\mathbf{a}^{R} = (a^{R}_1,\dots,a^{R}_D)\) and \(\mathbf{a}^{L} = (a^{L}_1,\dots,a^{L}_D)\).  

\paragraph{Log-log slope proxies.}
Following Section~\ref{subsec:tailEstimation}, for each coordinate \(i \in \{1,\dots,D\}\) we draw i.i.d. samples \(\{t^{(i)}_1,\dots,t^{(i)}_n\}\) from a low–degrees-of-freedom Student-$t$ distribution and form the order statistics \(t^{(i)}_{(1)}, \dots, t^{(i)}_{(n)}\). Using the right-anchor vector \(\mathbf{a}^{R}\) together with the scale \(\sigma_i\), we obtain the right-tail estimate \(\widehat{\nu}^{R}_i\). Analogously, for the left tail we use the left-anchor vector \(\mathbf{a}^{L}\), scale \(\sigma_i\), and order statistics to obtain \(\widehat{\nu}^{L}_i\).  

We then combine both sides while ensuring that the estimated degrees of freedom exceed 2:
\[
\widehat{\nu}_i
= \max\Bigl\{\nu_{\min},~ \min\bigl(\widehat{\nu}_i^{L}, \widehat{\nu}_i^{R}\bigr)\Bigr\},
\qquad \nu_{\min} = 2,
\]
and cap overly light estimates:
\[
\widehat{\nu}_i \leftarrow \min\{\widehat{\nu}_i,~ \nu_{\text{light}}\}, 
\qquad \nu_{\text{light}} = 30.
\]

Finally, using \(\{\widehat{\nu}_i\}_{i=1}^D\), we partition coordinates into
\[
L = \bigl\{\,i : \widehat{\nu}_i \ge \nu_{\text{light}}\,\bigr\}, \qquad
H = \bigl\{\,i : \widehat{\nu}_i < \nu_{\text{light}}\,\bigr\},
\]
and reorder them so that all indices in \(L\) precede those in \(H\).  
This partition and permutation are used in the main construction: the light-tailed marginals share a common degrees-of-freedom parameter, while the heavy-tailed marginals retain their per-dimension initial values \(\widehat{\nu}_i\) for subsequent learning.  

\subsection{Construction of base distribution and normalizing flows}
\label{appsub:base}

\paragraph{Base construction.}
For the base \(q_0\), we use a stick-breaking mixture of product Student-$t$ distributions. For each \(k\)-th component, the coordinates factorize, with each marginal given by a Student-$t$ with per-dimension scale \(\sigma_i > 0\) (shared across components) and degrees of freedom \(\nu_i\) (initialized from Section~\ref{appsub:tail-profiling}). The base density is
\[
q_{0,k}(z)
= \prod_{i=1}^{D}
t_{\nu_i}\!\left(z_i \,\middle|\, \mu_{ik},\, \sigma_i^2\right), \qquad
q_0(z) = \sum_{k=1}^{K} \pi_k \, q_{0,k}(z),
\quad \sum_{k=1}^{K} \pi_k = 1,\ \ \pi_k > 0.
\]
The stick-breaking weights follow
\[
v_k \sim \mathrm{Beta}(1,\alpha),\qquad
\pi_k = v_k \prod_{j=1}^{k-1} (1-v_j),\ (k=1,\dots,K-1), \qquad
\pi_K = \prod_{j=1}^{K-1}(1-v_j).
\]

When initializing location parameters \(\mu_k \in \mathbb{R}^D\), we select \(K\) candidate points from an open ball in \(\mathbb{R}^D\) centered at the origin (including the origin), denoted \(\{x^{(m)}\}_{m=1}^{K}\). Each candidate is mapped to the base space via the initial inverse flow:
\[
z^{(m)} = f^{-1}\!\bigl(x^{(m)}\bigr),
\]
and ranked according to the target likelihood.  
The mixture centers are then set by assigning the top-ranked \(\{z^{(m)}\}\) to \(\{\mu_k\}_{k=1}^{K}\) in descending order.  

For reference, the component log-density is
\[
\log q_{0,k}(z)
= \sum_{i=1}^{D}\!\left[
\log \Gamma\!\left(\tfrac{\nu_i+1}{2}\right)
- \log \Gamma\!\left(\tfrac{\nu_i}{2}\right)
- \tfrac{1}{2}\log(\nu_i\pi)
- \log \sigma_i
- \tfrac{\nu_i+1}{2}\log\!\left(1 + \tfrac{(z_i-\mu_{ik})^2}{\nu_i\sigma_i^2}\right)
\right].
\]

\paragraph{Normalizing flow structure.}
We follow the flow structure of Section~\ref{sec:app_exp} for each block, but adapt the group permutation in the LU-linear permutation layer to improve heavy-tail learning. Using the previously defined groups \(L\) (light) and \(H\) (heavy), with \(|L| = d_\ell\) and \(|H| = D-d_\ell\), each block applies a block lower-triangular linear map
\[
W =
\begin{bmatrix}
  A & 0 \\[2pt]
  B & C
\end{bmatrix},
\qquad
A \in \mathbb{R}^{d_\ell\times d_\ell},\;
B \in \mathbb{R}^{(D-d_\ell)\times d_\ell},\;
C \in \mathbb{R}^{(D-d_\ell)\times (D-d_\ell)}.
\]

\end{document}